\PassOptionsToPackage{table,xcdraw,usenames,dvipsnames}{xcolor}
\documentclass[9.5pt]{fairmeta}

\usepackage{url}
\usepackage{amsmath,amssymb,amsthm,mathtools,mathrsfs,bbm}
\usepackage{booktabs,multirow,tabularx,longtable,makecell,colortbl,arydshln}
\usepackage{graphicx,float}
\usepackage{wrapfig}
\usepackage{subfigure}
\usepackage{enumitem}
\usepackage[ruled,vlined]{algorithm2e}
\usepackage{pifont}
\usepackage{listings}
\usepackage{xspace}

\definecolor{dynamicrow}{HTML}{DCEFE3}
\definecolor{gaincolor}{HTML}{22863A}
\definecolor{losscolor}{HTML}{CB2431}
\definecolor{neutralcolor}{HTML}{6A737D}
\definecolor{promptgreen}{HTML}{2E7D57}
\definecolor{promptgreenbg}{HTML}{F1F8F4}
\definecolor{promptolive}{HTML}{577A35}
\definecolor{promptolivebg}{HTML}{F5F8EE}
\definecolor{promptteal}{HTML}{28736F}
\definecolor{prompttealbg}{HTML}{EFF8F7}
\definecolor{promptmint}{HTML}{3C8A64}
\definecolor{promptmintbg}{HTML}{F0FAF5}
\definecolor{promptforest}{HTML}{1F6B47}
\definecolor{promptforestbg}{HTML}{EEF7F1}
\definecolor{skillgreen}{HTML}{2D6A4F}
\definecolor{skillgreenbg}{HTML}{F0F8F3}
\newcommand{\gain}[1]{{\scriptsize\textcolor{gaincolor}{#1}}}

\newcommand{\nodiff}[1]{{\scriptsize\textcolor{neutralcolor}{#1}}}
\newcommand{\deltagain}[1]{\gain{#1}}
\newcommand{\deltaloss}[1]{\nodiff{#1}}
\newcommand{\deltasame}[1]{\nodiff{#1}}
\newcommand{\improvedcell}[1]{\cellcolor{dynamicrow}#1}

\newcommand{\ourmethod}{{\fontfamily{lmtt}\selectfont \textbf{Agentic ESOpt}}\xspace}
\newcommand{\llmname}[1]{{\fontfamily{pcr}\selectfont {#1}}\xspace}

\definecolor{bittersweet}{rgb}{1.0, 0.44, 0.37}
\definecolor{mygreen}{rgb}{0.29, 0.7, 0.48}

\definecolor{demphcolor}{RGB}{144,144,144}

\definecolor{mygray}{gray}{0.4}
\definecolor{autopurple}{HTML}{7030A0}
\newcounter{takeaway}
\newcommand{\takeaway}[1]{%
    \refstepcounter{takeaway}%
    \begin{tcolorbox}[enhanced, colframe=autopurple, colback=autopurple!4, boxrule=0.65pt, arc=1mm,
    title={Takeaway \thetakeaway}, fonttitle=\bfseries\scriptsize,
    attach boxed title to top left={xshift=5pt,yshift=-2pt},
    boxed title style={colframe=autopurple, colback=autopurple, boxrule=0.4pt, arc=0.8mm,
    left=3pt, right=3pt, top=1pt, bottom=1pt},
    top=7pt, bottom=3pt, left=4pt, right=4pt, boxsep=1pt]
        #1
    \end{tcolorbox}
}

\definecolor{dyna_yellow}{HTML}{BF9000}
\definecolor{adaptive_blue}{HTML}{0070C0}
\definecolor{darksalmon}{rgb}{0.91, 0.59, 0.48}
\definecolor{emerald}{rgb}{0.31, 0.78, 0.47}
\definecolor{green(pigment)}{rgb}{0.0, 0.65, 0.31}
\definecolor{amaranth}{rgb}{0.9, 0.17, 0.31}
\definecolor{iris}{rgb}{0.35, 0.31, 0.81}
\definecolor{uu}{rgb}{0.95, 0.51, 0.51}
\definecolor{spirodiscoball}{rgb}{0.06, 0.75, 0.99}
\definecolor{mygrey}{gray}{0.4}
\definecolor{QuestionColor}{rgb}{0.7, 0.1, 0.1}
\definecolor{AnswerColor}{rgb}{0.1, 0.5, 0.1}
\definecolor{ReasoningColor}{rgb}{0.1, 0.1, 0.7}

\SetKwInOut{Input}{Input}\SetKwInOut{Output}{Output}
\newtheorem{lemma}{Lemma}

\hypersetup{
    colorlinks=true,
    linkcolor=metaaccent,
    citecolor=metaaccent,
    filecolor=magenta,
    urlcolor=metaaccent,
}

\titlehead{\textcolor{nusblue}{NUS}\hspace{0.4em}\textcolor{nusorange}{|}\hspace{0.4em}National University of Singapore}

\title{\ourmethod: Fine-Tuning Long-Horizon LLM Agents \\with Minimal GPU Requirements}

\author[1]{Zhi Zheng}
\author[2]{Rongsheng Chen}
\author[2]{Yunpeng Ba}
\author[2]{Zhenkun Wang}
\author[3]{Yee Whye Teh}
\author[1]{Wee Sun Lee}

\affiliation[1]{National University of Singapore}
\affiliation[2]{Southern University of Science and Technology}
\affiliation[3]{Oxford}

\metadata[GitHub \protect\githubemoji]{\url{https://github.com/zz1358m/Agentic-ESOpt}}
\correspondence{\email{zhi.zheng@u.nus.edu}}

\abstract{

Reinforcement Learning (RL) has been promising in single-turn LLM fine-tuning. However, long-horizon agentic reasoning introduces increasingly branching interactions and sparse rewards, exposing several limitations of RL: its heavyweight backpropagation makes it impractical to fine-tune larger LLMs, and longer-horizon trajectories make the credit assignment substantially harder. This paper argues that evolution strategies (ES) can be a better choice for fine-tuning long-horizon LLM agents. Compared with agentic RL, ES offers three key advantages: \textbf{1) Model Scalability:} ES enables full-parameter optimization requiring only minimal, inference-level GPU memory, making it possible to fine-tune large LLMs. \textbf{2) Flexibility:} its lightweight, black-box feedback interface makes ES fine-tuning easy to compose with prompt-space evolution (e.g., skill optimization \& test-time compute); and \textbf{3) Long-Horizon Scalability:} ES performs trajectory-level parameter attribution without decomposing rewards across horizons, yielding better scalability than Agentic RL as the horizon length grows.

Based on this insight, we propose \ourmethod, a full-parameter agentic fine-tuning framework tailored to flexible parameter--context co-evolution. At each step, Agentic ESOpt samples perturbations around the current LLM parameters, evaluates the resulting agents with rewards, and applies an online reward-weighted update. To improve the exploration--adaptation trade-off, Agentic ESOpt further introduces a cosine decay schedule of the perturbation scale $\sigma$. We evaluate Agentic ESOpt across both train-time fine-tuning and agentic test-time compute settings. On long-horizon Sudoku, Agentic ESOpt outperforms RL methods by 12.50\% with Qwen3.5-4B. On WebArena-Lite, full-parameter optimization of \textbf{\textit{Qwen3.5-27B}} improves the No Skill baseline by 6.69\%, and combining Agentic ESOpt with Trace2Skill further improves the Trace2Skill baseline by 2.42\%. In test-time automatic heuristic design, Agentic ESOpt performs online prompt--parameter co-evolution, improving its matched baseline in 28 of 36 settings.
}
\begin{document}

\maketitle

\begin{figure}[htbp]
\centering\vspace{-12pt}
\subfigure[Challenges of Agentic Reasoning]{
  \includegraphics[width=0.29\linewidth]{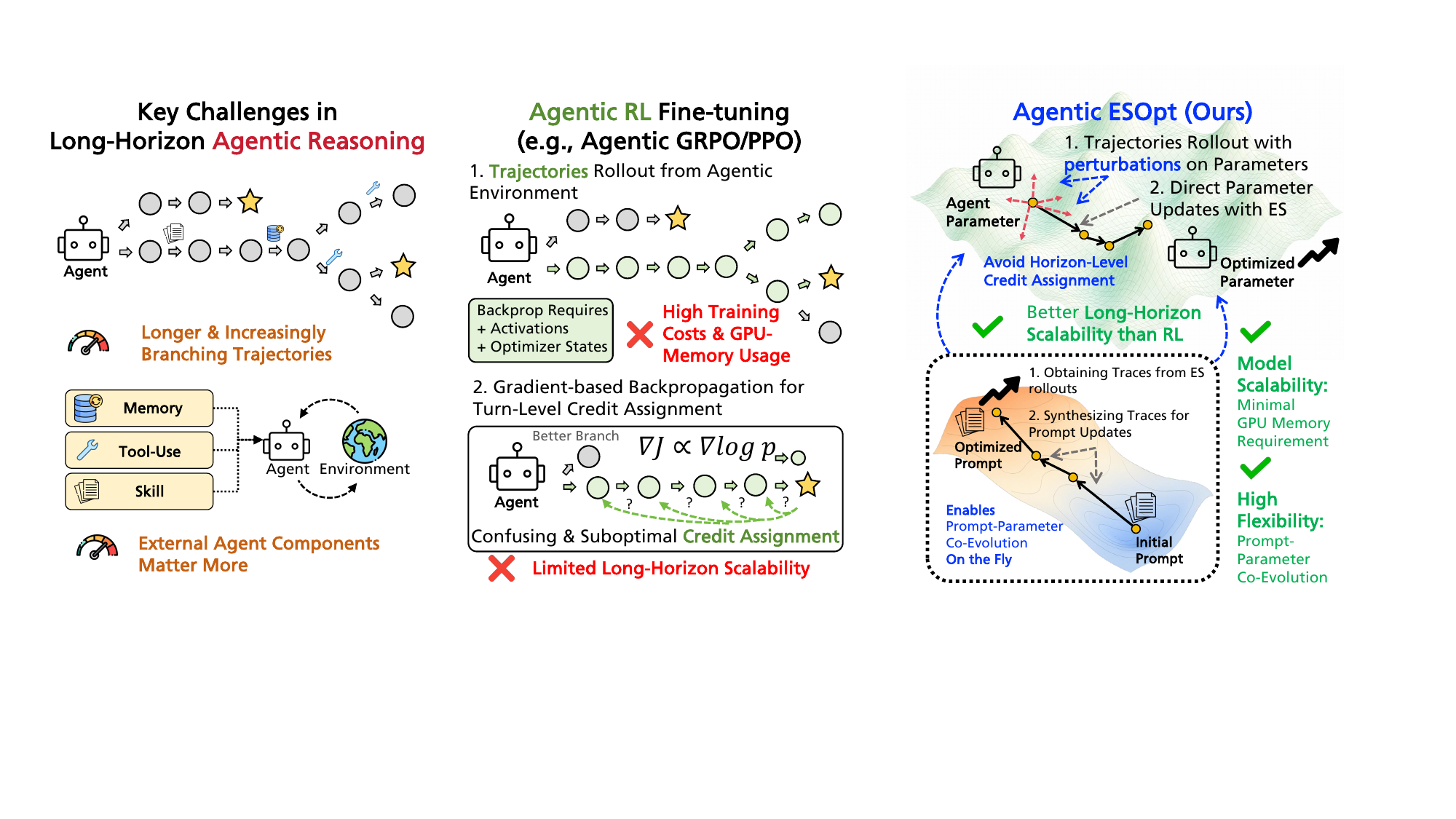}
}
\subfigure[Bottlenecks of Agentic RL]{
  \includegraphics[width=0.293\linewidth]{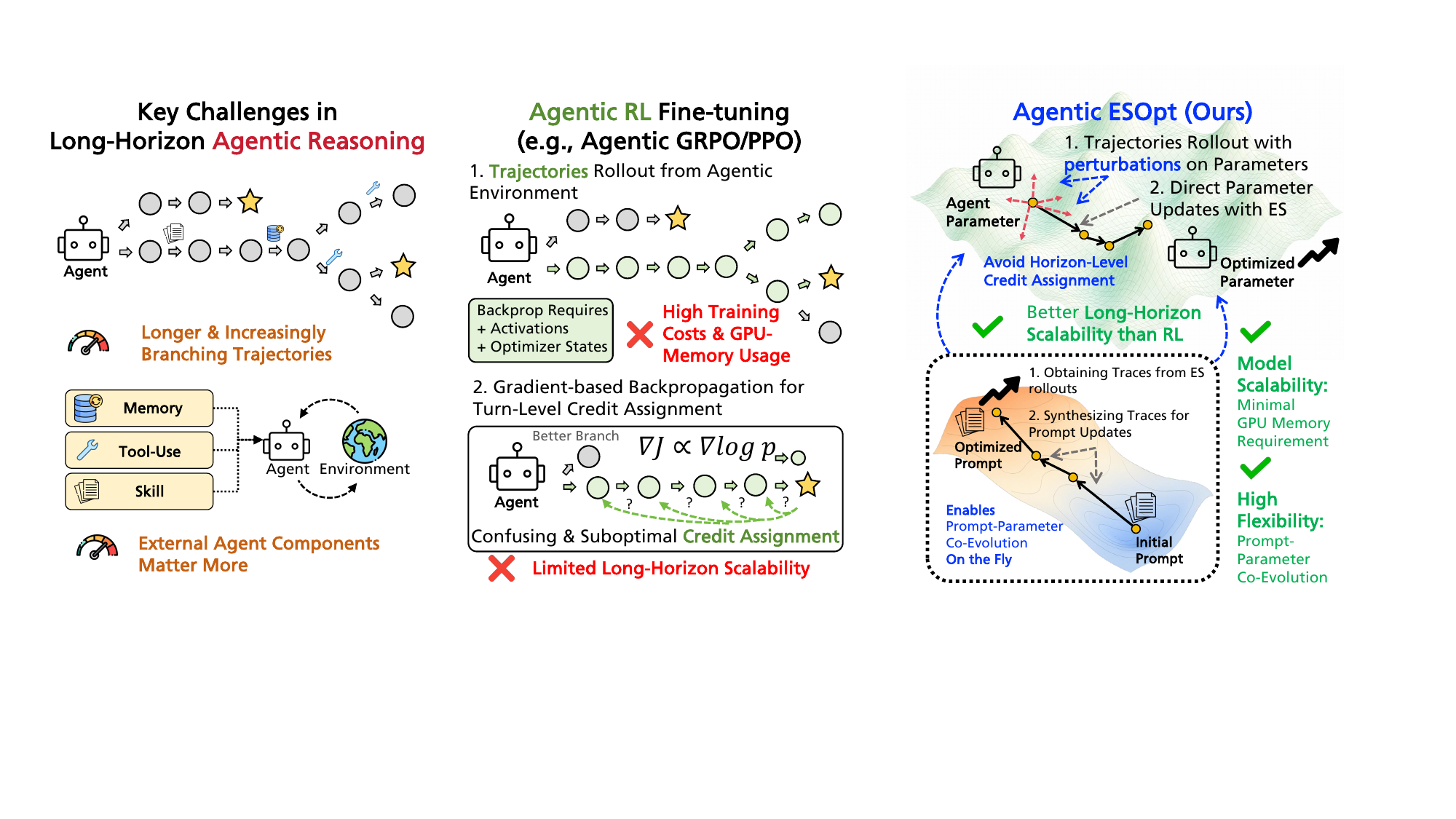}
}
\subfigure[Advantages of Agentic ESOpt \textbf{(Ours)}]{
  \includegraphics[width=0.323\linewidth]{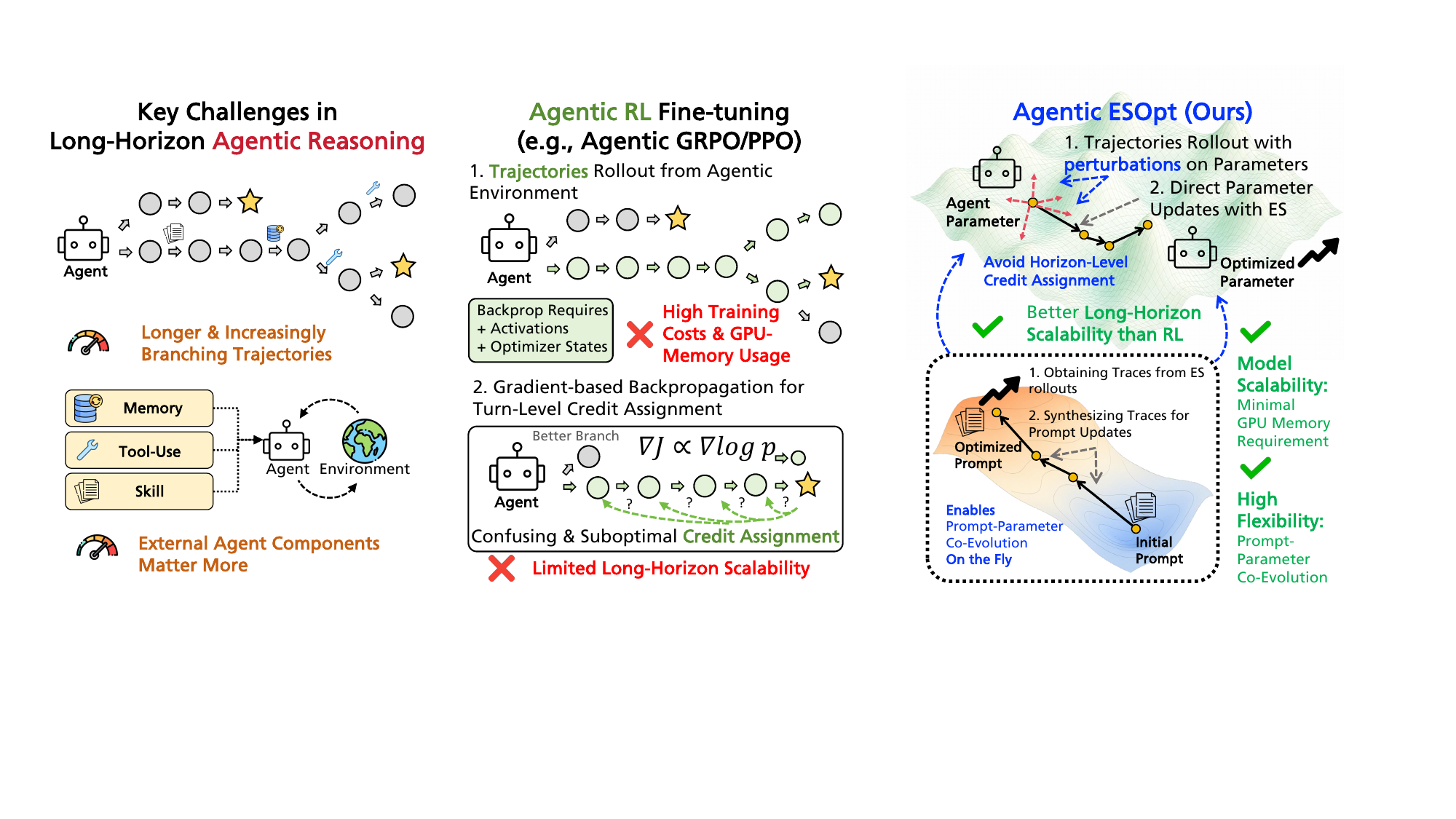}
}\vspace{-5pt}
\caption{(a) Long-horizon agentic reasoning introduces new challenges for Agentic RL (b), including high GPU memory requirements in training and difficult credit assignment across horizons. Agentic ESOpt (c) addresses these issues through inference-level GPU memory, flexible black-box feedback, and better long-horizon scalability.}\vspace{-12pt}
\label{fig:overview}
\end{figure}

\section{Introduction}
Advanced Large Language Models (LLMs) such as Qwen3, DeepSeek-R1, and Gemini 2.5 have demonstrated strong capabilities as general-purpose agents \citep{yang2025qwen3,guo2025deepseek,geminiteam2025gemini25}. With their capabilities in tool use, long-context processing, and multimodal interaction, these models can navigate websites \citep{zhou2024webarena}, edit repositories \citep{yang2024sweagent}, and coordinate multi-step software workflows \citep{wang2024openhands}. However, general-purpose agents can still perform poorly on uncommon tool APIs \citep{ma2024spreadsheetbench} and specialized scientific or algorithmic tasks \citep{liu2024evolution}. Therefore, efficiently fine-tuning advanced LLM agents to adapt task-specific expertise remains important \citep{du2026survey}.

Reinforcement learning (RL) has demonstrated remarkable effectiveness in single-turn LLM fine-tuning \citep{shao2024deepseekmath,liu2025understanding,zheng2025soft,tajwar2026maximum}. However, in long-horizon agentic reasoning, which introduces increasingly branching interactions and provides only sparse feedback, several limitations of Agentic RL are exposed. As illustrated in Figure \ref{fig:overview},  first, Agentic RL requires storing heavyweight activations, optimizer states, and performing backpropagation through trajectories, making full-parameter fine-tuning increasingly impractical for larger LLMs. Moreover, as trajectories become longer and more branching, assigning sparse trajectory-level rewards back to individual decisions becomes substantially harder \citep{kim2026longhorizon}.

This paper argues that \textbf{evolution strategies (ES) \citep{salimans2017evolution} can be a better choice for fine-tuning long-horizon LLM agents}. Instead of performing backpropagation, ES samples perturbations around the current LLM parameters, evaluates the perturbed agents with environment rewards, and applies a reward-weighted parameter update. Compared with Agentic RL, ES offers three key advantages:
\begin{itemize}[leftmargin=*]
    \item \textbf{1) Model Scalability:} ES enables full-parameter optimization with only inference-level GPU memory, which is the minimal amount, substantially reducing the memory barrier to fine-tuning larger LLM agents.
    \item \textbf{2) Flexibility:} Its lightweight, black-box feedback interface makes ES fine-tuning easy to compose with skill--space evolution \citep{ni2026trace2skill} and test-time compute \citep{liu2024evolution}.
    \item \textbf{3) Long-Horizon Scalability:} Unlike RL estimators that usually lead to poor long-horizon credit assignment, ES performs trajectory-level parameter attribution without decomposing rewards across turns, yielding better scalability than Agentic RL as the horizon length grows.
\end{itemize}

Recent work has explored ES for single-turn LLM reasoning \citep{qiu2026esscale,sun2026essam,sarkar2025evolution}, where ES achieves higher GPU memory efficiency but slightly lower performance than RL methods. However, we argue that the structural advantages of ES are \textbf{particularly pronounced in fine-tuning long-horizon agents}, where ES can be significantly preferable to RL, rather than merely a cheaper alternative. To implement this advantage, we propose \ourmethod, a full-parameter ES framework for both train-time agent fine-tuning and agentic test-time compute. At each generation, Agentic ESOpt samples full-parameter perturbations, evaluates the resulting agents with environment rewards, and applies an online reward-weighted update. Its lightweight black-box update enables on-the-fly parameter adaptation within prompt-space optimization loops, allowing parameter updates to complement both skill-space optimization and test-time search. To improve the exploration--adaptation trade-off, Agentic ESOpt further introduces a cosine decay mechanism for the perturbation scale $\sigma$, which retains a nonzero terminal $\sigma_T$ for mild smoothing regularization in train-time optimization, and decays $\sigma_T$ to 0 for progressively finer adaptation in test-time optimization.

We evaluate \ourmethod for agentic fine-tuning in both train-time and test-time compute with LLMs from 4B to 27B. In the \textbf{train-time fine-tuning}, we study long-horizon reasoning, ReAct-style tool use, and web agents. On Sudoku, RL- and ES-based methods with matched FLOPs are competitive at smaller minimum successful horizons (5- and 10-horizon), but their relative ordering changes as the horizon grows: at 15 turns, Agentic ESOpt reaches +12.5\% compared to the strongest GRPO baseline. Across ReAct-style Math and DocVQA, Agentic ESOpt achieves an average improvement of 13.7\% over the \textit{Qwen3.5-4B} base model and 8.3\% over Agentic GRPO. On WebArena-Lite, full-parameter optimization of \textit{Qwen3.5-27B} improves the No Skill baseline from 29.47\% to 36.16\%, while combining Agentic ESOpt with Trace2Skill raises it from 33.94\% to 36.36\%. In the \textbf{test-time} agentic heuristic design, Agentic ESOpt improves the matched baselines in 28 of 36 comparisons. Moreover, a preliminary population-sensitivity study further suggests that stronger LLM backbones can obtain useful ES updates with smaller populations. Our contributions are summarized as follows:
\begin{itemize}[leftmargin=*]
    \item We identify that, on long-horizon agentic reasoning, ES becomes preferable to Agentic RL. We attribute this shift to three key properties of ES: \textbf{model scalability} through inference-level GPU memory, \textbf{flexibility} through black-box trajectory feedback, and \textbf{long-horizon scalability} through trajectory-level parameter attribution.


    \item We introduce \ourmethod, a backpropagation-free ES framework require only minimal GPU memory. It supports flexible agentic fine-tuning in both \textbf{train-time adaptation} and \textbf{test-time compute}. It allows parameter optimization to compose with prompt-space evolution, while a cosine schedule of perturbation improves the exploration--adaptation trade-off.

    \item We validate \ourmethod across long-horizon Sudoku, ReAct-style tool use, web agents, and automatic heuristic design with models from 4B to 27B. Agentic ESOpt outperforms RL on long-horizon Sudoku and ReAct-style Math/DocVQA, enables full-parameter adaptation of a 27B WebArena agent, improves Trace2Skill, and enhances existing test-time evolutionary search in 28 of 36 settings.
\end{itemize}

\section{Preliminaries: Agentic LLM Reasoning}

We define a multi-turn LLM agent that repeatedly observes the environment and produces an action as follows:
\[
    a_t \sim \pi_{\theta}(a_t \mid \boldsymbol{o}_{\leq t}, c_t),
\]
where $\theta$ denotes model parameters, $\boldsymbol{o}_{\leq t}$ is the interaction history, and $c_t$ is an external prompt, memory, skill, or tool instruction. An episode induces a multi-turn trajectory $\boldsymbol{\tau}=(o_1,a_1,\ldots,o_H,a_H)$, where the \textbf{horizon} $H$ is the number of agent--environment turns before termination. Its return is $R(\boldsymbol{\tau})=\sum_{t=1}^{H}\gamma^{t-1}r_t$. In many agentic reasoning tasks, rewards are sparse: intermediate rewards are typically zero, $r_t=0$ for $t<H$, and only the completed trajectory receives a task score. In some problems (e.g., MATH \& DocVQA ReAct-style Tool Usage), the agent receives the complete multi-turn trajectory $\boldsymbol{\tau}$ as input, but in some problems with (partial) Markov properties (Sudoku, WebArena), the agent only needs to receive partial local input (e.g., $o_t$ only).

The optimization of the Task-specific agent can act on $c_t$ or $\theta$. Prompt-space optimization methods optimize $c_t$ while freezing the LLM. They are lightweight, but can only elicit behaviors already accessible to the frozen policy \citep{yang2026skillopt,ni2026trace2skill,liu2024evolution,zheng2025mctsahd}. Policy fine-tuning methods optimize $\theta$ for better agentic ability with Supervised Fine-Tuning (SFT), On-Policy Distillation (OPD), Group Relative Policy Optimization (GRPO), or Proximal Policy Optimization (PPO).

\paragraph{Agentic SFT \& OPD.}
Agentic SFT learns from expert actions \citep{chu2025sft}, while Agentic OPD learns from token distributions of higher-ability LLMs \citep{song2026survey}. Both methods require labels beyond the scalar environment reward, which requires additional cost, so we exclude them from our comparison scope.

\paragraph{Agentic PPO \& GRPO.}
Agentic PPO and Agentic GRPO require a scalar environment reward, making them useful in many application scenarios \citep{qi2025webrl}. Agentic GRPO \citep{shao2024deepseekmath} samples $G$ trajectories for the same task and assigns trajectory $i$ the group-relative advantage as follows:
\[
    \widehat A_i=\frac{R_i-\operatorname{mean}_{j}(R_j)}{\operatorname{std}_{j}(R_j)+\varepsilon}.
\]
Let $q_{i,t}(\theta)=\pi_\theta(a_{i,t}\mid h_{i,t})/\pi_{\theta_{\mathrm{old}}}(a_{i,t}\mid h_{i,t})$, where $h_{i,t}=(\boldsymbol{o}_{\leq t},c_t)$. Agentic GRPO minimizes the loss as follows:
\begin{equation}
    \mathcal L_{\mathrm{GRPO}}
    =-\frac{1}{G}\sum_{i=1}^{G}\sum_{t=1}^{H_i}
    \min\!\left(q_{i,t}\widehat A_i,
    \operatorname{clip}(q_{i,t},1-\epsilon,1+\epsilon)\widehat A_i\right)
    +\beta D_{\mathrm{KL}}(\pi_\theta\|\pi_{\mathrm{ref}}),   \label{grpo}
\end{equation}
However, recent studies \citep{feng2026group,he2026hierarchy,zhang2026reasoning} point out that although GRPO's group-relative advantage is representative in single-turn scenarios, it cannot cover multi-turn scenarios. To achieve better credit assignment, PPO has recently been adopted in multi-turn agentic RL \citep{li2026turn,hou2026single}, which uses the same clipped surrogate but replaces $\widehat A_i$ with a turn-level advantage estimated by a critic LLM \citep{schulman2017proximal}:
\begin{equation}
    \mathcal L_{\mathrm{PPO}}
    =-\sum_{t=1}^{H}\min\!\left(q_t\widehat A_t,
    \operatorname{clip}(q_t,1-\epsilon,1+\epsilon)\widehat A_t\right),
    \quad
    \widehat A_t=\sum_{l=0}^{H-t}(\gamma\lambda)^l
    \bigl(r_{t+l}+\gamma V_\phi(h_{t+l+1})-V_\phi(h_{t+l})\bigr).\label{ppo}
\end{equation}
We argue that PPO cannot fully remove the long-horizon difficulty. It requires a critic warm-up phase to learn meaningful values, while sparse terminal rewards make its early advantages unreliable. Even with a well-trained critic, the policy gradient still sums $H$ action-level score terms, so its variance remains dependent on the horizon length. These limitations motivate the trajectory-level estimator introduced next and analyzed in Appendix \ref{app:horizon-variance}.  These drawbacks in credit assignment motivate the parameter-space ES formulation of \ourmethod.

\section{Methodology: Agentic ESOpt}

As illustrated in Figure \ref{fig:main-process}, \ourmethod performs full-parameter ES optimization by sampling parameter perturbations around the current LLM, evaluating the perturbed agents with scalar environment rewards, and applying a reward-weighted parameter update. This update is forward-only, which only requires storing the noise seed and using in-place addition and subtraction \citep{qiu2026esscale}, so the GPU memory requirement of Agentic ESOpt remains minimal, the same as the inference requirements. Moreover, the same black-box trajectory feedback can be reused by skill-space optimizers or test-time compute.

\begin{figure}[t]
\centering \vspace{-10pt}
\includegraphics[width=\linewidth]{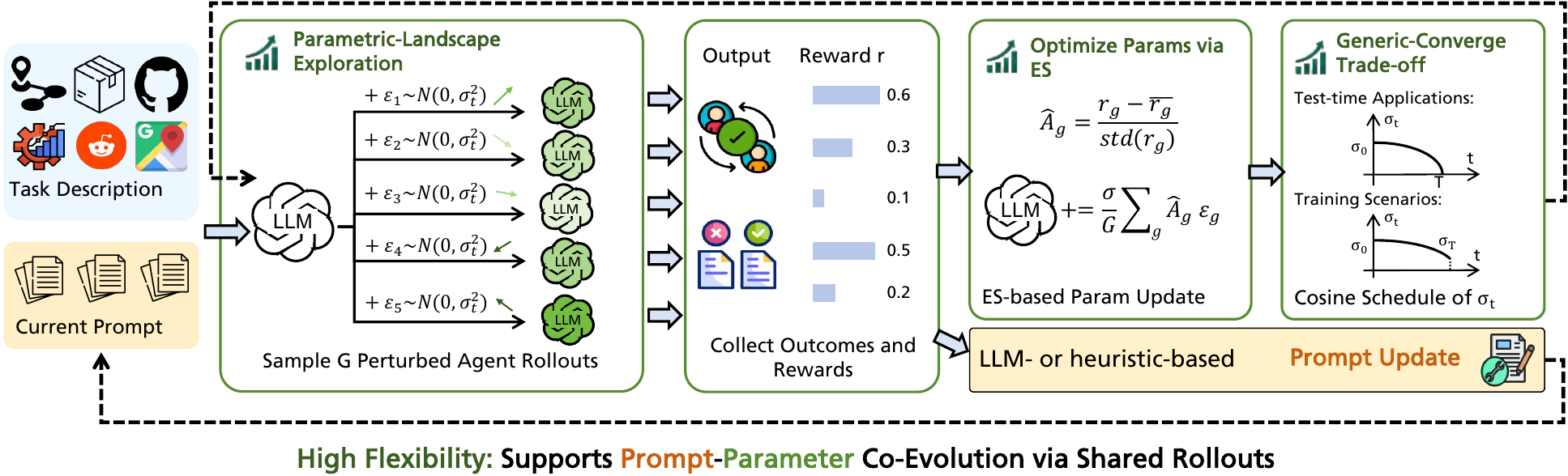}
\caption{\textbf{Detailed workflow of Agentic ESOpt.} Starting from the current LLM, Agentic ESOpt samples parameter perturbations, evaluates the perturbed agents in the environment, normalizes their scalar rewards, and applies a reward-weighted ES update. Compared with Agentic RL, Agentic ESOpt provides \textbf{model scalability}, \textbf{optimization flexibility}, and \textbf{long-horizon scalability}. Its lightweight black-box interface also allows easy composition with prompt-space optimization methods such as Trace2Skill (LLM-based) and EoH (heuristic-based), enabling on-the-fly parameter adaptation within existing test-time compute procedures.}
\label{fig:main-process}\vspace{-10pt}
\end{figure}

Formally, let $\boldsymbol{\tau}=(o_0,a_0,\ldots,o_H,a_H)$ denote an interaction trajectory induced by policy $\pi_\theta$, and let $R(\boldsymbol{\tau})$ denote its scalar trajectory return. For a fixed external agent state $c$, the objective is as follows:
\[
J(\theta;c)
=
\mathbb{E}_{\boldsymbol{\tau}\sim\pi_\theta(\cdot\mid c)}
\left[R(\boldsymbol{\tau})\right].
\]

Agentic ESOpt optimizes the objective by searching over the parameter space around $\theta$, where $\boldsymbol{\epsilon}\in\mathbb{R}^{d}$ is a full-parameter perturbation on the $d$-dimensional parameters, and the Gaussian-smoothed objective is as:
\begin{equation}
J_\sigma(\theta;c)
=
\mathbb{E}_{\boldsymbol{\epsilon}\sim\mathcal{N}(0,I)}
\left[
J(\theta+\sigma\boldsymbol{\epsilon};c)
\right].\label{smoothed-objective}
\end{equation}
Then we can derive the ES pseudo-gradient as follows \citep{salimans2017evolution,qiu2026esscale}:
\begin{equation}
\nabla_\theta J_\sigma(\theta;c)
=
\frac{1}{\sigma}
\mathbb{E}_{\boldsymbol{\epsilon}}
\left[
J(\theta+\sigma\boldsymbol{\epsilon};c)\boldsymbol{\epsilon}
\right].\label{gradient}
\end{equation}
The derivation is detailed in Appendix \ref{app:es-scalar-gradient}. The ES gradients are estimated from scalar scores without differentiating through the agent--environment interaction.

In implementation, to estimate Eq.\eqref{gradient}, Agentic ESOpt samples $G$ perturbations $\boldsymbol{\epsilon}_1,\ldots,\boldsymbol{\epsilon}_G$, evaluates the corresponding perturbed agents, and obtains rewards $R_i = R(\boldsymbol{\tau}_i)$. To reduce variance, we normalize the rewards within the population with a z-score as follows:
\[
\hat{R}_i
=
\frac{R_i-\mu_R}{s_R+\varepsilon},
\quad \text{where}\quad
\mu_R=\frac{1}{G}\sum_{j=1}^G R_j,
\quad
s_R^2=\frac{1}{G}\sum_{j=1}^G(R_j-\mu_R)^2.
\]
In practice, unlike the canonical ES estimator in Eq.~\eqref{gradient}, our normalized update omits the explicit $1/\sigma$ factor, with $\alpha$ serving as the effective update scale. The implemented update is therefore:
\[
\theta_{t+1}
=
\theta_t
+
\frac{\alpha}{G}
\sum_{i=1}^{G}\hat{R}_i\boldsymbol{\epsilon}_i.
\]
We follow \citet{qiu2026esscale} in storing only the noise seed of each perturbation and using in-place addition and subtraction for each perturbed agent, so Agentic ESOpt requires only the same amount of GPU memory for inference.

\paragraph{Prompt-Space Composition and Prompt-Parameter Co-Evolution.}
Test-time compute and prompt-space optimization methods typically keep the LLM parameters fixed throughout the search process. As a result, the search can only reweight or recombine behaviors already accessible to the frozen policy, which may limit optimization when solving the task requires changing the underlying policy itself. In contrast, the lightweight black-box updates of \ourmethod allow parameter adaptation to be performed on the fly alongside prompt-space search, \textbf{enabling prompt--parameter co-evolution}.

Let $\mathcal{D}_t$ denote the trajectories and scores collected at iteration $t$, $\mathcal{U}_{\mathrm{ES}}$ the Agentic ESOpt parameter update, and $\mathcal{U}_c$ an external update rule for prompt $c_t$. A general alternating outer loop can then update the two spaces as
\[
\theta_{t+1}
=
\mathcal{U}_{\mathrm{ES}}(\theta_t;c_t,\mathcal{D}_t),
\quad
c_{t+1}
=
\mathcal{U}_c(c_t;\mathcal{D}_t).
\]

\subsection{Cosine Decay of the Perturbation Radius $\sigma$}

With perturbations $\boldsymbol{\epsilon}\sim\mathcal{N}(0,I)$, Agentic ESOpt effectively optimizes the Gaussian-smoothed objective in Eq \eqref{smoothed-objective}. However, a nonzero perturbation radius introduces a smoothing bias, whose leading term is characterized as:

\begin{lemma}[Gaussian-smoothing bias]
\label{lem:es-bias}
Assume $J(\theta;c)$ is sufficiently smooth in a neighborhood of $\theta$. Then
\[
J_\sigma(\theta;c)
=
J(\theta;c)
+
\frac{\sigma^2}{2}
\mathrm{Tr}\!\left(
\nabla_\theta^2 J(\theta;c)
\right)
+
O(\sigma^4).
\]
\end{lemma}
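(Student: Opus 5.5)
The plan is to Taylor-expand $J(\theta+\sigma\boldsymbol{\epsilon};c)$ around $\theta$ inside the Gaussian expectation in Eq.~\eqref{smoothed-objective}, then take expectations term by term using the moments of $\boldsymbol{\epsilon}\sim\mathcal{N}(0,I)$. Concretely, I would read "sufficiently smooth" as $J(\cdot;c)\in C^4$ with fourth derivatives bounded, or at least of polynomial growth, on a neighborhood (or globally). Then I would write
\[
J(\theta+\sigma\boldsymbol{\epsilon};c)=J(\theta;c)+\sigma\nabla J^\top\boldsymbol{\epsilon}+\tfrac{\sigma^2}{2}\boldsymbol{\epsilon}^\top\nabla^2 J\,\boldsymbol{\epsilon}+\tfrac{\sigma^3}{6}\nabla^3 J[\boldsymbol{\epsilon},\boldsymbol{\epsilon},\boldsymbol{\epsilon}]+\rho_4(\sigma\boldsymbol{\epsilon}),
\]
with all derivatives evaluated at $\theta$. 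The remainder is in Lagrange/integral form and satisfies $|\rho_4(\sigma\boldsymbol{\epsilon})|\le \tfrac{M}{24}\sigma^4\|\boldsymbol{\epsilon}\|^4$.

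Taking expectations term by term:
\begin{itemize}
\item The zeroth-order term gives $J(\theta;c)$.
\item The first- and third-order terms vanish, since odd Gaussian moments are zero: $\mathbb{E}[\epsilon_i]=0$ and $\mathbb{E}[\epsilon_i\epsilon_j\epsilon_k]=0$.
\item The quadratic term gives $\tfrac{\sigma^2}{2}\mathbb{E}[\boldsymbol{\epsilon}^\top H\boldsymbol{\epsilon}]=\tfrac{\sigma^2}{2}\mathrm{Tr}(H\,\mathbb{E}[\boldsymbol{\epsilon}\boldsymbol{\epsilon}^\top])=\tfrac{\sigma^2}{2}\mathrm{Tr}(H)$, using $\mathbb{E}[\boldsymbol{\epsilon}\boldsymbol{\epsilon}^\top]=I$.
\item The remainder is bounded by $\tfrac{M}{24}\sigma^4\mathbb{E}\|\boldsymbol{\epsilon}\|^4=\tfrac{M}{24}\sigma^4(d^2+2d)=O(\sigma^4)$, with the constant depending on $d$ and $M$.
\end{itemize}
Summing these gives the claimed expansion.

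The main obstacle is the remainder. The Gaussian has unbounded support, so "smooth in a neighborhood of $\theta$" does not by itself control $J$ far away. I would split the expectation into the event $\{\sigma\|\boldsymbol{\epsilon}\|\le r\}$, where the local Taylor bound applies, and its complement. On the complement I would use boundedness of $J$; this holds naturally here because $J$ is an expected trajectory return and rewards are bounded. The Gaussian tail bound then gives $\Pr(\|\boldsymbol{\epsilon}\|>r/\sigma)\le e^{-c r^2/\sigma^2}$ for small $\sigma$, which is $o(\sigma^k)$ for every $k$. The Taylor polynomial terms restricted to the good event differ from their full expectations by the same exponentially small amount, since polynomial moments times tail indicators are also exponentially small. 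So the local argument suffices, and the complement contributes nothing at order $\sigma^4$.

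Finally, I would remark why the error is $O(\sigma^4)$ rather than $O(\sigma^3)$: the cubic term vanishes by symmetry of $\mathcal{N}(0,I)$. If desired, the fourth-order coefficient can be identified as $\tfrac{\sigma^4}{8}\sum_{i,j}\partial_{ii jj}J$, the biharmonic term, via Isserlis' theorem. This requires $C^5$ or $C^6$ smoothness, and it is not needed for the stated bound.
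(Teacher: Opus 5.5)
Your proposal is correct and follows the same route as the paper: Taylor-expand $J(\theta+\sigma\boldsymbol{\epsilon};c)$ inside the Gaussian expectation, drop the odd-order terms by symmetry, and use $\mathbb{E}[\boldsymbol{\epsilon}\boldsymbol{\epsilon}^\top]=I$ to obtain $\tfrac{\sigma^2}{2}\mathrm{Tr}(\nabla_\theta^2 J)$. The paper only asserts the $O(\sigma^4)$ remainder, so your explicit bound via $\mathbb{E}\|\boldsymbol{\epsilon}\|^4=d^2+2d$ and your tail-splitting argument, which uses boundedness of $J$ to handle the unbounded Gaussian support, fill in a step the paper leaves implicit rather than changing the approach.
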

The proof is in Appendix \ref{app:es-smoothing-bias}. The second-order term
$\mathrm{Tr}(\nabla_\theta^2 J)$
can be viewed as a \textbf{regularization term}. For a maximization objective, it penalizes sharp local optima while favoring flatter parameter neighborhoods. Therefore, a larger $\sigma$ introduces stronger regularization but also a larger bias from the original objective.

Existing ES methods for single-turn LLM fine-tuning typically use a fixed perturbation radius throughout optimization \citep{qiu2026esscale}, without explicitly adapting this trade-off. Agentic ESOpt instead gradually decreases $\sigma$ over $T$ update steps, using a larger radius early for broader exploration and stronger regularization, and a smaller radius later for increased exploitation and reduced objective bias as follows:
\[
\sigma_t
=
\sigma_T
+
(\sigma_0-\sigma_T)
\frac{1+\cos(\pi t/T)}{2},
\qquad
t=0,\ldots,T.
\]
For \textbf{train-time Agentic ESOpt}, we retain a nonzero $\sigma_T$ to balance exploitation with exploration and regularization. In contrast, \textbf{test-time compute} focuses on the unbiased outcome of the current task instead of the generalization of the agent, so we decay $\sigma_T$ to zero to minimize the objective bias toward the end of optimization.

\section{Agentic Sudoku: Controlled Experiments on Long-Horizon Scalability}
\label{sec:sudoku-agentic}

To demonstrate the advantage of Agentic ESOpt on \textbf{long-horizon scalability} compared to Agentic RL, this section first analyzes their characteristics theoretically. Then we evaluate the proposed Agentic ESOpt in a controlled multi-turn Sudoku environment, where the minimal possible horizon length $H$ is strictly defined by the number of masks on the Sudoku board.

\paragraph{Theoretical Reason for Scalability.}
Consider a trajectory of $H$ actions, $\boldsymbol{a}=(a_1,\ldots,a_H)$, with terminal return $R(\boldsymbol{a})$. A simple Agentic RL estimator (e.g., Eq \eqref{grpo}, Eq \eqref{ppo}) with baseline $b$ has the form as follows:
\[
    \widehat{g}_{\mathrm{PG}}
    =
    (R(\boldsymbol{a})-b)
    \sum_{t=1}^{H}
    \nabla_\theta \log \pi_\theta(a_t\mid s_t).
\]
Following the analysis of \citet[Sec.~3.1]{salimans2017evolution}, suppose the return has weak correlation with any individual action, the per-step score terms are approximately uncorrelated, and ES and policy gradients induce comparable return variation. The estimator variance will then grow nearly linearly with the horizon as follows:
\[
    \operatorname{Var}[\widehat{g}_{\mathrm{PG}}]
    \approx
    \operatorname{Var}[R(\boldsymbol{a})]
    \operatorname{Var}\!\left[
        \sum_{t=1}^{H}\nabla_\theta\log\pi_\theta(a_t\mid s_t)
    \right]
    \propto H.
\]
Agentic ESOpt samples one parameter perturbation $\boldsymbol{\epsilon}$ for the complete rollout. Its corresponding estimator is as follows:
\[
    \widehat{g}_{\mathrm{ES}}
    =
    (R(a(\theta+\sigma\boldsymbol{\epsilon}))-b)
    \frac{\boldsymbol{\epsilon}}{\sigma},
    \quad
    \operatorname{Var}[\widehat{g}_{\mathrm{ES}}]
    \approx
    \operatorname{Var}[R(\boldsymbol{a})]
    \operatorname{Var}\!\left[\frac{\boldsymbol{\epsilon}}{\sigma}\right].
\]
So, the parameter-score term $\boldsymbol{\epsilon}/\sigma$ does not sum over $H$. Agentic ESOpt therefore assigns the terminal return directly to one coherent policy variation, without asking the same scalar outcome to distinguish among $H$ horizons. This scaling argument predicts a \textbf{relative} advantage for Agentic ESOpt as the effective horizon grows.

Importantly, this comparison isolates the horizon-dependent structure of the two estimators. Agentic RL accumulates action-score terms across turns, whereas Agentic ESOpt avoids this with direct parameter-space search whose variance does not explicitly grow with $H$. This predicts an increasing relative advantage for parameter-space attribution as the effective horizon grows. A complete covariance expansion and discussion of other sources of estimator difficulty are provided in Appendix \ref{app:horizon-variance}. Figure~\ref{fig:sudoku-landscapes} further visualizes local parameter-space neighborhoods as $H^*$ increases. Although the reward contrast decreases for harder settings, the ES parameter score itself does not introduce an additional sum over turns.

\textbf{Controlled Long-Horizon Sudoku.} To empirically test the predicted long-horizon scaling behavior, we design a multi-turn Sudoku environment with a controllable \textbf{minimum successful horizon}. The environment provides \textbf{only a terminal reward}, and each valid action fills at most one cell. We define the shortest successful horizon for task $x$ as $H^*(x)=\min_{\boldsymbol{\tau}:R(\boldsymbol{\tau})=1}|\boldsymbol{\tau}|$. Masking 5, 10, or 15 cells therefore gives $H^*\in\{5,10,15\}$. The realized horizon $H=|\boldsymbol{\tau}|$ may exceed $H^*$ because of invalid or unproductive actions. Our theoretical analysis concerns the realized horizon $H$, while the experiments are grouped by $H^*$.

\textbf{Experimental Setup.} Sudoku experiments are conducted on 4$\times$NVIDIA H100 80GB GPUs. We compare Agentic ESOpt with two 8-rollout Agentic GRPO runs with different sampling configurations, Agentic PPO \citep{li2026turn}, and Vanilla Agentic ES on Qwen3.5-4B. We create a 32-instance training dataset and a 32-instance evaluation dataset for each $H^*$. Following the recommendation in the model card, the temperature is 0.7, top-p is 0.8, and top-k is 20 across evaluations. We use task success rate as the primary metric, where an episode is successful only if the Sudoku is fully solved within the interaction budget, and report the mean and standard deviation over three evaluation runs. Detailed training and decoding configurations are in the Appendix \ref{app:sudoku-diagnostics}.

\begin{figure}[H]
\centering\vspace{-5pt}
\subfigure[Final evaluation success]{
  \includegraphics[width=0.34\linewidth]{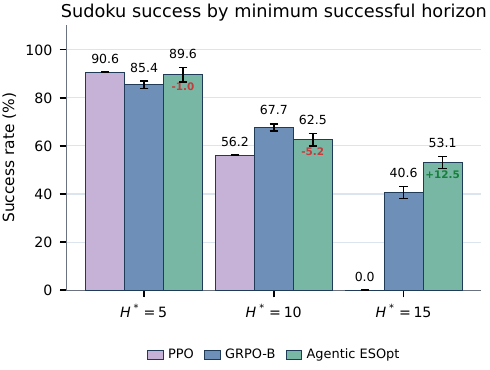}
}%
\subfigure[$H^*=5$ evaluation curve]{
  \includegraphics[width=0.31\linewidth]{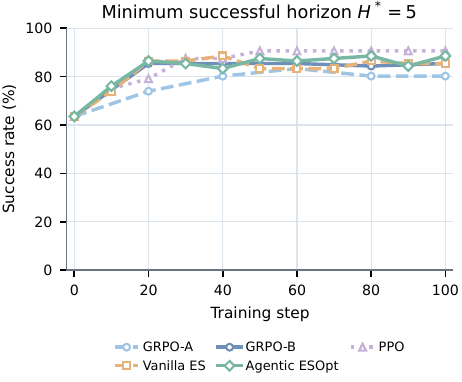}
}%
\subfigure[$H^*=15$ evaluation curve]{
  \includegraphics[width=0.31\linewidth]{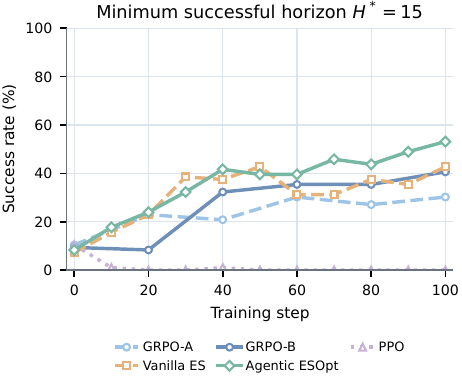}
}\vspace{-10pt}
\caption{Agentic Sudoku performance grouped by minimum successful horizon $H^*$. (a) reports final success rate averaged over 3 runs with standard-deviation error bars for PPO, the stronger GRPO-B configuration, and Agentic ESOpt. Red/green annotations below the Agentic ESOpt values report its difference from the stronger Agentic RL result. (b) and (c) show evaluation curves for the methods. Both Vanilla ES and Agentic ESOpt use $G=32$.}\vspace{-10pt}
\label{fig:sudoku-agentic}
\end{figure}

\begin{table}[H]
\centering
\small\vspace{-10pt}
\setlength{\tabcolsep}{12pt}
\caption{Agentic Sudoku final evaluation success rate ($\times$100) and GPU memory requirement, grouped by minimum successful horizon $H^*$. Values are reported as mean $\pm$ standard deviation.}
\label{tab:sudoku-agentic-summary}\vspace{-5pt}
\resizebox{\linewidth}{!}{%
\begin{tabular}{lcccc}
\toprule[0.5mm]
Method & GPU Mem Req. & $H^*=5$ & $H^*=10$ & $H^*=15$ \\
\midrule[0.3mm]
Qwen3.5-27B & 51.75GB & $86.46 \pm 3.90$ & $50.00 \pm 2.55$ & $28.13 \pm 2.55$ \\
Qwen3.5-4B & \textbf{8.41GB} & $63.54 \pm 7.80$ & $31.25 \pm 4.42$ & $10.42 \pm 1.47$ \\
\quad + Agentic PPO\textsuperscript{$*$} & 89.40GB & $\mathbf{90.63 \pm 0.00}$ & $56.25 \pm 0.00$ & $0.00 \pm 0.00$ \\
\quad + Agentic GRPO\textsuperscript{$\dagger$} & 58.88GB & $80.21 \pm 1.47$ & $44.79 \pm 2.95$ & $30.21 \pm 2.95$ \\
\quad + Agentic GRPO\textsuperscript{$\ddagger$} & 58.88GB & $85.42 \pm 1.47$ & $\mathbf{67.71 \pm 1.47}$ & $40.63 \pm 2.55$ \\\midrule
\rowcolor{dynamicrow}
\quad + Agentic ESOpt (G=32) & \textbf{8.41GB} & $89.58 \pm 2.95$ & $62.50 \pm 2.55$ & $\mathbf{53.13 \pm 2.55}$ \\
\quad w/o $\sigma$ decay (Vanilla ES) & \textbf{8.41GB} & $85.42 \pm 3.90$ & $55.21 \pm 5.89$ & $42.71 \pm 3.90$ \\
\quad w/o $\sigma_T$ (i.e., $\sigma_t=0$) & \textbf{8.41GB} & $85.42 \pm 3.90$ & $54.17 \pm 3.90$ & $28.13 \pm 2.55$ \\
\bottomrule[0.5mm]
\end{tabular}
}\vspace{2pt}

\begin{minipage}{\linewidth}
\footnotesize
\textsuperscript{$*$}PPO training use temperature $1$, top-$p=1$, and top-$k=-1$.
\textsuperscript{$\dagger$}GRPO training uses temperature $0.7$, top-$p=0.8$, and top-$k=20$.
\textsuperscript{$\ddagger$}GRPO training uses temperature $1$, top-$p=1$, and top-$k=-1$.
All GRPO and PPO evaluations, as well as Vanilla ES and Agentic ESOpt training and evaluation rollouts, use the recommended temperature-$0.7$, top-$p=0.8$, top-$k=20$.
\end{minipage}\vspace{-5pt}
\end{table}

\Cref{tab:sudoku-agentic-summary} and \Cref{fig:sudoku-agentic} (a) show a clear tendency that changes with the minimum successful horizon. At $H^*=5$, PPO leads with $90.63\%$, followed by Agentic ESOpt at $89.58\%$ and the stronger GRPO configuration at $85.42\%$. At $H^*=10$, GRPO leads with $67.71\%$, followed by Agentic ESOpt at $62.50\%$ and PPO at $56.25\%$. At $H^*=15$, Agentic ESOpt becomes strongest at $53.13\%$, 12.50 percentage points above GRPO at $40.63\%$, while PPO becomes ineffective with only terminal reward. Under sparse terminal rewards, its critic fails to learn a reliable value signal, making the resulting advantage estimates uninformative for credit assignment. Agentic ESOpt does not dominate with small $H^*$; its relative advantage appears as $H^*$ grows, where the credit assignment of Agentic RL becomes harder. The ordering reversal---PPO at $H^*=5$, GRPO at $H^*=10$, and Agentic ESOpt at $H^*=15$---is therefore more informative than a uniform win: it is consistent with a horizon-dependent advantage regime rather than a globally stronger optimizer or a single favorable decoding configuration.

The evaluation curves \Cref{fig:sudoku-agentic} (b,c) clarify how this separation develops. At $H^*=5$, methods have a similar learning curve. However, at $H^*=15$, GRPO variants fail to make fast improvements in the initial steps of training, while PPO quickly collapses. Their performance is related to the poor long-horizon credit assignment, where Agentic RL usually reaches the limit of the horizon (set to 45 for $H^*=15$). In \Cref{fig:sudoku-turn-diagnostics} (b), the $H^*=15$ setting has a 45-turn interaction budget; the GRPO turn count improves at around step 60, whereas Agentic ESOpt stays close to the minimum successful horizon and ends at 15.41 turns.

Compared to Vanilla ES, Agentic ESOpt can improve late in the longer run, from $39.58\%$ at step 60 to $53.13\%$ at step 100. This means that earlier, more extensive exploration given by the sigma schedule avoids the algorithm getting trapped in local optima. As shown in Table \ref{tab:sudoku-agentic-summary}, removing cosine decay over the sigma in Vanilla Agentic ES leads to worse results, while setting the final sigma to 0 results in overfitting and poor evaluation performance.

\takeaway{Under the \textit{weak-correlation scaling assumptions} used in the OpenAI ES analysis \citep[Sec.~3.1]{salimans2017evolution}, parameter-space ES avoids the explicit horizon-wise score accumulation of action-space policy gradients. The Sudoku experiment exhibits the predicted \textbf{horizon-dependent crossover}: Agentic ESOpt is not uniformly strongest at short horizons, but becomes strongest at the largest controlled $H^*$.}

\begin{table}[H]
\centering
\small\vspace{-3pt}
\setlength{\tabcolsep}{6pt}
\caption{Training compute and wall-clock time on agentic Sudoku. Both Agentic GRPO and Agentic ESOpt are executed on the same four NVIDIA H100 GPUs and fully utilize the available hardware.}
\label{tab:sudoku-efficiency}
\resizebox{\linewidth}{!}{%
\begin{tabular}{lcccccc}
\toprule[0.5mm]
& \multicolumn{2}{c}{$H^*=5$} & \multicolumn{2}{c}{$H^*=10$} & \multicolumn{2}{c}{$H^*=15$} \\
\cmidrule(lr){2-3}\cmidrule(lr){4-5}\cmidrule(lr){6-7}
Method & FLOPs & Time & FLOPs & Time & FLOPs & Time \\
\midrule
Qwen3.5-4B + Agentic GRPO & 3.2 EFLOPs & 5.4 h & 7.6 EFLOPs & 13.1 h & 10.9 EFLOPs & 19.0 h \\
\rowcolor{dynamicrow}
Qwen3.5-4B + Agentic ESOpt ($G$=32) & 3.1 EFLOPs & 3.1 h & 6.3 EFLOPs & 5.8 h & 9.4 EFLOPs & 9.4 h \\
\bottomrule[0.5mm]
\end{tabular}%
}\vspace{-5pt}
\end{table}

\paragraph{Compute and wall-clock efficiency.}
The memory and compute results characterize two complementary efficiency properties of Agentic ESOpt. The GPU-memory results in \Cref{tab:sudoku-agentic-summary} measure its minimal training-side requirement: Agentic ESOpt requires only 8.41GB, equal to the inference memory of the Qwen3.5-4B backbone and 85.7\% below GRPO's 58.88GB requirement.

ES requires a larger population ($G=32$ versus 8-rollout in GRPO); however, it does not introduce a corresponding model-compute disadvantage. As analyzed in Appendix \ref{app:flops-accounting}, Agentic ESOpt with $G=32$ has a comparable model-FLOPs budget to eight-rollout GRPO. As shown in \Cref{tab:sudoku-efficiency}, in the actual four-H100 implementation, the measured wall-clock results indicate that the larger ES population does not translate into additional end-to-end overhead in this environment. This comparison should be interpreted along separate efficiency axes: Agentic ESOpt is substantially more memory efficient and competitive in model compute, while it intentionally spends more independent environment evaluations to replace reference-model evaluation and backpropagation.

\takeaway{
Agentic ESOpt demonstrates \textbf{high model-side efficiency}, requiring only inference-level GPU memory. Although it evaluates more trajectories than Agentic GRPO ($G=32$ versus eight rollouts), the saved reference-model and backward-pass compute keeps model FLOPs and measured wall-clock time comparable in Sudoku, making full-parameter adaptation of the 27B agent in \Cref{tab:webarena} practical.
}

\section{Agentic ESOpt for Train-Time Fine-tuning}
\label{sec:train-time-co-optimize}

Besides the hand-crafted Sudoku, this section evaluates the ability of Agentic ESOpt in fine-tuning LLMs towards specific-purpose agents. We fine-tune agents for Python computation tool usage in Agentic Math reasoning and OCR and document-image analysis tool in DocVQA, and towards a web agent in WebArena.

\subsection{Agentic ReAct-Style Tool Usage on Math and DocVQA}
\label{sec:react-experiment}

We first fine-tune LLMs for the tool-use ability in ReAct-style interaction \citep{yao2022react}. This task is a relatively long-horizon scenario with a usual horizon $H>10$. We follow the setting and hyperparameters in \citet{ni2026trace2skill}, training Math on 400 DAPO problems and evaluating it on 100 held-out DAPO problems and out-of-distribution 30-problem AIME 2026. For DocVQA, we fine-tune agents on a 50-question validation subset and evaluate them on held-out 100 questions. In evaluating all methods, we set the horizon limit to 50 turns for both MATH and DocVQA, confining the max\_token in each turn to 4096 for MATH and 512 for DocVQA, respectively. This setting will not cause significant truncation. We implement two fine-tuning methods, Agentic GRPO and Agentic ESOpt, on Qwen3.5-4B. Besides, we also implement a prompt-space optimization method, Trace2Skill, which collects trajectories by sampling and synthesizes trajectories for skills at the end. We also implement a sequential combination of fine-tuning methods and Trace2Skill, where Agentic GRPO + Trace2Skill and Agentic ESOpt + Trace2Skill represent using the trajectories collected during training over No Skill prompts for skill distillation. GRPO is 8-rollout as well, and we have $G=16$ for Agentic ESOpt and 16 samples for Trace2Skill. As discussed in Appendix \ref{app:flops-accounting}, in fine-tuning agents, Agentic ESOpt consumes only approximately half the FLOPs of GRPO. Detailed settings are given in \Cref{app:math-docvqa-implementation,app:docvqa-implementation}.

\begin{table}[H]
\centering
\small\vspace{-2pt}
\setlength{\tabcolsep}{3.3pt}
\caption{Math-reasoning and DocVQA results under No Skill and Trace2Skill contexts. Qwen3.5-27B No Skill is an evaluation-only baseline; all optimized rows use Qwen3.5-4B. DAPO, AIME 2026, and DocVQA accuracy are reported in \%. Mean@4 averages four sampled Pass@1 accuracies; Pass@4 or Max@4 represent the best of four samples.}\vspace{-3pt}
\label{tab:math-docvqa}
\resizebox{\linewidth}{!}{%
\begin{tabular}{llcccccccc}
\toprule[0.5mm]
& & \multicolumn{4}{c}{Math reasoning} & \multicolumn{4}{c}{DocVQA} \\
\cmidrule(lr){3-6}\cmidrule(lr){7-10}
& & \multicolumn{2}{c}{DAPO} & \multicolumn{2}{c}{AIME 2026} & \multicolumn{2}{c}{ANLS} & \multicolumn{2}{c}{Accuracy} \\
\cmidrule(lr){3-4}\cmidrule(lr){5-6}\cmidrule(lr){7-8}\cmidrule(lr){9-10}
Model & Method & Mean@4 & Pass@4 & Mean@4 & Pass@4 & Mean@4 & Max@4 & Mean@4 & Pass@4 \\
\midrule[0.35mm]
Qwen3.5-27B & No Skill & 65.8 & 87.0 & 76.7 & 93.3 & 0.5036 & 0.7843 & 51.8 & 69.0 \\
\midrule[0.35mm]
Qwen3.5-4B & No Skill & 63.0 & \textbf{86.0} & 55.8 & 86.7 & 0.3875 & 0.5981 & 40.3 & 53.0 \\
Qwen3.5-4B & Agentic GRPO + No Skill & 68.8 & 83.0 & 58.3 & 76.7 & 0.4627 & 0.5398 & 48.0 & 56.0 \\
Qwen3.5-4B & \improvedcell{Agentic ESOpt + No Skill} &
\improvedcell{76.8} &
\textbf{86.0} &
\improvedcell{70.8} &
\improvedcell{\textbf{96.7}} &
\improvedcell{0.5043} &
\improvedcell{0.6507} &
\improvedcell{52.5} &
\improvedcell{\textbf{61.0}} \\
& {\scriptsize $\Delta$ vs No Skill} &
\improvedcell{\deltagain{$\uparrow$13.8}} &
\deltasame{0.0} &
\improvedcell{\deltagain{$\uparrow$15.0}} &
\improvedcell{\deltagain{$\uparrow$10.0}} &
\improvedcell{\deltagain{$\uparrow$0.1168}} &
\improvedcell{\deltagain{$\uparrow$0.0526}} &
\improvedcell{\deltagain{$\uparrow$12.3}} &
\improvedcell{\deltagain{$\uparrow$8.0}} \\
\midrule[0.35mm]
Qwen3.5-4B & Trace2Skill & 64.8 & 82.0 & 50.8 & 83.3 & 0.4612 & \textbf{0.6772} & 47.3 & \textbf{69.0} \\
Qwen3.5-4B & Agentic GRPO + Trace2Skill & 67.8 & 85.0 & 50.0 & 80.0 & 0.4743 & 0.5692 & 49.5 & 60.0 \\
Qwen3.5-4B & \improvedcell{Agentic ESOpt + Trace2Skill} &
\improvedcell{\textbf{77.3}} &
\improvedcell{\textbf{86.0}} &
\improvedcell{\textbf{71.7}} &
\improvedcell{\textbf{96.7}} &
\improvedcell{\textbf{0.5086}} &
0.6654 &
\improvedcell{\textbf{52.8}} &
61.0 \\
& {\scriptsize $\Delta$ vs Trace2Skill} &
\improvedcell{\deltagain{$\uparrow$12.5}} &
\improvedcell{\deltagain{$\uparrow$4.0}} &
\improvedcell{\deltagain{$\uparrow$20.8}} &
\improvedcell{\deltagain{$\uparrow$13.3}} &
\improvedcell{\deltagain{$\uparrow$0.0474}} &
\deltaloss{$\downarrow$0.0118} &
\improvedcell{\deltagain{$\uparrow$5.5}} &
\deltaloss{$\downarrow$8.0} \\
\bottomrule[0.5mm]
\end{tabular}%
}\vspace{-10pt}
\end{table}

As shown in Table \ref{tab:math-docvqa}, Agentic ESOpt consistently outperforms the matched Agentic GRPO baselines across ReAct-style Math and DocVQA. Without evolved skills, Agentic ESOpt improves the \textit{Qwen3.5-4B} base model by 13.8 and 15.0 percentage points on DAPO and AIME 2026 Mean@4, respectively, and improves DocVQA Mean@4 accuracy by 12.3 points. Averaged across these three metrics, Agentic ESOpt improves the base model by 13.7 points and Agentic GRPO by 8.3 points. Agentic ESOpt also composes effectively with Trace2Skill: the combined method achieves the strongest Qwen3.5-4B Mean@4 results on DAPO, AIME 2026, and DocVQA, showing the flexibility of parameter-space optimization methods to compose with external skill optimization.

Beyond the average-performance gains, Agentic ESOpt also preserves strong Pass@4 performance. We show the full Pass@K performance in Appendix \ref{app:math-docvqa-pass-at-k} up to $k=32$. Across datasets, Agentic ESOpt variants improve every reported Pass@K metric over their matched GRPO baselines \citep{yue2025does}. In AIME 2026 and DocVQA, Agentic ESOpt improves the Pass@K performance compared to the base LLM. This indicates that Agentic ESOpt performs a different and more favorable optimization paradigm compared to Agentic GRPO.

\takeaway{\textbf{Agentic ESOpt improves average performance without sacrificing Pass@K coverage}: across Math and DocVQA, both Agentic ESOpt variants outperform their matched Agentic GRPO baselines on every reported Pass@4 metric, suggesting broader coverage of successful trajectories after parameter optimization.}

\subsection{Agentic Reasoning for WebArena: Experiments on Model Scalability}
We then evaluate \ourmethod on WebArena-Lite, a 165-task browser benchmark derived from WebArena \citep{liu2025visualagentbench}. Each task provides a natural-language goal and an interactive website state. The agent observes a WebRL-style textual browser representation, acts in an id-based action space, and receives task-success feedback after completing a trajectory \citep{zhou2024webarena,qi2025webrl}. The evaluation set contains 21 Reddit, 32 GitLab, 35 CMS, 28 Map, 46 OSS, and 3 Wikipedia tasks; the five major categories are shown separately in \Cref{tab:webarena}, while Wikipedia is only included in the dataset average.

This setting scales Agentic ESOpt to Qwen3.5-27B on four NVIDIA H100 80GB GPUs. At this model scale, full-parameter Agentic RL is no longer practical on four H100 80GB GPUs. In contrast, Agentic ESOpt retains inference-level memory requirements, providing feasibility for us to perform full-parameter adaptation of a 27B web agent. This experiment therefore focuses on large-model feasibility and real-world agent adaptation, complementing the controlled ES-versus-RL comparison in Section~\ref{sec:sudoku-agentic} and \ref{sec:react-experiment}. We compare two paired settings using the same Qwen3.5-27B backbone: No Skill versus Agentic ESOpt + No Skill, and Trace2Skill versus Agentic ESOpt + Trace2Skill. Both Agentic ESOpt variants share the same parameter updates learned without skills; the combined variant additionally distills a skill from the resulting trajectories, without a second skill-conditioned ES stage or alternating joint optimization. We also report closed-source GPT-5.4, GPT-5.4-mini, and GPT-5.4-nano as reference points. Complete setup details and evaluation curves are provided in Appendix~\ref{app:webarena-details} and \ref{app:webarena-eval-curve}, while task prompts and distilled skills are provided in \ref{app:prompt-webarena}, \ref{app:webarena-trace2skill-skill}, and \ref{app:webarena-es-trace2skill-skill}.

All the Qwen3.5-27B rollouts for Agentic ESOpt, Trace2skill, and evaluation use temperature $0.7$, a 2048-token generation budget, WebRL-style observations, and WebRL id-based actions. Agentic ESOpt sets the population size $G=8$, uses task success on sampled training cases as the reward. Trace2skill also samples each question for 8 runs. Final results are evaluated on the official 165-task WebArena-Lite evaluation set \citep{liu2025visualagentbench}.

\begin{table}[H]
\centering
\small\vspace{-5pt}
\setlength{\tabcolsep}{5pt}
\caption{WebArena-Lite success rates (\%). We report the average over 3 runs and the standard deviations. Improved Agentic ESOpt cells relative to their paired baseline are shaded green; bold denotes the best result in each column. The five displayed site categories contain 162 tasks; the remaining Wikipedia tasks are included in Dataset Avg.}\vspace{-5pt}
\label{tab:webarena}
\resizebox{\linewidth}{!}{%
\begin{tabular}{llcccccc}
\toprule[0.5mm]
Model & Method & Reddit (21) & GitLab (32) & CMS (35) & Map (28) & OSS (46) & Dataset Avg. \\
\midrule[0.35mm]
\multicolumn{8}{l}{\textit{Strong frozen baseline (evaluation only)}} \\
GPT-5.4 & No Skill & 47.62 & \textbf{46.88} & 46.67 & \textbf{19.05} & 21.01 & 34.14 {\scriptsize $\pm$ 0.76} \\
GPT-5.4-mini & No Skill & 39.68 & 29.17 & 30.48 & 13.10 & 13.77 & 23.23 {\scriptsize $\pm$ 1.14} \\
GPT-5.4-nano & No Skill & 39.68 & 27.08 & 19.05 & 11.90 & 8.70 & 18.79 {\scriptsize $\pm$ 0.99} \\
\midrule[0.35mm]
Qwen3.5-27B & No Skill & 50.79 & 35.42 & 41.90 & 8.33 & 21.01 & 29.47 {\scriptsize $\pm$ 1.14} \\
Qwen3.5-27B & \improvedcell{Agentic ESOpt + No Skill} &
49.21 &
\improvedcell{43.75} &
\improvedcell{49.52} &
\improvedcell{14.29} &
\improvedcell{30.43} &
\improvedcell{36.16 {\scriptsize $\pm$ 0.70}} \\
& {\scriptsize $\Delta$ vs No Skill} &
\deltaloss{$\downarrow$1.58} &
\improvedcell{\deltagain{$\uparrow$8.33}} &
\improvedcell{\deltagain{$\uparrow$7.62}} &
\improvedcell{\deltagain{$\uparrow$5.96}} &
\improvedcell{\deltagain{$\uparrow$9.42}} &
\improvedcell{\deltagain{$\uparrow$6.69}} \\
\midrule[0.25mm]
Qwen3.5-27B & Trace2Skill & 49.21 & 39.58 & 46.67 & 13.10 & 28.26 & 33.94 {\scriptsize $\pm$ 3.37} \\
Qwen3.5-27B & \improvedcell{Agentic ESOpt + Trace2Skill} &
\improvedcell{\textbf{52.80}} &
\improvedcell{41.67} &
\improvedcell{\textbf{50.48}} &
10.71 &
\improvedcell{\textbf{32.61}} &
\improvedcell{\textbf{36.36 {\scriptsize $\pm$ 0.86}}} \\
& {\scriptsize $\Delta$ vs Trace2Skill} &
\improvedcell{\deltagain{$\uparrow$3.59}} &
\improvedcell{\deltagain{$\uparrow$2.09}} &
\improvedcell{\deltagain{$\uparrow$3.81}} &
\deltaloss{$\downarrow$2.39} &
\improvedcell{\deltagain{$\uparrow$4.35}} &
\improvedcell{\deltagain{$\uparrow$2.42}} \\
\bottomrule[0.5mm]
\end{tabular}
}\vspace{-10pt}
\end{table}

As shown in \Cref{tab:webarena}, Agentic ESOpt improves the Qwen3.5-27B No Skill baseline from 29.47\% to 36.16\%, a gain of 6.69 percentage points. When combined with Trace2Skill, it further improves the Trace2Skill baseline from 33.94\% to 36.36\%, demonstrating the flexibility of Agentic ESOpt in composing with skill-space optimization on Larger LLMs.

\section{Agentic ESOpt on Test-Time Compute: Automatic Heuristic Design}
\label{sec:test-time-heuristic-design}

Traditional test-time compute generally fixes the LLM parameters throughout the search process, which may limit optimization when solving the task requires changing the underlying policy itself. Due to the high flexibility of Agentic ESOpt, it can incorporate test-time compute to improve its performance \citep{snell2024scaling}.

In this section, we evaluate the ability of Agentic ESOpt to boost a typical test-time compute problem--automatic heuristic design (AHD) \citep{liu2024evolution,zheng2025mctsahd,huang2026calm,liu2025fine}. Among them, the representative work EoH \citep{liu2024evolution} improves the performance of these algorithms on a fixed training dataset by maintaining a population of heuristic algorithms and applying crossover and mutation operators. In generating each new heuristic, it provides an LLM agent with a problem description and a Python function signature; the LLM generates heuristic code, and the evaluator returns a scalar objective after executing the code within a solver. Agentic ESOpt can be inserted into existing EoH and pure sample procedures without changing their outer search scaffold. Agentic ESOpt + Sample or EoH therefore supports both heuristic-space search and parameter-space adaptation. We consider two AHD scenarios and compare Sample and EoH with their corresponding Agentic ESOpt variants under the same evaluation budgets. In constructive AHD, the heuristic makes local decisions for NP-hard combinatorial optimization problems (defined in Appendix \ref{app:ahd-details}): TSP, KP, and ASP; In ACO-style AHD, the algorithm provides heuristic information to an ant-colony optimizer for TSP, CVRP, and BPP. TSP, CVRP, and BPP are minimized, whereas KP and ASP are maximized.

For constructive tasks, the \textit{Optimal} row reports the known optimum $x^\star$ only as a reference. We define the normalized optimality gap as $g(x)=\frac{|x-x^\star|}{|x^\star|}$, and report the gain of an Agentic ESOpt result $m$ over its matched baseline $b$ as $\Delta=\frac{g(b)}{g(m)}-1.$
The formal heuristic-space view, complete search settings, ACO-style results, ablations, and exact generation prompt are indexed in Appendix \ref{app:ahd-details}, \ref{app:ahd-setting}, \ref{app:ahd-additional-results}, and \ref{app:prompt-ahd}.

For Agentic ESOpt + EoH, all objectives are represented internally as minimization costs, and the reward is the selected parent's cost minus the perturbed child's cost. For Agentic ESOpt + Sample, the reward is the negative child cost. Rewards are z-score normalized within each parameter-update batch. Agentic ESOpt preserves the original EoH operators and attaches parameter updates only to the mutation operators $m1$ and $m2$. We report the best candidate from the final generation and average over repeated runs. A 20-run significance analysis on representative TSP and KP settings is provided in \Cref{tab:ahd-significance}.

\begin{table}[H]
\centering
\small\vspace{-5pt}
\setlength{\tabcolsep}{6pt}
\caption{Design constructive heuristics at total evaluation $T\in\{1000,2000\}$. TSP is minimized; KP and ASP are maximized. The Optimal row shows optimum objective values, not an experimental baseline. Sample and EoH are each paired with their Agentic ESOpt counterpart; the $\Delta$ rows report the gap-ratio gain defined in the text. Baselines and Agentic ESOpt use \textbf{\textit{LLaMA-3.1-8B-Instruct}} for all runs.}\vspace{-5pt}
\label{tab:ahd-construct}
\resizebox{\linewidth}{!}{%
\begin{tabular}{lcccccc}
\toprule[0.5mm]
Method & \makecell{TSP\\$N=20$} & \makecell{TSP\\$N=50$} & \makecell{KP\\$N=50,W=12.5$} & \makecell{KP\\$N=100,W=25$} & \makecell{ASP\\$N=15,W=10$} & \makecell{ASP\\$N=21,W=15$} \\
\midrule
Optimal & 3.8199 & 5.6750 & 20.0370 & 40.2710 & 3{,}003 & 43{,}596 \\
Greedy Construct & 4.4797 & 6.9590 & 19.9850 & 40.2250 & 1{,}530 & 15{,}050 \\
\midrule[0.35mm]
\multicolumn{7}{c}{\textit{Total Evaluations: $T=1000$}} \\
\midrule[0.35mm]
Sample & 4.3286 & 6.7110 & 19.9896 & 40.2297 & 2{,}753 & 30{,}336.67 \\
\improvedcell{Agentic ESOpt + Sample} &
\improvedcell{4.2336} &
\improvedcell{6.5488} &
\improvedcell{19.9899} &
\improvedcell{40.2314} &
2{,}729 &
\improvedcell{31{,}082.67} \\
{\scriptsize $\Delta$ vs Sample} &
\improvedcell{\deltagain{$\downarrow$22.96\%}} &
\improvedcell{\deltagain{$\downarrow$18.56\%}} &
\improvedcell{\deltagain{$\uparrow$0.79\%}} &
\improvedcell{\deltagain{$\uparrow$4.42\%}} &
\deltaloss{$\downarrow$8.76\%} &
\improvedcell{\deltagain{$\uparrow$5.96\%}} \\
\midrule
EoH & 4.2481 & 6.5450 & 19.9958 & 40.2320 & 2{,}760 & 28{,}465.67 \\
\improvedcell{Agentic ESOpt + EoH} &
\improvedcell{4.2170} &
\improvedcell{6.4631} &
\improvedcell{20.0007} &
\improvedcell{40.2351} &
\improvedcell{2{,}770} &
\improvedcell{30{,}887.33} \\
{\scriptsize $\Delta$ vs EoH} &
\improvedcell{\deltagain{$\downarrow$7.83\%}} &
\improvedcell{\deltagain{$\downarrow$10.39\%}} &
\improvedcell{\deltagain{$\uparrow$13.59\%}} &
\improvedcell{\deltagain{$\uparrow$8.65\%}} &
\improvedcell{\deltagain{$\uparrow$4.67\%}} &
\improvedcell{\deltagain{$\uparrow$29.18\%}} \\
\midrule[0.35mm]
\multicolumn{7}{c}{\textit{Total Evaluations: $T=2000$}} \\
\midrule[0.35mm]
Sample & 4.2585 & 6.6008 & 19.99118 & 40.2320 & 2{,}759 & 30{,}123.33 \\
\improvedcell{Agentic ESOpt + Sample} &
\improvedcell{4.2098} &
\improvedcell{6.5332} &
\improvedcell{19.99120} &
40.2320 &
2{,}751 &
\improvedcell{30{,}269.33} \\
{\scriptsize $\Delta$ vs Sample} &
\improvedcell{\deltagain{$\downarrow$12.51\%}} &
\improvedcell{\deltagain{$\downarrow$7.88\%}} &
\improvedcell{\deltagain{$\uparrow$0.04\%}} &
\deltasame{0.00\%} &
\deltaloss{$\downarrow$3.17\%} &
\improvedcell{\deltagain{$\uparrow$1.10\%}} \\
\midrule
EoH & 4.2165 & 6.4706 & 19.9984 & 40.2356 & 2{,}764 & 28{,}016.00 \\
\improvedcell{Agentic ESOpt + EoH} &
\improvedcell{4.1799} &
\improvedcell{6.4442} &
\improvedcell{19.9987} &
\improvedcell{40.2358} &
\improvedcell{2{,}784} &
\improvedcell{30{,}124.67} \\
{\scriptsize $\Delta$ vs EoH} &
\improvedcell{\deltagain{$\downarrow$10.19\%}} &
\improvedcell{\deltagain{$\downarrow$3.43\%}} &
\improvedcell{\deltagain{$\uparrow$0.81\%}} &
\improvedcell{\deltagain{$\uparrow$0.48\%}} &
\improvedcell{\deltagain{$\uparrow$9.92\%}} &
\improvedcell{\deltagain{$\uparrow$24.36\%}} \\
\bottomrule[0.5mm]
\end{tabular}
}\vspace{-10pt}
\end{table}

As shown in \Cref{tab:ahd-construct}, Agentic ESOpt + EoH improves all six constructive test sets at both evaluation budgets. Agentic ESOpt + Sample improves nine of twelve comparisons, with one tie and two regressions. Across both constructive baselines, Agentic ESOpt therefore improves 21 of 24 matched comparisons. The ACO-style results and additional ablations are reported in \Cref{app:ahd-additional-results}. Combining the constructive and ACO-style settings, Agentic ESOpt improves 28 of 36 matched method--budget comparisons across 12 test sets and six scenarios.

\takeaway{
AHD demonstrates that Agentic ESOpt can extend test-time compute from search over a frozen LLM policy to \textbf{coupled heuristic--parameter optimization}. Without modifying the outer Sample or EoH search scaffold and under matched evaluation budgets, Agentic ESOpt improves \textbf{28 of 36} comparisons, showing that lightweight black-box parameter adaptation can systematically enhance existing test-time search.
}

\section{Discussion}

We use this section to clarify the regime in which parameter-space ES is most attractive, the current scope of the evidence, and the main directions for extending Agentic ESOpt.

\subsection{Population Scaling with Model Size}

The population size $G$ is a key hyperparameter of ES. Our current results provide preliminary evidence that stronger backbones may be less sensitive to small populations. As shown in \Cref{tab:population-scaling}, for the 4B model, increasing the population from $G=8$ to $G=16$ on the same 15-turn Sudoku setting increases the best test accuracy from $5.10\%$ to $35.42\%$, and final test success increases from $2.95\%$ to $22.92\%$. However, the 9B model is substantially less dependent on a large population. Increasing $G$ from 8 to 16 changes best test success from $30.21\%$ to $37.50\%$, a $24.1\%$ relative improvement, while final test success remains $30.21\%$, giving no relative improvement.

\begin{figure}[H]
\centering
\begin{minipage}[t]{0.52\linewidth}
\vspace{0pt}
\centering
\captionsetup{type=table}
\caption{Vanilla-ES population sensitivity on 15-turn Sudoku. Relative changes compare $G=16$ with $G=8$.}
\label{tab:population-scaling}
\vspace{5pt}
\scriptsize
\setlength{\tabcolsep}{3pt}
\resizebox{\linewidth}{!}{%
\begin{tabular}{lccccc}
\toprule
Backbone & $G$ & \makecell{Best\\test} & \makecell{Final\\test} & \makecell{$\Delta$ best\\test} & \makecell{$\Delta$ final\\test} \\
\midrule
Qwen3.5-4B & 8  & 5.10 & 2.95 & -- & -- \\
Qwen3.5-4B & 16 & 35.42 & 22.92 & $+594.5\%$ & $+677.0\%$ \\
\midrule
Qwen3.5-9B & 8  & 30.21 & 30.21 & -- & -- \\
Qwen3.5-9B & 16 & 37.50 & 30.21 & $+24.1\%$ & $0.0\%$ \\
\bottomrule
\end{tabular}%
}
\end{minipage}\hfill
\begin{minipage}[t]{0.44\linewidth}
\vspace{0pt}
\centering
\includegraphics[width=\linewidth]{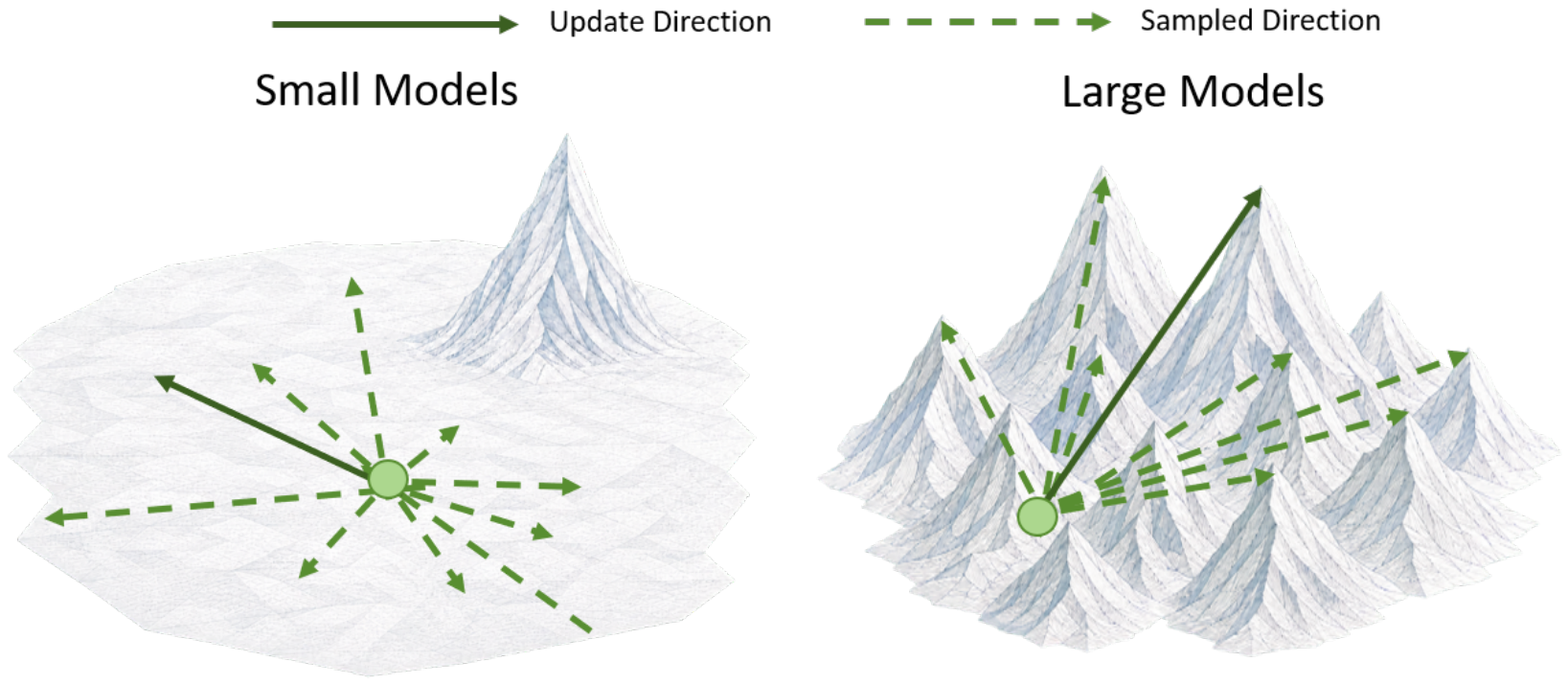}
\vspace{-5pt}
\caption{Intuition for population scaling: sampled directions around a stronger backbone are more likely to align with a useful Agentic ESOpt direction.}
\label{fig:population-scaling-intuition}
\end{minipage}
\end{figure}

As shown in \Cref{fig:population-scaling-intuition}, one plausible interpretation is that a pre-trained stronger backbone will lead to a more competent local region, so a larger fraction of nearby perturbations produce behavior informative for the update. This interpretation is consistent with the finding of \citet{gan2026neural} that useful behavioral diversity becomes denser around stronger pre-trained models. This supports the prospect of using Agentic ESOpt to fine-tune larger advanced LLMs with fewer FLOPs; we will take establishing a universal scaling law and implementing Agentic ESOpt on frontier LLMs as future work.

\takeaway{The experiment provides \textbf{initial population-sensitivity evidence}: doubling $G$ changes final-test success by $+677.0\%$ for 4B but $0.0\%$ for 9B. That indicates stronger backbones may need fewer sampled directions.}

\section{Conclusion}

This paper presents \ourmethod, a full-parameter evolution-strategy framework for fine-tuning long-horizon LLM agents requiring only minimal inference-level GPU memory. We argue that \textbf{for long-horizon agents, Agentic ESOpt is a more natural optimization paradigm than conventional Agentic RL}. As interaction horizons grow and feedback becomes sparse, policy gradients must assign a trajectory-level outcome across an expanding sequence of actions. Agentic ESOpt instead attributes the outcome to a coherent policy perturbation, avoiding explicit horizon-wise action-score accumulation.

Our experiments support this shift. On controlled Sudoku, Agentic PPO and Agentic GRPO remain competitive at shorter horizons, whereas \textbf{Agentic ESOpt becomes strongest as the minimum successful horizon grows}; the advantage further extends to ReAct-style Math and DocVQA. Agentic ESOpt also provides a model-scaling advantage through forward-only optimization. Its \textbf{inference-level memory} footprint enables full-parameter adaptation of a Qwen3.5-27B WebArena agent on four H100 GPUs.

Beyond train-time fine-tuning, its \textbf{flexibility} naturally integrates parameter adaptation into broader optimization loops of test-time compute. Agentic ESOpt improves skill optimization and test-time heuristic search, including 28 of 36 matched AHD comparisons under fixed evaluation budgets.

Taken together, these results position ES not as a cheaper substitute for RL, but as \textbf{a better-matched optimization mechanism for long-horizon, sparse-feedback LLM agents}. As agentic systems move toward longer interactions, larger models, and increasingly complex prompt-space interactive components, Agentic ESOpt may become a central paradigm for agent fine-tuning and online self-improvement.

Limitations of the current study are discussed in Appendix \ref{app:limitation}, and future work is outlined in Appendix \ref{app:futurework}.

\section*{Acknowledgement}
We would like to sincerely thank Jiaying Wu, Penghui Qi, Zichen Liu, and Ziqiao Meng from the National University of Singapore, as well as Ziang Li from humans\& for their important comments on the methodology and paper-writing.

\bibliographystyle{assets/plainnat}
\bibliography{paper}

@article{snell2024scaling,
  title={Scaling llm test-time compute optimally can be more effective than scaling model parameters},
  author={Snell, Charlie and Lee, Jaehoon and Xu, Kelvin and Kumar, Aviral},
  journal={arXiv preprint arXiv:2408.03314},
  year={2024}
}

@article{salimans2017evolution,
  title={Evolution strategies as a scalable alternative to reinforcement learning},
  author={Salimans, Tim and Ho, Jonathan and Chen, Xi and Sidor, Szymon and Sutskever, Ilya},
  journal={arXiv preprint arXiv:1703.03864},
  year={2017}
}

@inproceedings{he2026hierarchy,
  title={Hierarchy-of-groups policy optimization for long-horizon agentic tasks},
  author={He, Shuo and Feng, Lang and Cheng, Xin and Feng, Lei and An, Bo and others},
  booktitle={International Conference on Learning Representations},
  volume={2026},
  pages={27572--27593},
  year={2026}
}

@article{hoy2026matching,
  title={Matching Accuracy, Different Geometry: Evolution Strategies vs GRPO in LLM Post-Training},
  author={Hoy, William and Wang, Binxu and Pan, Xu},
  journal={arXiv preprint arXiv:2604.01499},
  year={2026}
}

@article{wang2026milestone,
  title={Milestone-guided policy learning for long-horizon language agents},
  author={Wang, Zixuan and Yan, Yuchen and Li, Hongxing and Pan, Teng and Li, Dingming and Zhang, Ruiqing and Lu, Weiming and Xiao, Jun and Zhuang, Yueting and Shen, Yongliang},
  journal={arXiv preprint arXiv:2605.06078},
  year={2026}
}

@article{shen2026skillopt,
  title={SkillOpt-Lite: Better and Faster Agent Self-evolution via One Line of Vibe},
  author={Shen, Yifei and Li, Bo and Zhang, Xinjie},
  journal={arXiv preprint arXiv:2607.03451},
  year={2026}
}

@article{team2026kimi,
  title={Kimi k3: Open frontier intelligence},
  author={Team, Kimi and Bai, Tongtong and Bai, Yifan and Bao, Yiping and Cai, Jianfeng and Cai, Xinyuan and Cao, Peizhou and Cao, Yuxuan and Chai, Ziwei and Charles, Y and others},
  journal={arXiv preprint arXiv:2607.24653},
  year={2026}
}

@inproceedings{huang2026calm,
  title={Calm: Co-evolution of algorithms and language model for automatic heuristic design},
  author={Huang, Ziyao and Wu, Weiwei and Wu, Kui and Lee, Wei-Bin and Wang, Jianping},
  booktitle={International Conference on Learning Representations},
  volume={2026},
  pages={72468--72510},
  year={2026}
}

@article{liu2025fine,
  title={Fine-tuning large language model for automated algorithm design},
  author={Liu, Fei and Zhang, Rui and Lin, Xi and Lu, Zhichao and Zhang, Qingfu},
  journal={arXiv preprint arXiv:2507.10614},
  year={2025}
}

@inproceedings{liu2025visualagentbench,
  title={Visualagentbench: Towards large multimodal models as visual foundation agents},
  author={Liu, Xiao and Zhang, Tianjie and Gu, Yu and Iong, Iat Long and XiXuan, Song and Xu, Yifan and Zhang, Shudan and Lai, Hanyu and Sun, Jiadai and Yang, Xinyue and others},
  booktitle={International Conference on Learning Representations},
  volume={2025},
  pages={95650--95707},
  year={2025}
}

@article{zheng2025soft,
  title={Soft-grpo: Surpassing discrete-token llm reinforcement learning via gumbel-reparameterized soft-thinking policy optimization},
  author={Zheng, Zhi and Gu, Yu and Liu, Wei and Teh, Yee Whye and Lee, Wee Sun},
  journal={arXiv preprint arXiv:2511.06411},
  year={2025}
}

@article{gan2026neural,
  title={Neural thickets: Diverse task experts are dense around pretrained weights},
  author={Gan, Yulu and Isola, Phillip},
  journal={arXiv preprint arXiv:2603.12228},
  year={2026}
}

@article{liu2025gem,
  title={Gem: A gym for agentic llms},
  author={Liu, Zichen and Sims, Anya and Duan, Keyu and Chen, Changyu and Yu, Simon and Zhou, Xiangxin and Xu, Haotian and Xiong, Shaopan and Liu, Bo and Tan, Chenmien and others},
  journal={arXiv preprint arXiv:2510.01051},
  year={2025}
}

@article{madaan2023self,
  title={Self-refine: Iterative refinement with self-feedback},
  author={Madaan, Aman and Tandon, Niket and Gupta, Prakhar and Hallinan, Skyler and Gao, Luyu and Wiegreffe, Sarah and Alon, Uri and Dziri, Nouha and Prabhumoye, Shrimai and Yang, Yiming and others},
  journal={Advances in neural information processing systems},
  volume={36},
  pages={46534--46594},
  year={2023}
}

@article{zhu2025scaling,
  title={Scaling test-time compute for llm agents},
  author={Zhu, King and Li, Hanhao and Wu, Siwei and Xing, Tianshun and Ma, Dehua and Tang, Xiangru and Liu, Minghao and Yang, Jian and Liu, Jiaheng and Jiang, Yuchen Eleanor and others},
  journal={arXiv preprint arXiv:2506.12928},
  year={2025}
}

@article{yao2023tree,
  title={Tree of thoughts: Deliberate problem solving with large language models},
  author={Yao, Shunyu and Yu, Dian and Zhao, Jeffrey and Shafran, Izhak and Griffiths, Tom and Cao, Yuan and Narasimhan, Karthik},
  journal={Advances in neural information processing systems},
  volume={36},
  pages={11809--11822},
  year={2023}
}

@article{liu2025ea4llm,
  title={EA4LLM: A Gradient-Free Approach to Large Language Model Optimization via Evolutionary Algorithms},
  author={Liu, WenTao and Song, Siyu and Hao, Hao and Zhou, Aimin},
  journal={arXiv preprint arXiv:2510.10603},
  year={2025}
}

@article{korotyshova2025essa,
  title={ESSA: Evolutionary Strategies for Scalable Alignment},
  author={Korotyshova, Daria and Shaposhnikov, Boris and Malakhov, Alexey and Khokhulin, Alexey and Surnachev, Nikita and Ovcharenko, Kirill and Bredis, George and Gorbatovski, Alexey and Sinii, Viacheslav and Gavrilov, Daniil},
  journal={arXiv preprint arXiv:2507.04453},
  year={2025}
}

@article{fu2026reasoning,
  title={Reasoning Resides in Layers: Restoring Temporal Reasoning in Video-Language Models with Layer-Selective Merging},
  author={Fu, Zihang and Wang, Haonan and Kang, Jian and Kawaguchi, Kenji and Wu, Jiaying},
  journal={arXiv preprint arXiv:2604.11399},
  year={2026}
}

@inproceedings{zhao2025second,
  title={Second-order fine-tuning without pain for llms: A hessian informed zeroth-order optimizer},
  author={Zhao, Yanjun and Dang, Sizhe and Ye, Haishan and Dai, Guang and Qian, Yi and Tsang, Ivor},
  booktitle={International Conference on Learning Representations},
  volume={2025},
  pages={43496--43520},
  year={2025}
}

@article{malladi2023fine,
  title={Fine-tuning language models with just forward passes},
  author={Malladi, Sadhika and Gao, Tianyu and Nichani, Eshaan and Damian, Alex and Lee, Jason D and Chen, Danqi and Arora, Sanjeev},
  journal={Advances in Neural Information Processing Systems},
  volume={36},
  pages={53038--53075},
  year={2023}
}

@article{gu2026hyper,
  title={Hyper-ES: Effective Evolution Strategies for LLM Reasoning via Descent Direction Merging},
  author={Gu, Yu and Zheng, Zhi and Ba, Yunpeng and Tong, Xialiang and Yuan, Mingxuan and Wang, Zhenkun},
  journal={arXiv preprint arXiv:2608.05541},
  year={2026}
}

@article{zhang2026reasoning,
  title={From reasoning to agentic: Credit assignment in reinforcement learning for large language models},
  author={Zhang, Chenchen},
  journal={arXiv preprint arXiv:2604.09459},
  year={2026}
}

@article{tajwar2026maximum,
  title={Maximum likelihood reinforcement learning},
  author={Tajwar, Fahim and Zeng, Guanning and Zhou, Yueer and Song, Yuda and Arora, Daman and Jiang, Yiding and Schneider, Jeff and Salakhutdinov, Ruslan and Feng, Haiwen and Zanette, Andrea},
  journal={arXiv preprint arXiv:2602.02710},
  year={2026}
}

@article{shao2024deepseekmath,
  title={Deepseekmath: Pushing the limits of mathematical reasoning in open language models},
  author={Shao, Zhihong and Wang, Peiyi and Zhu, Qihao and Xu, Runxin and Song, Junxiao and Bi, Xiao and Zhang, Haowei and Zhang, Mingchuan and Li, YK and Wu, Y and others},
  journal={arXiv preprint arXiv:2402.03300},
  year={2024}
}

@article{liu2025understanding,
  title={Understanding r1-zero-like training: A critical perspective},
  author={Liu, Zichen and Chen, Changyu and Li, Wenjun and Qi, Penghui and Pang, Tianyu and Du, Chao and Lee, Wee Sun and Lin, Min},
  journal={arXiv preprint arXiv:2503.20783},
  year={2025}
}

@article{chu2025sft,
  title={Sft memorizes, rl generalizes: A comparative study of foundation model post-training},
  author={Chu, Tianzhe and Zhai, Yuexiang and Yang, Jihan and Tong, Shengbang and Xie, Saining and Schuurmans, Dale and Le, Quoc V and Levine, Sergey and Ma, Yi},
  journal={arXiv preprint arXiv:2501.17161},
  year={2025}
}

@article{schulman2017proximal,
  title={Proximal policy optimization algorithms},
  author={Schulman, John and Wolski, Filip and Dhariwal, Prafulla and Radford, Alec and Klimov, Oleg},
  journal={arXiv preprint arXiv:1707.06347},
  year={2017}
}

@article{guo2025deepseek,
  title={Deepseek-r1: Incentivizing reasoning capability in llms via reinforcement learning},
  author={Guo, Daya and Yang, Dejian and Zhang, Haowei and Song, Junxiao and Zhang, Ruoyu and Xu, Runxin and Zhu, Qihao and Ma, Shirong and Wang, Peiyi and Bi, Xiao and others},
  journal={arXiv preprint arXiv:2501.12948},
  year={2025}
}

@article{yue2025does,
  title={Does reinforcement learning really incentivize reasoning capacity in llms beyond the base model?},
  author={Yue, Yang and Chen, Zhiqi and Lu, Rui and Zhao, Andrew and Wang, Zhaokai and Song, Shiji and Huang, Gao},
  journal={arXiv preprint arXiv:2504.13837},
  year={2025}
}

@article{yu2025dapo,
  title={Dapo: An open-source llm reinforcement learning system at scale},
  author={Yu, Qiying and Zhang, Zheng and Zhu, Ruofei and Yuan, Yufeng and Zuo, Xiaochen and Yue, Yu and Dai, Weinan and Fan, Tiantian and Liu, Gaohong and Liu, Lingjun and others},
  journal={arXiv preprint arXiv:2503.14476},
  year={2025}
}

@article{xu2025mem,
  title={A-mem: Agentic memory for llm agents},
  author={Xu, Wujiang and Liang, Zujie and Mei, Kai and Gao, Hang and Tan, Juntao and Zhang, Yongfeng},
  journal={arXiv preprint arXiv:2502.12110},
  year={2025}
}

@article{wu2025human,
  title={From human memory to ai memory: A survey on memory mechanisms in the era of llms},
  author={Wu, Yaxiong and Liang, Sheng and Zhang, Chen and Wang, Yichao and Zhang, Yongyue and Guo, Huifeng and Tang, Ruiming and Liu, Yong},
  journal={arXiv preprint arXiv:2504.15965},
  year={2025}
}

@article{zheng2025mctsahd,
  title = {Monte Carlo Tree Search for Comprehensive Exploration in LLM-Based Automatic Heuristic Design},
  author = {Zheng, Zhi and Xie, Zhuoliang and Wang, Zhenkun and Hooi, Bryan},
  journal = {arXiv preprint arXiv:2501.08603},
  year = {2025}
}

@inproceedings{qi2025webrl,
  title={Webrl: Training llm web agents via self-evolving online curriculum reinforcement learning},
  author={Qi, Zehan and Liu, Xiao and Iong, Iat Long and Lai, Hanyu and Sun, Xueqiao and Sun, Jiadai and Yang, Xinyue and Yang, Yu and Yao, Shuntian and Xu, Wei and others},
  booktitle={International Conference on Learning Representations},
  volume={2025},
  pages={79791--79821},
  year={2025}
}

@inproceedings{zhou2024webarena,
  title={Webarena: A realistic web environment for building autonomous agents},
  author={Zhou, Shuyan and Xu, Frank F and Zhu, Hao and Zhou, Xuhui and Lo, Robert and Sridhar, Abishek and Cheng, Xianyi and Ou, Tianyue and Bisk, Yonatan and Fried, Daniel and others},
  booktitle={International Conference on Learning Representations},
  volume={2024},
  pages={15585--15606},
  year={2024}
}

@article{liu2024evolution,
  title={Evolution of heuristics: Towards efficient automatic algorithm design using large language model},
  author={Liu, Fei and Tong, Xialiang and Yuan, Mingxuan and Lin, Xi and Luo, Fu and Wang, Zhenkun and Lu, Zhichao and Zhang, Qingfu},
  journal={arXiv preprint arXiv:2401.02051},
  year={2024}
}

@article{yao2022react,
  title={React: Synergizing reasoning and acting in language models},
  author={Yao, Shunyu and Zhao, Jeffrey and Yu, Dian and Du, Nan and Shafran, Izhak and Narasimhan, Karthik and Cao, Yuan},
  journal={arXiv preprint arXiv:2210.03629},
  year={2022}
}

@article{shinn2023reflexion,
  title={Reflexion: Language agents with verbal reinforcement learning},
  author={Shinn, Noah and Cassano, Federico and Gopinath, Ashwin and Narasimhan, Karthik and Yao, Shunyu},
  journal={Advances in neural information processing systems},
  volume={36},
  pages={8634--8652},
  year={2023}
}

@article{yang2026skillopt,
  title={Skillopt: Executive strategy for self-evolving agent skills},
  author={Yang, Yifan and Gong, Ziyang and Huang, Weiquan and Yang, Qihao and Zhou, Ziwei and Huang, Zisu and Li, Yan and Gao, Xuemei and Dai, Qi and Liu, Bei and others},
  journal={arXiv preprint arXiv:2605.23904},
  year={2026}
}

@article{ni2026trace2skill,
  title={Trace2skill: Distill trajectory-local lessons into transferable agent skills},
  author={Ni, Jingwei and Liu, Yihao and Liu, Xinpeng and Sun, Yutao and Zhou, Mengyu and Cheng, Pengyu and Wang, Dexin and Zhao, Erchao and Jiang, Xiaoxi and Jiang, Guanjun},
  journal={arXiv preprint arXiv:2603.25158},
  year={2026}
}

@article{wang2023voyager,
  title={Voyager: An open-ended embodied agent with large language models},
  author={Wang, Guanzhi and Xie, Yuqi and Jiang, Yunfan and Mandlekar, Ajay and Xiao, Chaowei and Zhu, Yuke and Fan, Linxi and Anandkumar, Anima},
  journal={arXiv preprint arXiv:2305.16291},
  year={2023}
}

@article{yang2025qwen3,
  title={Qwen3 Technical Report},
  author={Yang, An and Li, Anfeng and Yang, Baosong and Zhang, Beichen and Hui, Binyuan and Zheng, Bo and Yu, Bowen and others},
  journal={arXiv preprint arXiv:2505.09388},
  year={2025}
}

@article{geminiteam2025gemini25,
  title={Gemini 2.5: Pushing the Frontier with Advanced Reasoning, Multimodality, Long Context, and Next Generation Agentic Capabilities},
  author={{Gemini Team}},
  journal={arXiv preprint arXiv:2507.06261},
  year={2025}
}

@article{yang2024sweagent,
  title={{SWE-agent}: Agent--Computer Interfaces Enable Automated Software Engineering},
  author={Yang, John and Jimenez, Carlos E. and Wettig, Alexander and Lieret, Kilian and Yao, Shunyu and Narasimhan, Karthik and Press, Ofir},
  journal={arXiv preprint arXiv:2405.15793},
  year={2024}
}

@article{feng2026group,
  title={Group-in-group policy optimization for llm agent training},
  author={Feng, Lang and Xue, Zhenghai and Liu, Tingcong and An, Bo},
  journal={Advances in Neural Information Processing Systems},
  volume={38},
  pages={46375--46408},
  year={2026}
}

@inproceedings{li2026turn,
  title={Turn-ppo: Turn-level advantage estimation with ppo for improved multi-turn rl in agentic llms},
  author={Li, Junbo and Zhou, Peng and Meng, Rui and Vadera, Meet P and Li, Lihong and Li, Yang},
  booktitle={Findings of the Association for Computational Linguistics: EACL 2026},
  pages={6227--6243},
  year={2026}
}

@article{hou2026single,
  title={Single-Rollout Asynchronous Optimization for Agentic Reinforcement Learning},
  author={Hou, Zhenyu and Li, Yujiang and Tang, Jie and Dong, Yuxiao},
  journal={arXiv preprint arXiv:2607.07508},
  year={2026}
}

@article{sun2026essam,
  title={ESSAM: A Novel Competitive Evolution Strategies Approach to Reinforcement Learning for Memory Efficient LLMs Fine-Tuning},
  author={Sun, Zhishen and Dang, Sizhe and Dai, Guang and Ye, Haishan},
  journal={arXiv preprint arXiv:2602.01003},
  year={2026}
}

@article{song2026survey,
  title={A survey of on-policy distillation for large language models},
  author={Song, Mingyang and Zheng, Mao},
  journal={arXiv preprint arXiv:2604.00626},
  year={2026}
}

@article{sarkar2025evolution,
  title={Evolution strategies at the hyperscale},
  author={Sarkar, Bidipta and Fellows, Mattie and Duque, Juan Agustin and Letcher, Alistair and Villares, Antonio Le{\'o}n and Sims, Anya and Wibault, Clarisse and Samsonov, Dmitry and Cope, Dylan and Liesen, Jarek and others},
  journal={arXiv preprint arXiv:2511.16652},
  year={2025}
}

@article{kim2026longhorizon,
  title={On Training Large Language Models for Long-Horizon Tasks: An Empirical Study of Horizon Length},
  author={Kim, Sunghwan and Cho, Junhee and Kwak, Beong-woo and Kwon, Taeyoon and Wang, Liang and Yang, Nan and Zhang, Xingxing and Wei, Furu and Yeo, Jinyoung},
  journal={arXiv preprint arXiv:2605.02572},
  year={2026}
}

@article{du2026survey,
  title={A survey on the optimization of large language model-based agents},
  author={Du, Shangheng and Zhao, Jiabao and Shi, Jinxin and Xie, Zhentao and Jiang, Xin and Bai, Yanhong and He, Liang},
  journal={ACM Computing Surveys},
  volume={58},
  number={9},
  pages={1--37},
  year={2026},
  publisher={ACM New York, NY}
}

@article{ma2024spreadsheetbench,
  title={Spreadsheetbench: Towards challenging real world spreadsheet manipulation},
  author={Ma, Zeyao and Zhang, Bohan and Zhang, Jing and Yu, Jifan and Zhang, Xiaokang and Zhang, Xiaohan and Luo, Sijia and Wang, Xi and Tang, Jie},
  journal={Advances in Neural Information Processing Systems},
  volume={37},
  pages={94871--94908},
  year={2024}
}

@article{wang2024openhands,
  title={{OpenHands}: An Open Platform for {AI} Software Developers as Generalist Agents},
  author={Wang, Xingyao and Li, Boxuan and Song, Yufan and Xu, Frank F. and Tang, Xiangru and Zhuge, Mingchen and Pan, Jiayi and others},
  journal={arXiv preprint arXiv:2407.16741},
  year={2024}
}

@inproceedings{qiu2026esscale,
  title={Evolution Strategies at Scale: {LLM} Fine-Tuning Beyond Reinforcement Learning},
  author={Qiu, Xin and Gan, Yulu and Hayes, Conor F. and Liang, Qiyao and Xu, Yinggan and Dailey, Roberto and Meyerson, Elliot and Hodjat, Babak and Miikkulainen, Risto},
  booktitle={International Conference on Machine Learning},
  year={2026}
}

\clearpage
\beginappendix

\section*{Appendix Contents}
\vspace{15pt}
\label{app:contents}

\setlength{\parindent}{0pt}
\setlength{\parskip}{2pt}

\newcommand{\appentry}[2]{%
    \noindent
    \hyperref[#1]{#2}
    \dotfill
    \pageref{#1}\par
}

\newcommand{\appsubentry}[2]{%
    \noindent\hspace*{1.5em}
    \hyperref[#1]{#2}
    \dotfill
    \pageref{#1}\par
}

\appentry{app:limitation-futurework}
    {\textbf{A.\ Limitations \& Future Work}}
\appsubentry{app:limitation}
    {A.1.\ Limitations}
\appsubentry{app:futurework}
    {A.2.\ Future Work}

\vspace{0.35em}

\appentry{app:related-work}
    {\textbf{B.\ Additional Related Work}}
\appsubentry{app:related-work-rl}
    {B.1.\ Agentic Reinforcement Learning}
\appsubentry{app:related-work-es}
    {B.2.\ Evolution Strategies}
\appsubentry{app:related-work-skills}
    {B.3.\ Skill and Memory-Based Self-Improvement}

\vspace{0.35em}

\appentry{app:theory}
    {\textbf{C.\ Theoretical Analysis and Computational Accounting}}
\appsubentry{app:es-scalar-gradient}
    {C.1.\ Derivation of the Scalar-Score ES Gradient}
\appsubentry{app:es-smoothing-bias}
    {C.2.\ Proof of the Gaussian-Smoothing Bias Expansion}
\appsubentry{app:horizon-variance}
    {C.3.\ Long-Horizon Credit Assignment}
\appsubentry{app:ahd-details}
    {C.4.\ AHD as Heuristic-Space Optimization}
\appsubentry{app:flops-accounting}
    {C.5.\ Training FLOPs Calculation}

\vspace{0.35em}

\appentry{app:settings}
    {\textbf{D.\ Experimental Settings and Additional Results}}
\appsubentry{app:sudoku-setting}
    {D.1.\ Sudoku}
\appsubentry{app:math-docvqa-implementation}
    {D.2.\ Math Reasoning}
\appsubentry{app:docvqa-implementation}
    {D.3.\ DocVQA}
\appsubentry{app:webarena-details}
    {D.4.\ WebArena-Lite}
\appsubentry{app:ahd-setting}
    {D.5.\ Automatic Heuristic Design}
\appsubentry{app:hyperparameters}
    {D.6.\ Cross-Setting Hyperparameter Summary}

\vspace{0.35em}

\appentry{app:environment-prompts}
    {\textbf{E.\ Prompts and Skills}}
\appsubentry{app:prompt-sudoku}
    {E.1.\ Sudoku}
\appsubentry{app:prompt-math}
    {E.2.\ Math Reasoning}
\appsubentry{app:prompt-docvqa}
    {E.3.\ DocVQA}
\appsubentry{app:prompt-webarena}
    {E.4.\ WebArena-Lite}
\appsubentry{app:prompt-ahd}
    {E.5.\ Automatic Heuristic Design}

\vspace{0.35em}

\appentry{app:license}
    {\textbf{F.\ Licensing and External Components}}
\appsubentry{app:license-scope}
    {F.1.\ Licensing Scope}
\appsubentry{app:upstream-sources}
    {F.2.\ Upstream Repositories Used by the Released Workflow}

\clearpage

\section{Limitations \& Future Work}
\label{app:limitation-futurework}

\subsection{Limitations}
\label{app:limitation}

\paragraph{Introducing new hyperparameters.}
Compared to Agentic RL, Agentic ESOpt introduces hyperparameters for the perturbation radius $\sigma$. Their optimal values can depend on LLMs, reward distribution, and environment, which may introduce additional tuning cost compared with using a fixed training recipe. However, as summarized in Appendix \ref{app:hyperparameters}, we use relatively consistent configurations across tasks and LLMs, taking $\sigma_0\approx1e-3$ and $\alpha\approx5e-4$ across 5 experiments. This shows that this set of parameters is highly generic. 
We will take the automatic schedule for $\sigma$ as future work.

\paragraph{Concerns in Expensive Evaluation Scenarios.}
Agentic ESOpt trades backpropagation cost for a larger number of independent environment evaluations: under matched model FLOPs, it can afford more trajectories because each trajectory requires only a forward pass. This trade-off may become less favorable when environment evaluation itself is \textit{extremely} expensive, in which case rollout cost rather than model computation can dominate the total budget.

\paragraph{Continual learning remains unclear.}
The current experiments establish in-setting adaptation. As mentioned in \citet{hoy2026matching}, unlike GRPO, which usually does sparse updates, ES will lead to a random walk in directions irrelevant to the optimization objective, raising concerns about continual learning. However, although the ES update is dense in the strict sense, its magnitude is still highly concentrated. As shown in Table \ref{tab:es_update_magnitude}, Agentic ESOpt updates can still maintain sparsity within a certain threshold: $96.26\%$ of the final parameter updates remain within the perturbation scale $\sigma$, and $99.42\%$ are smaller than $2.0\times10^{-3}$.

\begin{table}[H]
\centering
\caption{Distribution of parameter-update magnitudes after Agentic ESOpt for WebArena on Qwen3.5-27B with perturbation scale $\sigma_t=1.5\times10^{-3}$.}
\label{tab:es_update_magnitude}
\small
\begin{tabular}{lc}
\toprule
Update-magnitude threshold & Fraction of parameters \\
\midrule
$|\Delta\theta| \le 1.0\times10^{-3}$ & $83.68\%$ \\
$|\Delta\theta| \le 1.5\times10^{-3}\;(=\sigma)$ & $96.26\%$ \\
$|\Delta\theta| \le 2.0\times10^{-3}$ & $99.42\%$ \\
\bottomrule
\end{tabular}
\end{table}

\subsection{Future Work}
\label{app:futurework}

\paragraph{Scaling to advanced LLMs and population scaling laws.}
The inference-level memory requirement of Agentic ESOpt provides a direct path toward full-parameter adaptation of substantially larger LLM agents. Our 4B/9B results further suggest that stronger backbones may require fewer perturbation directions, motivating a systematic study of population scaling with model capability. Moreover, as future work, scaling to advanced LLMs \citep{team2026kimi,guo2025deepseek,geminiteam2025gemini25} may also enable online adaptation in industrial scenarios involving proprietary tools, APIs, and complex workflows.

\paragraph{Quantized ES optimization.}
Quantization could further extend the scalability of Agentic ESOpt, but applying dense parameter perturbations directly to quantized weights requires scale-aware noise generation and numerically stable updates \citep{sarkar2025evolution}. Developing ES-specific infra for quantization-compatible perturbation and seed-replay mechanisms is therefore an important systems direction for large-scale ES fine-tuning.


\paragraph{Tighter skill--parameter co-evolution.}
Our experiments demonstrate both shared-rollout skill distillation with Trace2Skill and online parameter adaptation inside EoH. A natural next step is fully coupled multi-step skill optimization \citep{yang2026skillopt,shen2026skillopt} in which the external context $c_t$ and model parameters $\theta_t$ evolve on comparable timescales, allowing skill- and parameter-space updates to continually reshape the data distribution seen by one another.

\newpage
\section{Additional Related Work}
\label{app:related-work}

\subsection{Agentic Reinforcement Learning}
\label{app:related-work-rl}
Reinforcement learning provides the standard route for optimizing model parameters from verifiable task feedback \citep{shao2024deepseekmath,liu2025understanding,yu2025dapo,zheng2025soft,tajwar2026maximum}. These methods have demonstrated strong performance on single-turn LLM reasoning.

The recent agentic scenarios bring new challenges about long-context and long-horizon credit assignment \citep{kim2026longhorizon,liu2025gem}, and there are studies that try to apply RL techniques to agentic scenarios \citep{qi2025webrl,feng2026group,wang2026milestone}. Some work highlights better rollout in a GRPO-style baseline \citep{feng2026group,he2026hierarchy}, and there are also works training critic network and use PPO for better credit assignment \citep{hou2026single}. These methods can improve the credit assignment ability of RL, but we believe that considering Agentic ESOpt should be seen as a more direct solution. Gradient-based agentic RL usually requires a training stack that stores rollouts, estimates token-level policy gradients, manages reference models or KL constraints, and consumes substantial memory. Agentic ESOpt maintains the same high-level goal of parameter adaptation, but uses scalar black-box fitness so that the environment trajectory does not need to be differentiable or retained for backpropagation.

\subsection{Evolution Strategies for single-turn LLM Fine-Tuning.}
\label{app:related-work-es}

As advanced LLMs are equipped with larger and larger parameters, fine-tuning them on devices with moderate scales becomes unaffordable, and ES have emerged as efficient gradient-free optimizers for fine-tuning single-turn LLMs. \citet{qiu2026esscale} demonstrated that ES can scale to full-parameter LLM fine-tuning, matching or exceeding GRPO in sample efficiency and training stability. This momentum has driven ES-based methods into pre-training \citep{liu2025ea4llm}, few-shot adaptation \citep{gan2026neural,korotyshova2025essa,gu2026hyper,fu2026reasoning}, and memory-efficient tuning via sharpness-aware mechanisms~\citep{sun2026essam}. In these methods, ES achieves higher GPU memory efficiency but lower performance than RL methods. However, we argue that the structural advantages of ES are actually pronounced in fine-tuning long-horizon agents, where Agentic ESOpt can become preferable to Agentic RL, instead of merely serving as a cheaper alternative.

As a similar category of gradient-free LLM fine-tuning, zeroth-order methods \citep{malladi2023fine,zhao2025second} focus on SFT, making them hard to apply to single-turn and multi-turn reasoning scenarios.

\subsection{Skill and Memory-Based Self-Improvement and Test-Time Compute}
\label{app:related-work-skills}

A broad class of self-improving agents improves behavior by optimizing external, non-parametric components while keeping the underlying LLM fixed. Reflexion converts failed trajectories into verbal feedback that conditions later trials \citep{shinn2023reflexion}; Voyager accumulates executable skills for open-ended embodied exploration \citep{wang2023voyager}; and recent memory systems organize and retrieve past experience over long-horizon interactions \citep{xu2025mem,wu2025human}. More recent methods such as SkillOpt and Trace2Skill make this optimization explicit by treating the skill document itself as an optimizable object \citep{yang2026skillopt,ni2026trace2skill}.

Test-time compute \citep{snell2024scaling,zhu2025scaling} methods follow a related principle: they improve task performance through additional search \citep{yao2023tree}, sampling, reflection \citep{madaan2023self}, or evolution \citep{liu2024evolution} at inference time, while typically leaving the model parameters unchanged. These approaches are lightweight and flexible, but their optimization remains constrained by the behaviors accessible to the frozen policy.

Agentic ESOpt complements these methods by introducing parameter adaptation as an additional optimization dimension. Its black-box update can reuse the same trajectory-level feedback already collected for skill, memory, or test-time search, allowing external-space optimization and parameter-space optimization to proceed together. This enables Agentic ESOpt to strengthen existing self-improvement and test-time compute pipelines without replacing their original search or skill-update mechanisms.

\newpage
\section{Theoretical Analysis and Computational Accounting}
\label{app:theory}

\subsection{Derivation of the Scalar-Score ES Gradient}
\label{app:es-scalar-gradient}

Let $d$ be the number of LLM parameters and let
\[
    q_\sigma(\vartheta\mid\theta)
    =
    \frac{1}{(2\pi\sigma^2)^{d/2}}
    \exp\!\left(
        -\frac{\lVert\vartheta-\theta\rVert_2^2}{2\sigma^2}
    \right)
\]
denotes the density of a Gaussian perturbation centered at $\theta$. Equivalently,
$\vartheta=\theta+\sigma\boldsymbol{\epsilon}$ with $\boldsymbol{\epsilon}\sim\mathcal N(0,I)$. The smoothed objective can then be written as
\[
    J_\sigma(\theta;c)
    =
    \int J(\vartheta;c)q_\sigma(\vartheta\mid\theta)\,d\vartheta.
\]
Assuming differentiation and integration can be interchanged, the log-derivative identity gives
\begin{align}
    \nabla_\theta J_\sigma(\theta;c)
    &=
    \int J(\vartheta;c)
    \nabla_\theta q_\sigma(\vartheta\mid\theta)\,d\vartheta \\
    &=
    \mathbb E_{\vartheta\sim q_\sigma(\cdot\mid\theta)}
    \left[
        J(\vartheta;c)
        \nabla_\theta\log q_\sigma(\vartheta\mid\theta)
    \right].
\end{align}
For the Gaussian density,
\[
    \nabla_\theta\log q_\sigma(\vartheta\mid\theta)
    =
    \frac{\vartheta-\theta}{\sigma^2}
    =
    \frac{\boldsymbol{\epsilon}}{\sigma},
\]
and hence
\[
    \nabla_\theta J_\sigma(\theta;c)
    =
    \frac{1}{\sigma}
    \mathbb E_{\boldsymbol{\epsilon}}
    \left[J(\theta+\sigma\boldsymbol{\epsilon};c)\boldsymbol{\epsilon}\right].
\]

Now expand the inner objective using its trajectory definition:
\[
    J(\theta+\sigma\boldsymbol{\epsilon};c)
    =
    \mathbb E_{\boldsymbol{\tau}\sim\pi_{\theta+\sigma\boldsymbol{\epsilon}}(\cdot\mid c)}
    \left[R(\boldsymbol{\tau})\right].
\]
The law of total expectation therefore yields
\[
    \nabla_\theta J_\sigma(\theta;c)
    =
    \frac{1}{\sigma}
    \mathbb E_{\substack{
        \boldsymbol{\epsilon}\sim\mathcal N(0,I)\\
        \boldsymbol{\tau}\sim\pi_{\theta+\sigma\boldsymbol{\epsilon}}(\cdot\mid c)
    }}
    \left[R(\boldsymbol{\tau})\boldsymbol{\epsilon}\right].
\]
Thus, for independent perturbations $\boldsymbol{\epsilon}_i$ and conditional rollouts
$\boldsymbol{\tau}_i\sim\pi_{\theta+\sigma\boldsymbol{\epsilon}_i}(\cdot\mid c)$, the canonical Monte Carlo estimator is
\[
    \widehat g_{\mathrm{ES}}
    =
    \frac{1}{G\sigma}
    \sum_{i=1}^{G}R(\boldsymbol{\tau}_i)\boldsymbol{\epsilon}_i,
    \qquad
    \mathbb E[\widehat g_{\mathrm{ES}}]
    =
    \nabla_\theta J_\sigma(\theta;c).
\]
Because $\mathbb E[\boldsymbol{\epsilon}]=0$, any perturbation-independent scalar baseline $b$ may replace $R(\boldsymbol{\tau}_i)$ by $R(\boldsymbol{\tau}_i)-b$ without changing this expectation. Crucially, the estimator uses the environment only to produce the scalar $R(\boldsymbol{\tau}_i)$; no derivative of the sampled actions, environment transitions, or reward function is required. The within-population reward standardization used by Agentic ESOpt is the practical finite-sample variant described in the main text.

\subsection{Proof of the Gaussian-Smoothing Bias Expansion}
\label{app:es-smoothing-bias}

This subsection proves \Cref{lem:es-bias}. We suppress the fixed external context $c$ for readability and write $J(\theta)$ for the corresponding task objective.

\begin{proof}[Proof of \Cref{lem:es-bias}]
Define the Gaussian-smoothed objective
\[
    J_\sigma(\theta)
    =
    \mathbb E_{\boldsymbol{\epsilon}\sim\mathcal N(0,I)}
    \left[J(\theta+\sigma\boldsymbol{\epsilon})\right].
\]
By the Gaussian score-function identity, or equivalently integration by parts,
\[
    \nabla_\theta J_\sigma(\theta)
    =
    \frac{1}{\sigma}
    \mathbb E_{\boldsymbol{\epsilon}}
    \left[J(\theta+\sigma\boldsymbol{\epsilon})\boldsymbol{\epsilon}\right].
\]
Thus, the population ES estimator is unbiased for $\nabla_\theta J_\sigma(\theta)$.

To compare the smoothed and original objectives, expand $J(\theta+\sigma\boldsymbol{\epsilon})$ around $\theta$. Odd moments of a centered Gaussian vanish, and $\mathbb E[\boldsymbol{\epsilon}\boldsymbol{\epsilon}^{\top}]=I$, giving
\[
    J_\sigma(\theta)
    =
    J(\theta)
    +
    \frac{\sigma^2}{2}
    \operatorname{Tr}\!\left(\nabla_\theta^2 J(\theta)\right)
    +
    O(\sigma^4)
    =
    J(\theta)
    +
    \frac{\sigma^2}{2}\Delta_\theta J(\theta)
    +
    O(\sigma^4).
\]
Differentiating the expansion yields
\[
    \nabla_\theta J_\sigma(\theta)
    =
    \nabla_\theta J(\theta)
    +
    \frac{\sigma^2}{2}
    \nabla_\theta\Delta_\theta J(\theta)
    +
    O(\sigma^4).
\]
Therefore, the leading smoothing bias relative to $\nabla_\theta J(\theta)$ is proportional to $\sigma^2$, as stated in \Cref{lem:es-bias}.
\end{proof}

\subsection{Long-Horizon Credit Assignment}
\label{app:horizon-variance}

Let $u_t=\nabla_\theta\log\pi_\theta(a_t\mid s_t)$ be the policy score at turn $t$, and let $\widetilde R=R-b$ denote the baseline-centered terminal return. The trajectory-level policy-gradient estimator is
\[
    \widehat g_{\mathrm{PG}}
    =
    \widetilde R\sum_{t=1}^{H}u_t.
\]
Assume, as a scaling approximation, that (i) the return has weak correlation with any individual action score, (ii) score terms at different turns are approximately uncorrelated, and (iii) their marginal covariance is comparable across turns, $\operatorname{Cov}(u_t)\approx\Sigma_u$. Then
\begin{align}
    \operatorname{Cov}(\widehat g_{\mathrm{PG}})
    &\approx
    \operatorname{Var}(\widetilde R)
    \operatorname{Cov}\!\left(\sum_{t=1}^{H}u_t\right) \\
    &=
    \operatorname{Var}(\widetilde R)
    \left[
        \sum_{t=1}^{H}\operatorname{Cov}(u_t)
        +
        \sum_{s\neq t}\operatorname{Cov}(u_s,u_t)
    \right] \\
    &\approx
    H\,\operatorname{Var}(\widetilde R)\Sigma_u.
\end{align}
Thus, under these assumptions, the policy-score contribution to the estimator variance grows approximately linearly with the realized horizon $H$.

For Agentic ESOpt, one perturbation $\boldsymbol{\epsilon}\sim\mathcal N(0,I)$ is sampled for the complete trajectory, giving
\[
    \widehat g_{\mathrm{ES}}
    =
    \widetilde R(\theta+\sigma\boldsymbol{\epsilon})
    \frac{\boldsymbol{\epsilon}}{\sigma}.
\]
Under the analogous weak-correlation approximation,
\[
    \operatorname{Cov}(\widehat g_{\mathrm{ES}})
    \approx
    \operatorname{Var}(\widetilde R)
    \operatorname{Cov}\!\left(
        \frac{\boldsymbol{\epsilon}}{\sigma}
    \right),
\]
whose parameter-score factor contains no sum over turns. The comparison therefore isolates an explicit horizon-dependent source of variance in action-space policy gradients that is absent from the ES parameter score.

\paragraph{Scope of the scaling comparison.}
The above argument concerns the \textbf{horizon-dependent score structure} of the two estimators. Agentic ESOpt can still become harder to optimize as the horizon grows because the return distribution may become sparser or less discriminative. Its estimator quality may also depend on the parameter dimension $d$, perturbation radius $\sigma$, population size $G$, and the local geometry of the smoothed objective. Accordingly, the analysis does not assert that the total variance of ES is independent of $H$, nor that ES universally dominates policy gradients. The comparative prediction is that, when other sources of difficulty are approximately comparable, increasing the effective horizon introduces an additional accumulation of action-score terms for policy gradients but not for the ES parameter score. This follows the scaling perspective of \citet[Sec.~3.1]{salimans2017evolution}.

\paragraph{From realized horizon $H$ to minimum successful horizon $H^*$.}
The theoretical quantity above is the realized trajectory length $H$. In the Sudoku experiments, we instead control
\[
H^*(x)
=
\min_{\boldsymbol{\tau}:R(\boldsymbol{\tau})=1}
|\boldsymbol{\tau}|,
\]
which gives a task-dependent lower bound on the horizon of every successful trajectory. Because each valid Sudoku action fills at most one masked cell, masking 5, 10, and 15 cells yields $H^*=5$, $10$, and $15$, respectively. Failed, invalid, or unproductive actions can make the realized horizon substantially larger than $H^*$.

Increasing $H^*$ in this natural task family also changes aspects of task difficulty, such as the amount of missing information and the probability of completing all required actions correctly. The Sudoku study is therefore intended to test the practical prediction that the relative behavior of parameter-space and action-space attribution changes as the minimum interaction requirement grows, instead of identifying a pure delay effect in an otherwise identical MDP. The executed-turn analysis in \Cref{fig:sudoku-turn-diagnostics} complements $H^*$ by directly measuring how the realized horizon evolves during optimization.

\subsection{AHD as Heuristic-Space Optimization}
\label{app:ahd-details}

Automatic heuristic design is not a language-generation task in the usual sense: the generated text matters only through the algorithmic behavior it induces. We therefore view AHD as optimization over a heuristic space \citep{zheng2025mctsahd}. Let $\mathcal{X}$ be a distribution over problem instances and let $h \in \mathcal{H}$ denote a heuristic sampled from the LLM-induced proposal distribution. A solver $A_h$ uses $h$ to construct a solution $y=A_h(x)$ for instance $x$. The idealized objective is
\[
    \min_{h \in \mathcal{H}}
    \mathbb{E}_{x \sim \mathcal{X}}
    \left[
        S(y,x)
    \right],
    \qquad
    y=A_h(x),
\]
where $S$ is a signed score chosen so that larger is better. This paper covers 5 NP-hard combinatorial optimization problems, including the Traveling Salesman Problem (TSP), which seeks the shortest tour visiting all nodes exactly once; the Capacitated Vehicle Routing Problem (CVRP), which minimizes the total routing cost for capacity-constrained vehicles serving customer demands; the 0-1 Knapsack Problem (KP), which maximizes the total value of selected items under a capacity constraint; the Bin-Packing Problem (BPP), which minimizes the number of fixed-capacity bins required to pack all items; and the Admissible Set Problem (ASP), which seeks the largest subset satisfying a predefined set of feasibility constraints.

For minimization problems such as TSP, CVRP, and BPP, $S$ is the negative objective value; for maximization problems such as KP and ASP, $S$ is the original objective value. The relevant object is therefore not the textual form of $h$, but the induced mapping from problem states to algorithmic decisions.

In constructive AHD, $A_h$ is a sequential decision rule. At a partial solution state $s_t$, the heuristic induces a preference over feasible actions,
\[
    a_t
    \in
    \arg\max_{a \in \mathcal{A}(s_t)}
    h(s_t,a),
    \qquad
    s_{t+1}=T(s_t,a_t),
\]
until a complete solution is obtained. TSP evaluates the length of the constructed tour, KP evaluates the value of the selected item set under a capacity constraint, and ASP evaluates the cardinality of the constructed admissible set. These tasks test whether Agentic ESOpt can alter the LLM proposal distribution toward heuristics whose local decisions accumulate into better global solutions.

In ACO-style AHD, $A_h$ is stochastic. The heuristic defines an energy or desirability function over solution components, which biases a sampling distribution of the form
\[
    p_h(a_t \mid s_t)
    \propto
    \exp\left(\beta h(s_t,a_t)\right)
    \cdot
    \Phi_t(s_t,a_t),
\]
where $\Phi_t$ denotes the non-learned search state, such as pheromone or feasibility terms, and $\beta$ controls the strength of the heuristic bias. The objective is the expected quality of solutions sampled by this stochastic solver. Agentic ESOpt + EoH preserves this outer optimization problem and changes only the model-induced distribution over $h$.

\newpage
\subsection{Training FLOPs Calculation}
\label{app:flops-accounting}

We follow the FLOPs accounting of \citet[Appendix~E.1]{gan2026neural}.
For a model with $P$ parameters and a trajectory containing $\bar L$ model-processed tokens,
a forward pass requires approximately $2P\bar L$ FLOPs, while a backward pass requires approximately $4P\bar L$ FLOPs.

Let $T$, $B$, and $G$ denote the number of training iterations, prompt batch size, and number of rollouts or perturbation directions per prompt, respectively. The method-specific training costs are

\begin{align}
\mathrm{FLOPs}_{\mathrm{ES}}
&=
T_{\mathrm{ES}}B_{\mathrm{ES}}G_{\mathrm{ES}}
\underbrace{(2)}_{\text{policy forward}}
P\bar L_{\mathrm{ES}}
\nonumber\\
&=
2T_{\mathrm{ES}}B_{\mathrm{ES}}G_{\mathrm{ES}}P\bar L_{\mathrm{ES}},
\label{eq:flops-es}
\\[3pt]
\mathrm{FLOPs}_{\mathrm{GRPO}}
&=
T_{\mathrm{GRPO}}B_{\mathrm{GRPO}}G_{\mathrm{GRPO}}
\underbrace{(2+2+4)}_{\substack{\text{policy forward}\\+\;\text{reference forward}\\+\;\text{policy backward}}}
P\bar L_{\mathrm{GRPO}}
\nonumber\\
&=
8T_{\mathrm{GRPO}}B_{\mathrm{GRPO}}G_{\mathrm{GRPO}}P\bar L_{\mathrm{GRPO}},
\label{eq:flops-grpo}
\\[3pt]
\mathrm{FLOPs}_{\mathrm{PPO}}
&=
T_{\mathrm{PPO}}B_{\mathrm{PPO}}G_{\mathrm{PPO}}
\underbrace{(2+2+2+4+4)}_{\substack{\text{policy forward}+\text{reference forward}\\
+\;\text{critic forward}+\text{policy backward}+\text{critic backward}}}
P\bar L_{\mathrm{PPO}}
\nonumber\\
&=
14T_{\mathrm{PPO}}B_{\mathrm{PPO}}G_{\mathrm{PPO}}P\bar L_{\mathrm{PPO}}.
\label{eq:flops-ppo}
\end{align}

Thus, for trajectories of equal length, Agentic ESOpt costs one policy forward pass per sampled trajectory, \textbf{compared with approximately four forward-pass equivalents for Agentic GRPO and seven for Agentic PPO}. Environment execution is excluded throughout, since the comparison measures model-side training FLOPs.

\paragraph{Sudoku.}
In the reported Sudoku comparison, Agentic ESOpt and GRPO both use 100 update rounds with a prompt batch size of 32, while Agentic PPO does 500 update rounds. Agentic ESOpt evaluates $G_{\mathrm{ES}}=32$ perturbation directions per prompt, whereas Agentic GRPO uses $G_{\mathrm{GRPO}}=8$ rollouts. Therefore, assuming equal trajectory lengths,
\[
\frac{\mathrm{FLOPs}_{\mathrm{ES}}}
     {\mathrm{FLOPs}_{\mathrm{GRPO}}}
\approx
\frac{2\times32}{8\times8}
=1.
\]
Hence, the fourfold larger ES population is exactly compensated by its fourfold lower model-side cost per trajectory. Agentic PPO is capped at 500 training steps to remain within a comparable compute budget. For the measured values in \Cref{tab:sudoku-efficiency}, we further replace the nominal trajectory length by the actual number of model-processed tokens, yielding 3.1/6.3/9.4 EFLOPs for Agentic ESOpt and 3.2/7.6/10.9 EFLOPs for Agentic GRPO at $H^*\in\{5,10,15\}$. The extra FLOPs of Agentic GRPO are because the algorithm's drawback in long-horizon credit assignment often results in longer trajectories.

\paragraph{Math Reasoning and DocVQA.}
For Math Reasoning, Agentic ESOpt ($G=16$) and Agentic GRPO ($G=8$) each cover the 400-example training set once. For DocVQA, both methods likewise cover 640 training examples. Under comparable trajectory lengths, their relative model-side FLOPs are therefore
\[
\frac{\mathrm{FLOPs}_{\mathrm{ES}}}
     {\mathrm{FLOPs}_{\mathrm{GRPO}}}
\approx
\frac{2\times16}{8\times8}
=
\frac{1}{2}.
\]
Agentic ESOpt therefore requires approximately half the model-side training FLOPs of the matched Agentic GRPO configuration on these two tasks.

\newpage
\section{Experimental Settings and Additional Results}
\label{app:settings}

This appendix provides the task-specific implementation details and additional diagnostics supporting the experiments in Sections~\ref{sec:sudoku-agentic}--\ref{sec:test-time-heuristic-design}. We follow the order of the main experiments: Sudoku, Math reasoning, DocVQA, WebArena-Lite, and automatic heuristic design, followed by a cross-setting summary of Agentic ESOpt hyperparameters.

\subsection{Sudoku}
\label{app:sudoku-setting}

\subsubsection{Empirical Diagnostics for Long-Horizon Scaling}
\label{app:sudoku-diagnostics}

To complement the long-horizon scaling analysis in Section~\ref{sec:sudoku-agentic} and Appendix~\ref{app:horizon-variance}, we examine three empirical diagnostics on Sudoku as the minimum successful horizon $H^*$ increases. We first visualize the local parameter-space reward landscape around the same Qwen3.5-4B checkpoint. We then relate per-turn correctness to trajectory-level success through a simple schematic calculation, and finally compare the realized horizons of Agentic GRPO and Agentic ESOpt during training.

For the landscape visualization, we select two orthogonal directions in the $d$-dimensional parameter space and evaluate a two-dimensional plane spanned by these directions around the current checkpoint. Each point in the heatmap corresponds to a perturbed model in that slice, and its value is the Gaussian-smoothed average terminal reward measured on the Sudoku batch. Since Sudoku provides only binary terminal feedback, this value can be interpreted as a smoothed local success score. The theoretical point is not that longer horizons leave the reward landscape unchanged, but that ES does not introduce an additional explicit accumulation over turns in its parameter-score factor.

\begin{figure}[H]
\centering
\subfigure[Minimum successful horizon $H^*=5$]{
  \includegraphics[width=0.32\linewidth]{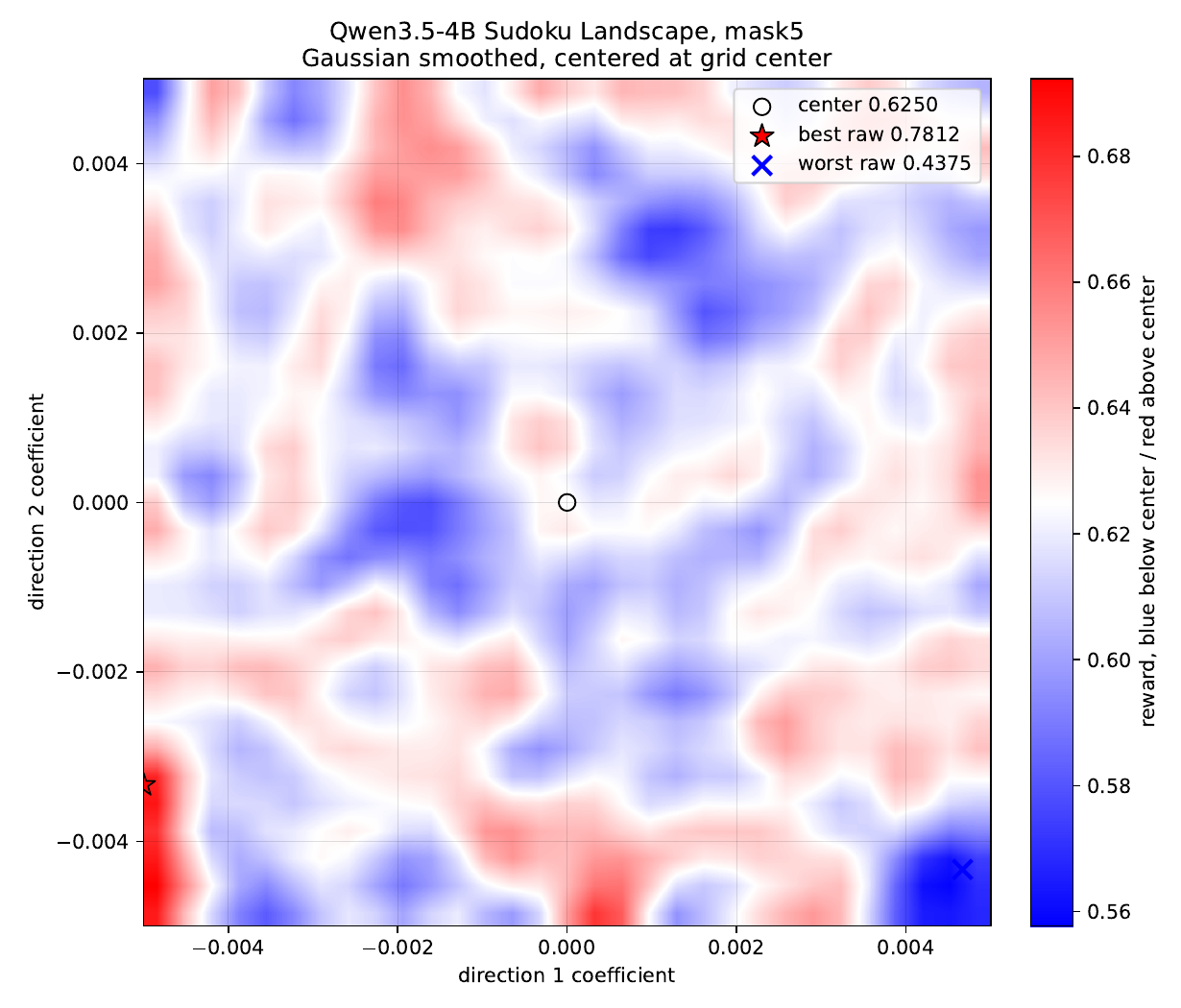}
}%
\subfigure[Minimum successful horizon $H^*=10$]{
  \includegraphics[width=0.32\linewidth]{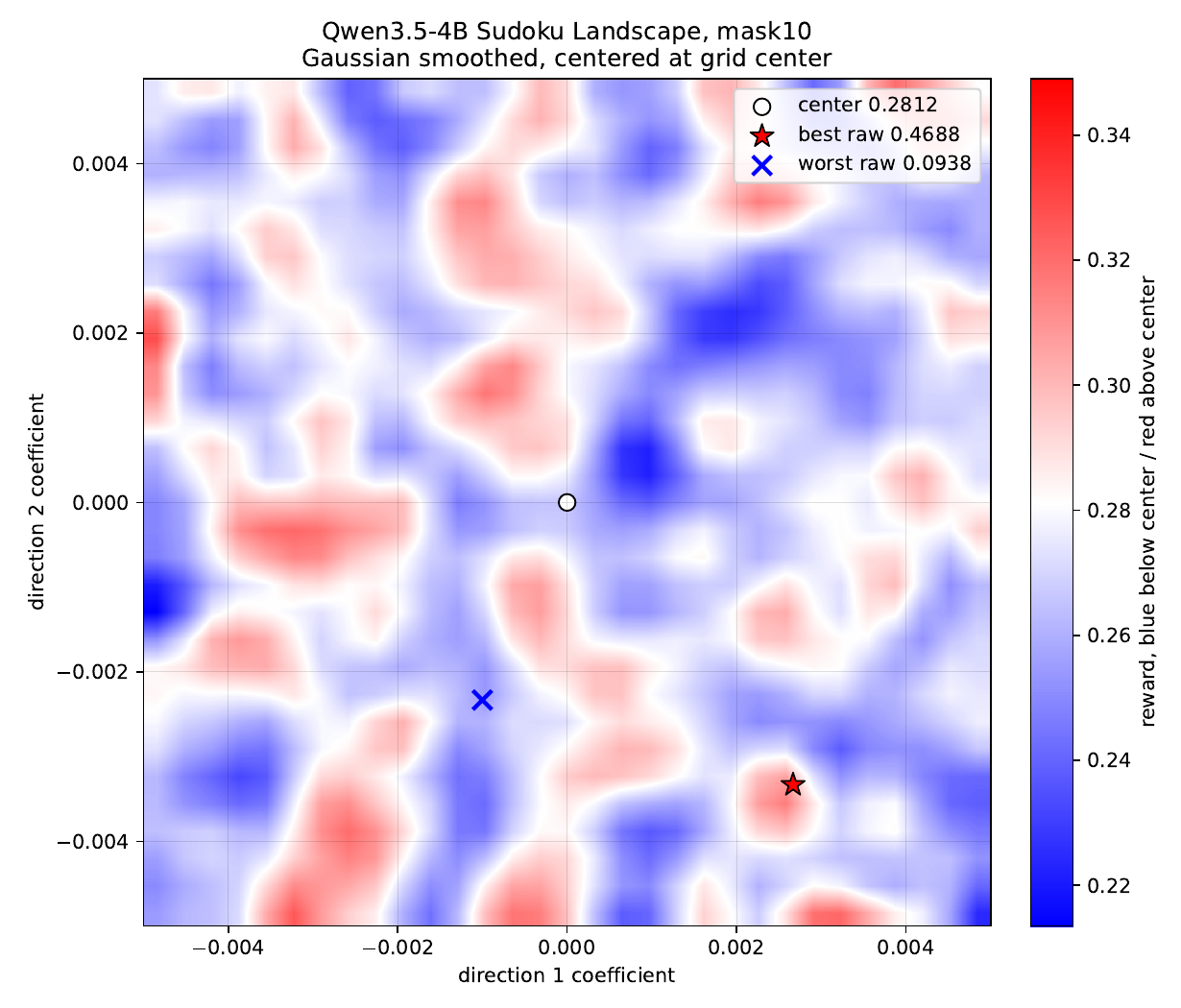}
}%
\subfigure[Minimum successful horizon $H^*=15$]{
  \includegraphics[width=0.32\linewidth]{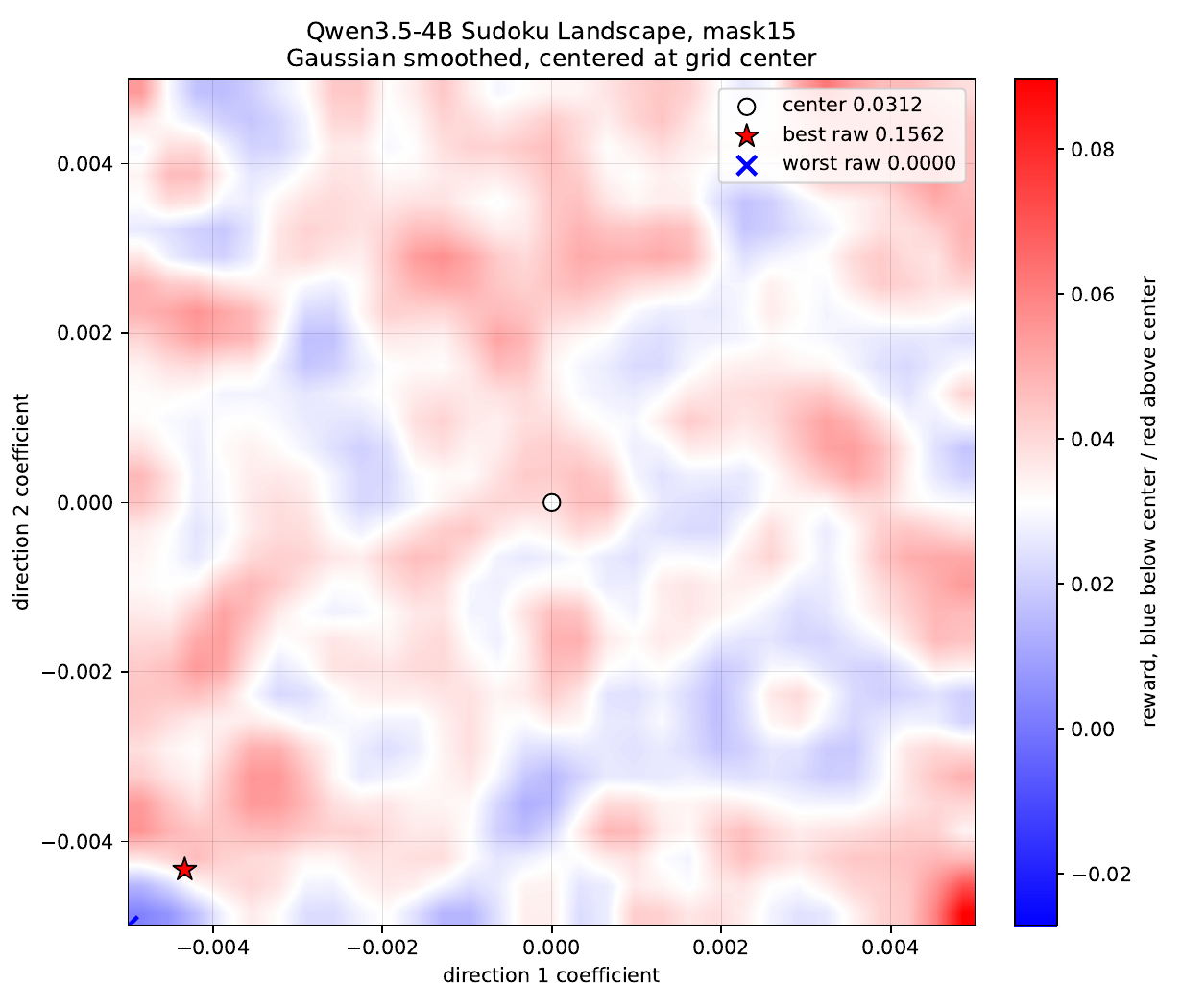}
}
\caption{
Two-dimensional Gaussian-smoothed reward landscapes around the Qwen3.5-4B checkpoint for Sudoku tasks with $H^*\in\{5,10,15\}$. Each panel marks the unperturbed checkpoint and the best and worst raw perturbations in the displayed slice. The heatmap value is the smoothed average terminal reward in the local neighborhood. As $H^*$ increases, the absolute reward level and contrast decrease, reflecting increasingly sparse trajectory feedback, while local parameter-space variation remains observable.
}
\label{fig:sudoku-landscapes}
\end{figure}

As shown in Figure~\ref{fig:sudoku-landscapes}, increasing $H^*$ substantially reduces the absolute reward level and reward contrast, making informative terminal feedback increasingly scarce. Nevertheless, useful variation remains visible in the local parameter neighborhood even at $H^*=15$. This is consistent with our theoretical characterization: longer horizons still make the return signal harder, but Agentic ESOpt does not incur an additional horizon-wise accumulation in its parameter-score factor.

Following the Sudoku step-accuracy definition of \citet{kim2026longhorizon}, a turn is correct when it fills a masked cell with its unique target value. To connect per-turn correctness with trajectory-level success, consider a minimum-length successful path with homogeneous per-turn correctness probability $p$. Its success probability is
\[
    S_{H^*} = p^{H^*}.
\]
This relation is schematic and is not fitted to the measured Sudoku trajectories; it simply illustrates how residual turn errors compound with the number of required actions.

\begin{figure}[H]
\centering
\subfigure[Schematic compounding from per-turn correctness to trajectory success.]{
  \includegraphics[width=0.45\linewidth]{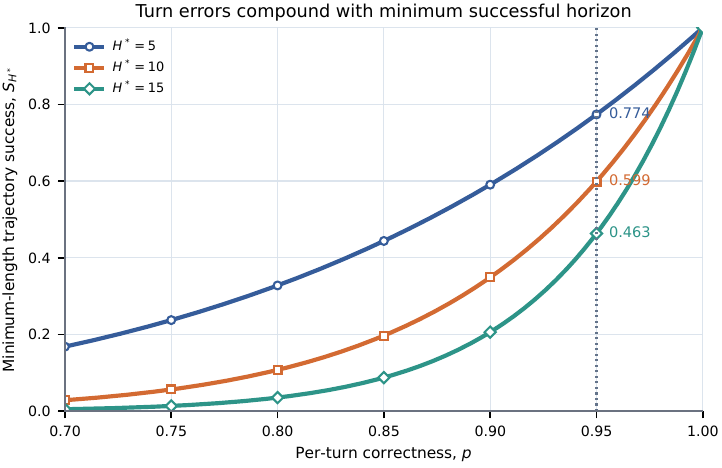}
}%
\hfill
\subfigure[Measured realized-horizon diagnostics on the $H^*=15$ setting.]{
  \includegraphics[width=0.48\linewidth]{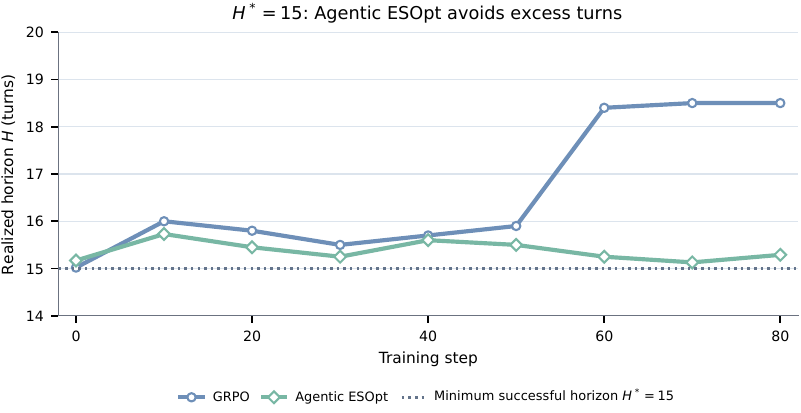}
}
\caption{
Turn-level diagnostics for long-horizon Sudoku. Left: under the schematic relation $S_{H^*}=p^{H^*}$, the same per-turn error rate produces a larger trajectory-level penalty as the minimum successful horizon grows. For example, $p=0.95$ gives $S_{H^*}=0.774$, $0.599$, and $0.463$ for $H^*=5$, $10$, and $15$, respectively. Right: realized-horizon diagnostics on the $H^*=15$ setting. Agentic GRPO training uses temperature $1$, top-$p=1$, and top-$k=-1$, while its evaluation uses temperature $0.7$, top-$p=0.8$, and top-$k=20$; Agentic ESOpt uses the latter decoder for both training and evaluation. Agentic GRPO reaches the 45-turn interaction budget by step 60, whereas Agentic ESOpt remains near the 15-turn minimum and ends at 15.41 turns.
}
\label{fig:sudoku-turn-diagnostics}
\end{figure}

Figure~\ref{fig:sudoku-turn-diagnostics} connects the theoretical argument to observable behavior. The left panel shows that even small residual turn errors become much more destructive as the required horizon grows. The right panel shows the corresponding training-time separation: Agentic GRPO progressively accumulates excess turns, whereas Agentic ESOpt stays close to the minimum-length solution path. These diagnostics are not a complete causal proof, but they are consistent with the estimator-level distinction in Section~\ref{sec:sudoku-agentic}: Agentic ESOpt evaluates one coherent parameter perturbation using the complete trajectory outcome, rather than propagating terminal feedback through an expanding sequence of action-level decisions.

\subsubsection{Training Configuration}

\paragraph{Agentic ESOpt and Vanilla ES.}
All fixed ES comparisons use Qwen3.5-4B for 100 generations. At each generation, $G=32$ full-parameter directions are evaluated on the same batch of 32 puzzles, and the resulting trajectory rewards are z-score normalized before an update with $\alpha=5\times10^{-4}$. The interaction budget is three times the mask count---15, 30, and 45 turns for $H^*=5$, $10$, and $15$---with at most 64 generated tokens per turn. Training and evaluation use temperature $0.7$, top-$p=0.8$, top-$k=20$, min-$p=0$, presence penalty $1.5$, and repetition penalty $1.0$. Evaluation is performed every 10 generations with three repeated runs.

\begin{table}[H]
\centering
\small
\setlength{\tabcolsep}{4.5pt}
\caption{Sudoku ES configurations. All profiles use $G=32$, 32 puzzles per generation, 100 generations, and $\alpha=5\times10^{-4}$.}
\label{tab:sudoku-es-hparams}
\begin{tabular}{lccc}
\toprule
Profile & Mask count & Sigma schedule & Maximum turns \\
\midrule
Vanilla ES & 5 / 10 & $1.0\times10^{-3}$ constant & 15 / 30 \\
Vanilla ES & 15 & $5.0\times10^{-4}$ constant & 45 \\
Agentic ESOpt & 5 / 10 & $1.0\times10^{-3}\!\rightarrow\!2.5\times10^{-4}$ cosine & 15 / 30 \\
Agentic ESOpt & 15 & $7.0\times10^{-4}\!\rightarrow\!5.0\times10^{-4}$ cosine & 45 \\
\bottomrule
\end{tabular}
\end{table}

\paragraph{Agentic RL.}
Both Agentic GRPO profiles run 100 rollout--update rounds with a global prompt batch of 32 and eight rollouts per prompt. They use rollout micro-batch 8, policy-training micro-batch 2, learning rate $10^{-6}$, KL coefficient $10^{-3}$, and clipping coefficient $0.2$. GRPO-A samples with temperature $0.7$, top-$p=0.8$, and top-$k=20$, while GRPO-B samples with temperature $1$, top-$p=1$, and top-$k=-1$. Both are evaluated with the Agentic ESOpt decoder before training and every 20 rounds, with three repeated evaluations. One round denotes a complete rollout batch followed by the corresponding policy update.

We implement Turn-PPO \citep{li2026turn} as the Agentic PPO baseline. Turn-PPO uses a separate Qwen3.5-4B critic initialized with a newly added value head for turn-level advantage estimation. We follow the same interaction and evaluation protocol as the other Agentic RL baselines (temperature $1$, top-$p=1$, and top-$k=-1$ in training and with temperature $0.7$, top-$p=0.8$, and top-$k=20$ for evaluation), and train at most 500 steps to align the FLOPs.

\subsubsection{Vanilla-ES Population-Sensitivity Configuration}
The 4B/9B comparison in \Cref{tab:population-scaling} uses the $H^*=15$ Sudoku environment for 100 generations and evaluates $G\in\{8,16\}$ every 10 generations. To isolate population sensitivity, all runs use Vanilla ES with a constant perturbation scale $\sigma=5\times10^{-4}$, update scale $\alpha=5\times10^{-4}$, full-parameter perturbations, population z-score normalization, and eight training puzzles per direction. Each generation therefore contains $8G$ direction--puzzle rollouts. Perturbation rollouts and evaluation use temperature $0.7$, top-$p=0.8$, and top-$k=20$, and each evaluation is repeated three times.

The environment provides a binary terminal reward: a rollout receives 1 only when the final board is a complete legal Sudoku solution preserving all given cells. Invalid formats, out-of-range values, attempts to modify given cells, and overwrites leave the board unchanged. This fixed setup makes population size the only varied ES hyperparameter in the ablation; the cosine perturbation schedule and intermediate reward shaping are not used.

\subsection{Math Reasoning}
\label{app:math-docvqa-implementation}

All method-specific configurations described below refer to training unless otherwise stated. Final evaluation is conducted under a common protocol shared by Agentic ESOpt, Agentic GRPO, Trace2Skill, their combined variants, and all other reported baselines. For Trace2Skill, we sample 16 runs for each instance and follow the instructions from \citet{ni2026trace2skill} in using at most one error trajectory per instance to distill skill.

\subsubsection{Agentic ESOpt Training Configuration}

Experiments of ReAct-style Math Reasoning are conducted on four A100 80GB GPUs. We first optimize the No Skill Qwen3.5-4B policy and record the seed of every parameter update together with the trajectories later supplied to Trace2Skill. For final evaluation, the base checkpoint is reloaded, and the update sequence is deterministically reconstructed. The No Skill and skill-conditioned evaluations therefore share an identical parameter stage and differ only in the external skill context.

\begin{table}[H]
\centering
\scriptsize
\setlength{\tabcolsep}{12pt}
\caption{Agentic ESOpt training configuration for Math reasoning.}
\label{tab:math-docvqa-es-hparams}
\begin{tabular}{lc}
\toprule
Parameter & Value \\
\midrule
Backbone & Qwen3.5-4B \\
ES generations & 25 \\
Population size $G$ & 16 \\
Cases per direction & 16 \\
Update scale $\alpha$ & $5\times10^{-4}$ \\
$\sigma$ schedule & $10^{-3}\!\rightarrow\!5\times10^{-4}$ cosine \\
Reward normalization & Population z-score \\
Maximum turns & 50 \\
Training generation limit & 4096 tokens / turn \\
\bottomrule
\end{tabular}
\end{table}

Each direction is evaluated on 16 problems, giving $16\times16=256$ direction--problem trajectories per generation. Training uses temperature $1$, top-$p=1$, top-$k=40$, min-$p=0$, presence penalty $2$, and repetition penalty $1$. The trajectory reward is exact match on the parsed final answer. Intermediate evaluation is performed every 10 generations with one sample per problem.

\subsubsection{Multi-Turn GRPO Training Configuration}

The Math Agentic GRPO baseline follows the same sampling configuration as Agentic ESOpt, with learning rate $10^{-6}$, eight rollouts per prompt, KL coefficient $10^{-3}$ using the low-variance KL form, temperature $1$, top-$p=1$, top-$k=40$, presence penalty $2$, and repetition penalty $1$.

\begin{table}[t]
\centering
\scriptsize
\setlength{\tabcolsep}{12pt}
\caption{Multi-turn Agentic GRPO training configuration for Math reasoning.}
\label{tab:math-docvqa-grpo-hparams}
\begin{tabular}{lc}
\toprule
Parameter & Value \\
\midrule
Backbone & Qwen3.5-4B \\
Training records & 400 \\
Prompt batch size & 20 \\
Rollouts per prompt & 8 \\
Training epochs & 1 \\
Training rounds & 20 \\
Total trajectories & 3{,}200 \\
Learning rate & $10^{-6}$ \\
KL coefficient & $10^{-3}$ \\
Maximum interaction & 100 user + 100 assistant turns \\
\bottomrule
\end{tabular}
\end{table}

For full-parameter Agentic GRPO training, we appropriately compress the 4096-token per-turn generation configuration to satisfy the training-memory constraint on four A100 80GB GPUs. This adjustment applies only to the training rollout configuration and does not affect the final evaluation protocol. The 400 training records form 20 prompt batches per epoch.

\subsubsection{Common Evaluation Configuration}
All reported methods and baselines use the same final evaluation protocol. For Math reasoning, we evaluate on 100 held-out DAPO problems and 30 AIME 2026 problems, with four samples per problem. Each trajectory is allowed at most 50 interaction turns and up to 4096 generated tokens per turn. We use temperature $1$, top-$p=1$, top-$k=40$, min-$p=0$, presence penalty $2$, and repetition penalty $1$ for all methods. These evaluation settings are shared by Agentic ESOpt, Agentic GRPO, Trace2Skill, their combined variants, and all other reported baselines, independent of their method-specific training configurations.

\subsection{DocVQA}
\label{app:docvqa-implementation}

As in Math reasoning, the configurations below specify the corresponding training procedures unless otherwise stated. For DocVQA, Agentic ESOpt and Agentic GRPO additionally use fully aligned interaction and generation settings during training, and all reported methods share the same final evaluation configuration. For Trace2Skill, we sample 16 runs for each instance and follow the instructions from \citet{ni2026trace2skill} in using at most one error trajectory and one correct trajectory per instance to distill skill.

\subsubsection{Agentic ESOpt Training Configuration}

Experiments of ReAct-style DocVQA are also conducted on four A100 80GB GPUs. The DocVQA experiment follows the same paired reconstruction protocol as Math: the No Skill run records update seeds and source trajectories, and final evaluation replays the identical parameter sequence before adding the post-hoc skill context.

\begin{table}[H]
\centering
\scriptsize
\setlength{\tabcolsep}{12pt}
\caption{Agentic ESOpt training configuration for DocVQA.}
\label{tab:docvqa-es-hparams}
\begin{tabular}{lc}
\toprule
Parameter & Value \\
\midrule
Backbone & Qwen3.5-4B \\
ES generations & 40 \\
Population size $G$ & 16 \\
Cases per direction & 16 \\
Update scale $\alpha$ & $5\times10^{-4}$ \\
$\sigma$ schedule & $10^{-3}\!\rightarrow\!5\times10^{-4}$ cosine \\
Reward normalization & Population z-score \\
Maximum turns & 50 \\
Generation limit & 512 tokens / turn \\
Total generation cap & 32{,}768 tokens / trajectory \\
\bottomrule
\end{tabular}
\end{table}

Each generation contains $16\times16=256$ direction--question trajectories. Training uses temperature $1$, top-$p=1$, top-$k=40$, min-$p=0$, presence penalty $2$, and repetition penalty $1$. A valid trajectory must contain at least one parsed tool action before the final answer. Continuous ANLS is used as the training reward; threshold accuracy counts an answer as correct when ANLS is strictly greater than $0.5$. Intermediate evaluation is performed every 10 generations with one sample per question.

\subsubsection{Multi-Turn GRPO Training Configuration}

DocVQA Agentic GRPO uses the same sampling configuration and interaction budget as Agentic ESOpt, together with the same optimizer and KL settings as the Math baseline.

\begin{table}[t]
\centering
\scriptsize
\setlength{\tabcolsep}{12pt}
\caption{Multi-turn Agentic GRPO training configuration for DocVQA.}
\label{tab:docvqa-grpo-hparams}
\begin{tabular}{lc}
\toprule
Parameter & Value \\
\midrule
Backbone & Qwen3.5-4B \\
Training records & 50 \\
Prompt batch size & 4 \\
Rollouts per prompt & 8 \\
Training epochs & 15 \\
Training rounds & 180 \\
Total trajectories & 5{,}760 \\
Learning rate & $10^{-6}$ \\
KL coefficient & $10^{-3}$ \\
Maximum turns & 50 \\
Generation limit & 512 tokens / assistant turn \\
Total generation cap & 32{,}768 tokens / trajectory \\
\bottomrule
\end{tabular}
\end{table}

Agentic GRPO and Agentic ESOpt therefore use the same 50-turn interaction limit, 512-token per-turn generation limit, total trajectory generation cap, and sampling configuration during training. The final two records do not fill a four-question batch and are omitted, leaving 12 training rounds per epoch. Continuous ANLS is used as the training reward as well.

\subsubsection{Common Evaluation Configuration}
All reported methods and baselines also follow the same final evaluation protocol on DocVQA. We evaluate on the same held-out set of 100 questions with four samples per question. Each trajectory is limited to 50 interaction turns, 512 generated tokens per turn, and 32{,}768 generated tokens in total. Sampling uses temperature $1$, top-$p=1$, top-$k=40$, min-$p=0$, presence penalty $2$, and repetition penalty $1$ for every method. Agentic ESOpt, Agentic GRPO, Trace2Skill, their combined variants, and all other reported baselines therefore use identical evaluation settings. In addition, the interaction and generation budgets of Agentic ESOpt and Agentic GRPO are fully aligned during DocVQA training.

\subsubsection{Repeated-Sampling Profiles for Math and DocVQA}
\label{app:math-docvqa-pass-at-k}

Figure~\ref{fig:math-docvqa-pass-at-k} extends the four-sample results in Table~\ref{tab:math-docvqa} to repeated-sampling budgets up to $k=32$, using the same evaluation decoder for each method.

\begin{figure}[H]
\centering
\includegraphics[width=0.9\linewidth]{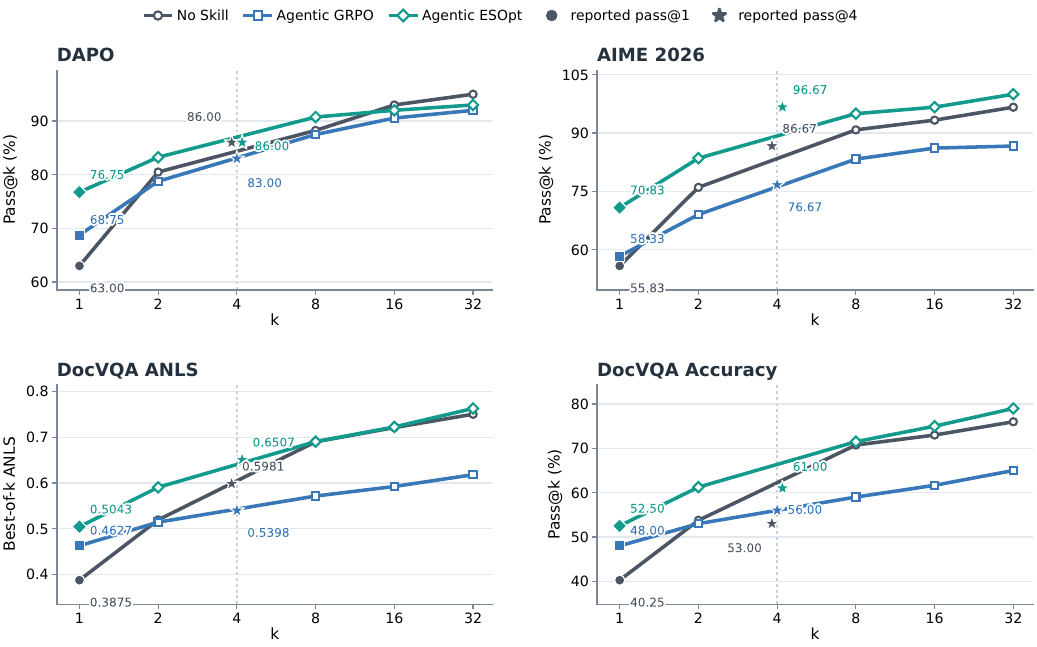}
\caption{Repeated-sampling profiles for DAPO, AIME 2026, DocVQA ANLS, and DocVQA accuracy. The $k=1$ markers use the reported Mean@4 values as estimates of single-sample performance, while stars show the originally reported Pass@4 or Max@4 values. Curves report the newly estimated best-of-$k$ results for $k\in\{2,8,16,32\}$; the newly computed $k=4$ curve points are omitted to keep the original four-sample measurements visually distinct.}\vspace{-10pt}
\label{fig:math-docvqa-pass-at-k}
\end{figure}

Across all four panels, Agentic ESOpt remains above the matched Agentic GRPO baseline as the sampling budget increases. On AIME 2026 and both DocVQA metrics, it also preserves an advantage over the base policy at larger $k$, indicating that the average-performance gains are not obtained by collapsing repeated-sampling coverage.

\subsubsection{DocVQA Training-Stage Turn Diagnostics}
\label{app:docvqa-turn-diagnostics}

\begin{figure}[H]
\centering
\includegraphics[width=0.65\linewidth]{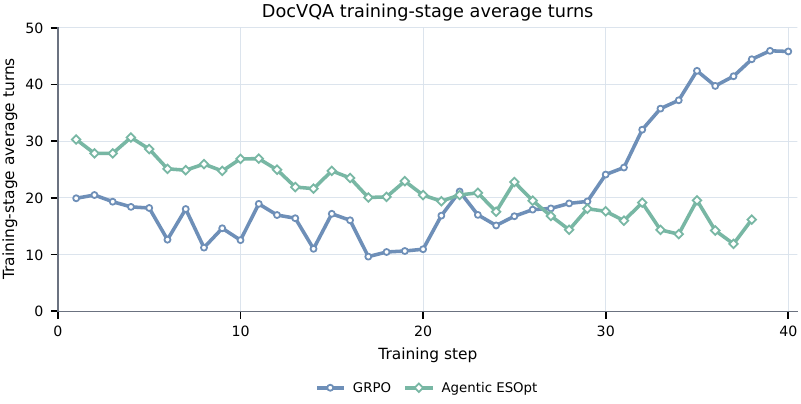}
\caption{Average realized trajectory length during DocVQA training. Agentic GRPO and Agentic ESOpt use the same rollout decoder; their training schedules are aligned to a common update axis and shown through step 40.}\vspace{-10pt}
\label{fig:docvqa-training-turns}
\end{figure}

The two methods begin with similar trajectory lengths, but their behavior diverges later in training: Agentic GRPO trajectories grow rapidly, while Agentic ESOpt remains shorter and more stable. This diagnostic mirrors the excess-turn pattern observed in Sudoku and provides an additional view of optimization under sparse trajectory-level feedback.

\subsection{WebArena-Lite}
\label{app:webarena-details}

\subsubsection{Goal-Conditioned Task Formulation}

WebArena-Lite evaluates goal-conditioned control in a partially observed browser environment. A task $g$ specifies a user goal and an initial latent web state. At time $t$, the agent receives observation $o_t$, samples an action
\[
    a_t
    \sim
    \pi_\theta(\cdot \mid \boldsymbol{o}_{\leq t}, \boldsymbol{a}_{<t}, g, c),
\]
and transitions to the next browser state. A trajectory $\boldsymbol{\tau}=(o_0,a_0,\ldots,o_T)$ is successful when it satisfies the goal predicate within the interaction budget. The benchmark objective is
\[
    J_{\mathrm{web}}(\theta;c)
    =
    \mathbb{E}_{g \sim \mathcal{G}}
    \mathbb{E}_{\boldsymbol{\tau} \sim \pi_\theta(\cdot \mid g,c)}
    \left[
        \mathbbm{1}_{\mathrm{succ}}(\boldsymbol{\tau},g)
    \right],
\]
estimated by the empirical success rate over the held-out task set. The reward is sparse: intermediate actions receive no direct task credit, and the final score depends on whether the complete trajectory reaches the target web state.

The external context $c$ represents non-parametric adaptation. In the No Skill condition, it contains the task instruction and interaction history; in the Trace2Skill condition, it additionally contains procedural knowledge distilled from previous trajectories. The No Skill pair in Table~\ref{tab:webarena} isolates the parameter update, while the skill-conditioned pair compares the complete adaptation pipelines.

For website category $\mathcal{G}_k$, the reported conditional success rate is
\[
    \widehat{J}_{\mathcal{G}_k}(\theta;c)
    =
    \frac{1}{|\mathcal{G}_k|}
    \sum_{g \in \mathcal{G}_k}
    \mathbbm{1}_{\mathrm{succ}}(\boldsymbol{\tau}_g,g).
\]
The dataset average applies the same estimator to the full set $\mathcal{G}$.

\subsubsection{Rollout Protocol and Data Split}

All frozen and Agentic ESOpt rows use the same browser protocol. The agent observes WebRL-style textual browser states, acts in the WebRL id-based action space, and is limited to 30 browser actions per task. Each response has a 2,048-token budget, the browser viewport is $1280\times720$, and the evaluator assigns binary success after termination. Sampling uses temperature $0.7$, top-$p=0.8$, top-$k=20$, min-$p=0$, presence penalty $1.5$, and repetition penalty $1$. Final results average three complete evaluations over all 165 held-out tasks.

The training split is constructed from the original 812 WebArena tasks after excluding the 165 tasks mapped to WebArena-Lite by the benchmark-provided correspondence. Site-stratified splitting of the remaining 647 tasks yields 582 training tasks and 65 validation tasks. We run each model with three seeds to distill the skill (base model, Agentic RL model, Agentic ESOpt model) and select the best one on the validation dataset as the evaluation skill. WebArena-Lite tasks never contribute parameter-update rewards or trajectory-to-skill inputs, and periodic evaluation is read-only. All six sites---Reddit, GitLab, Wikipedia, Map, Shopping, and Shopping Admin---remain enabled.

\subsubsection{Parameter and Skill Adaptation Stages}

\begin{table}[H]
\centering
\setlength{\tabcolsep}{4pt}
\caption{WebArena-Lite adaptation stages on Qwen3.5-27B. Both Agentic ESOpt conditions share the same No Skill parameter updates; the combined condition adds one post-hoc skill-distillation stage.}
\label{tab:webarena-hparams}
\resizebox{\linewidth}{!}{%
\begin{tabular}{lcccc}
\toprule
Condition & Parameter stage & Skill stage & Data per stage & Additional ES \\
\midrule
No Skill baseline & none & none & -- & none \\
Agentic ESOpt + No Skill & 70 ES generations & none & $G=8$ directions $\times$ 8 tasks/generation & -- \\
Trace2Skill baseline & none & 70 skill iterations & 8 tasks $\times$ 8 rollouts/iteration & none \\
Agentic ESOpt + Trace2Skill & same 70-generation ES run & one post-hoc distillation & all completed ES trajectories & none \\
\bottomrule
\end{tabular}
}
\end{table}

The Trace2Skill baseline starts from an empty skill and runs 70 skill-evolution iterations. Each iteration uses eight training tasks with eight sampled trajectories per task and selects at most one positive and one negative trajectory per task. Validation is performed every 10 iterations, and Table~\ref{tab:webarena} reports the final evaluated skill.

The No Skill parameter stage performs 70 full-parameter ES updates with $G=8$ on eight training tasks per generation. Each generation therefore contains $8\times8=64$ direction--task evaluations, with every perturbation scored on the same task batch. Rewards are z-score normalized, $\sigma_0=1.5\times10^{-3}$, $\sigma_t=1.5\times10^{-3}$, $\alpha=2.5\times10^{-4}$, and evaluation is performed every 10 generations.

After the parameter stage, trajectories from all completed ES generations are passed once to Trace2Skill. Final evaluation reconstructs the same No Skill update sequence and adds the distilled skill to the system prompt. This experiment implements sequential shared-trajectory composition; no additional skill-conditioned ES stage is performed.

\subsubsection{Full-Evaluation Curve}
\label{app:webarena-eval-curve}

\begin{figure}[H]
\centering
\includegraphics[width=0.5\linewidth]{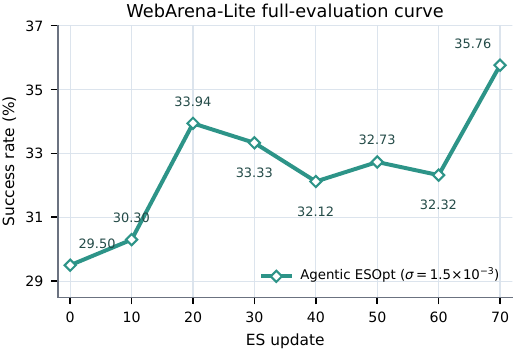}
\caption{Full WebArena-Lite evaluation success rate during the No Skill Qwen3.5-27B Agentic ESOpt run. Evaluation is performed every 10 ES updates; the curve starts from the 29.50\% base checkpoint and reaches 35.76\% after 70 updates.}
\label{fig:webarena-agentic-curve}
\end{figure}

The full-set curve is non-monotonic across intermediate checkpoints, but the final update reaches the best observed success rate. Table~\ref{tab:webarena} reports the three-run final evaluation used for the main comparison.

\subsection{Automatic Heuristic Design}
\label{app:ahd-setting}

\subsubsection{Experimental Configuration}
Experiments of AHD are done on a cluster of eight 3090 24GB GPUs. We evaluate Agentic ESOpt under two fixed outer search scaffolds: independent Sample and EoH. In each paired comparison, the proposal budget and outer search procedure are unchanged; Agentic ESOpt adds only the parameter-space update. All runs use LLaMA-3.1-8B-Instruct.

EoH maintains $N=10$ heuristics for 25 outer generations. Operators e1 and e2 each generate $N$ candidates, while m1 and m2 each generate $kN$ candidates, giving a total budget of $25N(2+2k)$. We use $k=1$ for $T=1{,}000$ and $k=3$ for $T=2{,}000$. The Sample scaffold evaluates independent i1 proposals in batches of 20.

\begin{table}[H]
\centering
\setlength{\tabcolsep}{4pt}
\caption{AHD proposal budgets and Agentic ESOpt update units. Matched baselines use the same proposal counts. In EoH, m1 and m2 each form one ES update batch per outer generation.}
\label{tab:ahd-hparams}
\resizebox{\linewidth}{!}{%
\begin{tabular}{lcccccc}
\toprule
Outer scaffold & $T$ & Outer generations & Candidates/generation & ES update batches & Directions/batch & $\sigma_0\!\rightarrow\!\sigma_T$ \\
\midrule
EoH ($k=1$) & 1{,}000 & 25 & $10(2+2k)=40$ & 50 & 10 & $10^{-3}\!\rightarrow0$ \\
EoH ($k=3$) & 2{,}000 & 25 & $10(2+2k)=80$ & 50 & 30 & $10^{-3}\!\rightarrow0$ \\
Independent Sample & 1{,}000 & 50 & 20 & 50 & 20 & $10^{-3}\!\rightarrow0$ \\
Independent Sample & 2{,}000 & 100 & 20 & 100 & 20 & $10^{-3}\!\rightarrow0$ \\
\bottomrule
\end{tabular}
}
\end{table}

Agentic ESOpt uses full-parameter, one-sided Gaussian perturbations, cosine decay $\sigma:10^{-3}\!\rightarrow0$, no warmup, $\alpha=5\times10^{-4}$, and population z-score normalization. In EoH, only mutation operators m1 and m2 trigger parameter updates, yielding two update batches per outer generation; e1 and e2 remain unchanged. In Sample, each batch of 20 proposals forms one update.

All objectives are converted internally to minimization costs. EoH uses the selected parent's cost minus the perturbed child's cost, while Sample uses the negative child score. Invalid or non-finite programs receive a finite within-batch penalty before normalization, and an all-invalid batch produces no update. Candidate generation uses temperature $1$, top-$p=0.98$, no top-$k$ cutoff, and at most 768 new tokens.

\subsubsection{Additional ACO-Style Results and Ablations}
\label{app:ahd-additional-results}

\paragraph{ACO-style results.}

\begin{table}[H]
\centering
\small
\caption{AHD ACO-style results at total evaluation budgets $T\in\{1000,2000\}$. Lower is better for all columns. The $\Delta$ rows report the ratio $\mathrm{EoH}/(\mathrm{Agentic\ ESOpt+EoH})-1$; positive ratios denote improvement. Improvements are shaded green with green deltas; regressions remain unshaded with neutral-gray deltas.}
\label{tab:ahd-aco}
\setlength{\tabcolsep}{6pt}
\resizebox{\linewidth}{!}{%
\begin{tabular}{lcccccc}
\toprule
Method & \makecell{TSP\\$N=50$} & \makecell{TSP\\$N=100$} & \makecell{CVRP\\$N=50,C=50$} & \makecell{CVRP\\$N=100,C=50$} & \makecell{BPP\\$N=500,C=150$} & \makecell{BPP\\$N=1{,}000,C=150$} \\
\midrule[0.35mm]
ACO & 5.992 & 8.948 & 11.355 & 18.778 & 208.828 & 417.938 \\
\midrule[0.35mm]
\multicolumn{7}{c}{\textit{Total Evaluations: $T=1000$}} \\
\midrule[0.35mm]
EoH & 6.030 & 8.859 & 9.328 & 15.666 & 203.162 & 405.172 \\
\improvedcell{Agentic ESOpt + EoH} &
\improvedcell{5.882} &
\improvedcell{8.391} &
\improvedcell{9.265} &
16.073 &
203.500 &
405.787 \\
{\scriptsize $\Delta$ vs EoH} &
\improvedcell{\deltagain{$\downarrow$2.53\%}} &
\improvedcell{\deltagain{$\downarrow$5.57\%}} &
\improvedcell{\deltagain{$\downarrow$0.68\%}} &
\deltaloss{$\uparrow$2.53\%} &
\deltaloss{$\uparrow$0.17\%} &
\deltaloss{$\uparrow$0.15\%} \\
\midrule[0.35mm]
\multicolumn{7}{c}{\textit{Total Evaluations: $T=2000$}} \\
\midrule[0.35mm]
EoH & 5.886 & 8.368 & 9.379 & 15.989 & 203.307 & 405.688 \\
\improvedcell{Agentic ESOpt + EoH} &
5.890 &
8.398 &
\improvedcell{9.179} &
\improvedcell{15.420} &
\improvedcell{203.052} &
\improvedcell{405.276} \\
{\scriptsize $\Delta$ vs EoH} &
\deltaloss{$\uparrow$0.08\%} &
\deltaloss{$\uparrow$0.36\%} &
\improvedcell{\deltagain{$\downarrow$2.18\%}} &
\improvedcell{\deltagain{$\downarrow$3.69\%}} &
\improvedcell{\deltagain{$\downarrow$0.13\%}} &
\improvedcell{\deltagain{$\downarrow$0.10\%}} \\
\bottomrule
\end{tabular}
}
\end{table}

At $T=1{,}000$, Agentic ESOpt + EoH improves three of the six ACO-style test sets: both TSP settings and CVRP-50. At $T=2{,}000$, it improves both CVRP and both BPP settings; the two TSP differences are small ($0.08\%$ and $0.36\%$). Together with the constructive Sample and EoH comparisons, Agentic ESOpt improves 28 of 36 matched method--budget settings across 12 test sets and six scenarios, with one tie and seven regressions.

\paragraph{Component and sampling-temperature ablations.}

\begin{table}[H]
\centering
\begin{minipage}[t]{0.57\linewidth}
\centering
\caption{Component ablation on constructive AHD at $T=1000$. TSP is minimized, and KP is maximized.}
\label{tab:ahd-component-ablation}
\setlength{\tabcolsep}{10pt}
    \small
\resizebox{\linewidth}{!}{%
\begin{tabular}{lcc}
\toprule
Method & \makecell{TSP\\$N=50$} & \makecell{KP\\$N=50,W=12.5$} \\
\midrule
EoH (baseline) & 6.545 & 19.996 \\
\improvedcell{Agentic ESOpt + EoH} & \improvedcell{6.463} & \improvedcell{20.001} \\
w/o ES (noise-only) & 6.484 & 19.996 \\
w/o cosine schedule & 6.480 & 19.997 \\
\bottomrule
\end{tabular}
}
\end{minipage}
\hfill
\begin{minipage}[t]{0.38\linewidth}
\centering
\caption{EoH sampling-temperature ablation on constructive TSP ($N=50$) at $T=1000$, with Agentic ESOpt included for comparison. Lower is better.}
\label{tab:eoh-temperature-ablation}
\setlength{\tabcolsep}{18pt}
\small
\begin{tabular}{lc}
\toprule
Setting & TSP \\
\midrule
EoH ($0.6$) & 6.51782 \\
EoH ($1.0$) & 6.545 \\
EoH ($1.5$) & 6.95927 \\
\improvedcell{Agentic ESOpt} & \improvedcell{6.463} \\
\bottomrule
\end{tabular}
\end{minipage}
\end{table}

The component ablation in \Cref{tab:ahd-component-ablation} shows that both the reward-weighted parameter update and the cosine perturbation schedule contribute to the final result: retaining noise without the ES update, or fixing the perturbation radius, weakens performance on both tasks. \Cref{tab:eoh-temperature-ablation} further shows that the gain is not reproduced by retuning the EoH sampling temperature. Temperature $0.6$ is the strongest EoH setting, yet Agentic ESOpt remains better on TSP.

\paragraph{Repeated-run significance.}
We repeat the constructive EoH comparison 20 times per method on TSP ($N=50$) and KP ($N=100,W=25$). Table~\ref{tab:ahd-significance} reports the mean and sample standard deviation across runs together with a one-sided, equal-variance $t$-test. KP values are converted back from the internally negated solver cost to the native maximization objective. The test is computed from the same 20 per-run observations summarized in the table.

\begin{table}[H]
\centering
\small
\setlength{\tabcolsep}{7pt}
\caption{Repeated-run analysis for constructive AHD. Each method is evaluated over 20 independent runs. Lower is better for TSP and higher is better for the native KP objective.}
\label{tab:ahd-significance}
\begin{tabular}{lccc}
\toprule
Task & EoH & Agentic ESOpt + EoH & $p$-value \\
\midrule
TSP ($N=50$, $\downarrow$) &
$6.5517 \pm 0.0729$ &
$6.5007 \pm 0.0868$ &
$0.0258$ \\
KP ($N=100,W=25$, $\uparrow$) &
$40.1562 \pm 0.0024$ &
$40.1578 \pm 0.0017$ &
$0.0100$ \\
\bottomrule
\end{tabular}
\end{table}

Both comparisons are significant at the $0.05$ level. The repeated-run analysis therefore supports the consistency of the AHD gains beyond a single search seed.

\paragraph{On-the-fly runtime overhead.}
We additionally measure end-to-end wall-clock time for the constructive Sample scaffold at $T=1000$.

\begin{table}[H]
\centering
\small
\setlength{\tabcolsep}{12pt}
\caption{End-to-end runtime for constructive Sample at $T=1000$. All values are in minutes.}
\label{tab:ahd-sample-runtime}
\begin{tabular}{lcc}
\toprule
Task & Sample & Agentic ESOpt + Sample \\
\midrule
Design constructive heuristics for TSP & 54.8 & 60.6 \\
Design constructive heuristics for KP  & 41.2 & 48.6 \\
Design constructive heuristics for ASP & 51.7 & 56.7 \\
\bottomrule
\end{tabular}
\end{table}

Across the three tasks, attaching Agentic ESOpt adds only 5.0--7.4 minutes, corresponding to a 9.7\%--18.0\% increase over the original Sample runtime. The parameter updates are therefore carried out nearly on the fly within the existing candidate-generation and evaluation loop, without a separate post-search training stage.

\subsection{Cross-Setting Hyperparameter Summary}
\label{app:hyperparameters}

Table~\ref{tab:es-hparam-summary} consolidates the Agentic ESOpt configurations used across all reported settings. The task-specific sections above provide the corresponding interaction limits, baseline settings, and evaluation protocols.

\begin{table}[H]
\centering
\scriptsize
\setlength{\tabcolsep}{2.5pt}
\caption{Cross-setting Agentic ESOpt configurations. ``Update batches'' counts parameter updates, and ``reward load/direction'' specifies how each perturbation is scored. All settings use full-parameter, one-sided Gaussian perturbations and population z-score normalization.}
\label{tab:es-hparam-summary}
\resizebox{\linewidth}{!}{%
\begin{tabular}{llccccc}
\toprule
Setting & Backbone & Update batches & Directions/update & Reward load/direction & $\sigma_0\!\rightarrow\!\sigma_T$ & $\alpha$ \\
\midrule
Sudoku, $H^*=5/10$ & Qwen3.5-4B & 100 & 32 & 32 puzzles & $10^{-3}\!\rightarrow2.5{\times}10^{-4}$ & $5{\times}10^{-4}$ \\
Sudoku, $H^*=15$ & Qwen3.5-4B & 100 & 32 & 32 puzzles & $7{\times}10^{-4}\!\rightarrow5{\times}10^{-4}$ & $5{\times}10^{-4}$ \\
Math & Qwen3.5-4B & 25 & 16 & 16 problems & $10^{-3}\!\rightarrow5{\times}10^{-4}$ & $5{\times}10^{-4}$ \\
DocVQA & Qwen3.5-4B & 40 & 16 & 16 examples & $10^{-3}\!\rightarrow5{\times}10^{-4}$ & $5{\times}10^{-4}$ \\
WebArena-Lite & Qwen3.5-27B & 70 & 8 & 8 web tasks & $1.5{\times}10^{-3}\!\rightarrow1.5{\times}10^{-3}$ constant & $2.5{\times}10^{-4}$ \\
AHD + EoH, $T=1{,}000$ & Llama-3.1-8B & 50 & 10 & one heuristic objective & $10^{-3}\!\rightarrow0$ & $5{\times}10^{-4}$ \\
AHD + EoH, $T=2{,}000$ & Llama-3.1-8B & 50 & 30 & one heuristic objective & $10^{-3}\!\rightarrow0$ & $5{\times}10^{-4}$ \\
AHD + Sample, $T=1{,}000/2{,}000$ & Llama-3.1-8B & 50 / 100 & 20 & one heuristic objective & $10^{-3}\!\rightarrow0$ & $5{\times}10^{-4}$ \\
\bottomrule
\end{tabular}%
}
\end{table}

For candidate $i$, the implementation applies $\theta+\sigma_t\boldsymbol{\epsilon}_i$, evaluates the complete trajectory or generated heuristic, reverts the perturbation, and reconstructs the same direction from its integer seed during the update. The implementation uses one-sided perturbations and no explicit $1/\sigma_t$ multiplier, so $\alpha$ is the effective update scale. Task-specific baseline configurations are summarized in \Cref{tab:sudoku-es-hparams,tab:math-docvqa-es-hparams,tab:math-docvqa-grpo-hparams,tab:docvqa-es-hparams,tab:docvqa-grpo-hparams,tab:webarena-hparams,tab:ahd-hparams}.

The perturbation schedule reflects the optimization objective of each setting. Train-time experiments retain a nonzero terminal radius to preserve neighborhood smoothing: Sudoku uses horizon-specific terminal radii, Math and DocVQA decay to $5\times10^{-4}$, and WebArena-Lite keeps $\sigma=1.5\times10^{-3}$. AHD decays $\sigma$ to zero because the final objective is the best task-specific heuristic found during test-time search.

Population size $G$ and the number of tasks used to score each direction are chosen jointly. Increasing either quantity can improve update stability or task coverage, while also increasing rollout cost. Across settings, binary-reward tasks use multiple tasks per direction, whereas AHD assigns one executable heuristic objective to each direction.

Population z-score normalization provides a common update interface across heterogeneous rewards. It converts binary success, continuous ANLS, and problem-dependent heuristic objectives into relative within-population scores. For AHD, invalid or non-finite programs are mapped to finite within-batch penalties before normalization, and an all-invalid batch produces no parameter update.

Hyperparameter sensitivity and adaptive schedules for $G$ and $\sigma$ are discussed in Appendix~\ref{app:limitation-futurework}.

\clearpage
\section{Prompts and Skills}
\label{app:environment-prompts}

\subsection{Sudoku}
\label{app:prompt-sudoku}

\subsubsection{Environment Prompt}

Sudoku is implemented as a multi-turn action environment. The agent does not submit a full board in one response. Instead, at each turn it observes the original puzzle, the current board, the current empty-cell list, and optional environment feedback from the previous action, then emits exactly one action in the format \texttt{set <row> <col> <value>}. Rows and columns are 1-indexed. The episode terminates when all masked cells are filled, or the action budget is exhausted, and the final reward is binary.

The following is a turn-0 prompt from the evaluation split with 15 masked cells. Empty cells are rendered as ``.''.

\begin{tcolorbox}[breakable, enhanced, colframe=promptgreen!85!black, colback=promptgreenbg, boxrule=0.65pt,
title={Prompt for Sudoku}, fonttitle=\bfseries\scriptsize,
attach boxed title to top left={xshift=5pt,yshift=-2pt},
boxed title style={colframe=promptgreen!85!black, colback=promptgreen!85!black, boxrule=0.4pt, arc=0.8mm,
left=3pt, right=3pt, top=1pt, bottom=1pt},
arc=1mm, top=8pt, bottom=4pt, left=5pt, right=5pt, boxsep=1pt]
\begin{lstlisting}[breaklines=true, breakatwhitespace=false, basicstyle=\ttfamily\fontsize{4.8pt}{5.4pt}\selectfont]
You are an agent solving Sudoku one action at a time. At each turn, fill exactly one empty cell. Rows and columns are 1-indexed. The board is split by | and horizontal lines into nine 3x3 boxes. Every row, every column, and every 3x3 box must contain digits 1 through 9 exactly once. Choose the cell only from the Current empty cells list. Keep all original givens fixed. Your entire response must be exactly one line containing one action and nothing else. Do not explain, reason aloud, wrap the answer in code fences, or output any extra words. Use exactly this format:
set <row> <col> <value>

Original puzzle, mask_count=15:
      c1 c2 c3 | c4 c5 c6 | c7 c8 c9
r1:   3 5 9 | 4 . 7 | . . 8
r2:   2 1 8 | . 9 5 | 4 . 6
r3:   4 7 6 | 2 8 1 | 3 5 9
      --------+----------+--------
r4:   6 2 1 | 8 5 3 | 9 4 7
r5:   . 3 5 | 9 . 4 | 6 2 1
r6:   . 4 7 | 6 1 2 | 8 3 5
      --------+----------+--------
r7:   7 6 . | 1 3 8 | 5 9 .
r8:   5 9 . | . 2 6 | 1 . 3
r9:   . 8 3 | 5 4 . | 7 6 2

Current board, turn=0, remaining_empty=15:
      c1 c2 c3 | c4 c5 c6 | c7 c8 c9
r1:   3 5 9 | 4 . 7 | . . 8
r2:   2 1 8 | . 9 5 | 4 . 6
r3:   4 7 6 | 2 8 1 | 3 5 9
      --------+----------+--------
r4:   6 2 1 | 8 5 3 | 9 4 7
r5:   . 3 5 | 9 . 4 | 6 2 1
r6:   . 4 7 | 6 1 2 | 8 3 5
      --------+----------+--------
r7:   7 6 . | 1 3 8 | 5 9 .
r8:   5 9 . | . 2 6 | 1 . 3
r9:   . 8 3 | 5 4 . | 7 6 2

Current empty cells: r1c5, r1c7, r1c8, r2c4, r2c8, r5c1, r5c5, r6c1, r7c3, r7c9, r8c3, r8c4, r8c8, r9c1, r9c6
\end{lstlisting}
\end{tcolorbox}

For example, a valid response to this prompt is a single line such as \texttt{set 1 5 6}. The environment then applies the action and constructs the next prompt by inserting \texttt{Last environment feedback: Filled r1c5 with 6.} before the original puzzle block, updating the current board, setting \texttt{turn=1}, decreasing \texttt{remaining\_empty} to 14, and removing \texttt{r1c5} from the current empty-cell list.

\subsection{Math Reasoning}
\label{app:prompt-math}

\subsubsection{ReAct Prompt}
Instead of a single direct answer call, the AIME setting is treated as an agentic ReAct environment. The model may call a command-line Python tool multiple times, receives each tool result as an observation, and stops when it emits the final boxed answer. The ES reward is the binary exact-match score of the final answer after the whole trajectory.

\begin{tcolorbox}[breakable, enhanced, colframe=promptteal!85!black, colback=prompttealbg, boxrule=0.65pt,
title={Prompt for AIME ReAct}, fonttitle=\bfseries\scriptsize,
attach boxed title to top left={xshift=5pt,yshift=-2pt},
boxed title style={colframe=promptteal!85!black, colback=promptteal!85!black, boxrule=0.4pt, arc=0.8mm,
left=3pt, right=3pt, top=1pt, bottom=1pt},
arc=1mm, top=8pt, bottom=4pt, left=5pt, right=5pt, boxsep=1pt]
\begin{lstlisting}[breaklines=true, breakatwhitespace=false, basicstyle=\ttfamily\scriptsize]
You are solving a math reasoning problem. You can think step-by-step and use command-line Python when useful.

Action format:
Action:
{
  "name": "bash",
  "arguments": {"command": "<shell command>"}
}
After each Action, the environment executes the command and returns an Observation. Repeat Thought/Action/Observation as needed. When finished, output the final result as \boxed{answer}.

Problem:
Patrick started walking at a constant rate along a straight road from school to the park. One hour after Patrick left, Tanya started running along the same road from school to the park. One hour after Tanya left, Jose started bicycling along the same road from school to the park. Tanya ran at a constant rate of 2 miles per hour faster than Patrick walked, Jose bicycled at a constant rate of 7 miles per hour faster than Tanya ran, and all three arrived at the park at the same time. The distance from the school to the park is m/n miles, where m and n are relatively prime positive integers. Find m+n.

Thought: Let Patrick's travel time be T and speed be p. I can solve the two equal-distance equations with Python.
Action:
{
  "name": "bash",
  "arguments": {"command": "python -c \"import sympy as sp; T,p=sp.symbols('T p'); sol=sp.solve([sp.Eq(p,2*T-2), sp.Eq(p,sp.Rational(9,2)*T-9)],[T,p]); T0,p0=sol[T],sol[p]; d=sp.factor(T0*p0); print(d, sp.numer(d)+sp.denom(d))\""}
}
Observation: 252/25 277

Thought: The distance is 252/25, so m+n=277.
Final answer: \boxed{277}
\end{lstlisting}
\end{tcolorbox}

\subsubsection{Trajectory-to-Skill Composition}
Math considers trajectories from all 25 ES generations, keeps at most one failed trajectory for each of the 400 training problems, and excludes successful traces from skill distillation. A single post-hoc Trace2Skill pass uses \llmname{GPT-5.4-nano}, and temperature $1$. We run each model with three seeds to distill the skill (base model, Agentic RL model, Agentic ESOpt model) and select the best result to report. Final evaluation reconstructs the same No Skill ES updates and injects the distilled skill only into the system context; it performs no further parameter optimization.

\subsection{DocVQA}
\label{app:prompt-docvqa}

\subsubsection{ReAct Prompt}
DocVQA is also an agentic ReAct setting. The task exposes the document image path, and the agent may use command-line or Python tools to inspect the image, crop regions, or run OCR before submitting a short answer. The evaluator scores the final submitted answer with ANLS.

\begin{tcolorbox}[breakable, enhanced, colframe=promptmint!85!black, colback=promptmintbg, boxrule=0.65pt,
title={Prompt for DocVQA ReAct}, fonttitle=\bfseries\scriptsize,
attach boxed title to top left={xshift=5pt,yshift=-2pt},
boxed title style={colframe=promptmint!85!black, colback=promptmint!85!black, boxrule=0.4pt, arc=0.8mm,
left=3pt, right=3pt, top=1pt, bottom=1pt},
arc=1mm, top=8pt, bottom=4pt, left=5pt, right=5pt, boxsep=1pt]
\begin{lstlisting}[breaklines=true, breakatwhitespace=false, basicstyle=\ttfamily\scriptsize]
Answer the document visual question. Inspect the referenced document image carefully. You can use command-line and Python tools over the image file.

[Optional skill instructions are inserted here.]

Action format:
Action:
{
  "name": "bash",
  "arguments": {"command": "<shell command>"}
}
After each Action, the environment returns an Observation. Repeat Thought/Action/Observation as needed. When finished, output only the short final answer.

Image path: data\trace2skill\docvqa\images\49153.png
Question: What is the 'actual' value per 1000, during the year 1975?

Thought: I need inspect the document chart and locate the 1975 value for the series labeled "actual".
Action:
{
  "name": "bash",
  "arguments": {"command": "python -c \"from PIL import Image; p=r'data\\trace2skill\\docvqa\\images\\49153.png'; img=Image.open(p); print(img.size)\""}
}
Observation: (1695, 2200)

Thought: I should zoom into the chart area around the 1975 tick and read the actual series label/value before answering.
Action:
{
  "name": "bash",
  "arguments": {"command": "python -c \"from PIL import Image; p=r'data\\trace2skill\\docvqa\\images\\49153.png'; img=Image.open(p); crop=img.crop((250,650,1450,1350)); crop.save('docvqa_49153_chart_crop.png'); print('saved docvqa_49153_chart_crop.png')\""}
}
Observation: saved docvqa_49153_chart_crop.png

Thought: The chart crop shows the 1975 actual value as 0.28 per 1000.
Final answer: 0.28
\end{lstlisting}
\end{tcolorbox}

\subsubsection{Trajectory-to-Skill Composition}
DocVQA selects the final 50 task occurrences and keeps at most one failed and one successful trace per occurrence; when one outcome type is unavailable, only the observed type is retained. A single post-hoc Trace2Skill pass uses \llmname{GPT-5.4-nano} and temperature $1$. We run each model with three seeds to distill the skill (base model, Agentic RL model, Agentic ESOpt model) and select the best result to report. Final evaluation reconstructs the same No Skill ES updates and injects the distilled skill only into the system context; it performs no further parameter optimization.

\subsection{WebArena-Lite}
\label{app:prompt-webarena}

\subsubsection{Environment Prompt}
WebArena-Lite uses the WebArena id-based accessibility-tree prompt. The full observation can be long, so the box shows the stable prompt structure and one real WebArena-Lite task objective.

\begin{tcolorbox}[breakable, enhanced, colframe=promptforest!85!black, colback=promptforestbg, boxrule=0.65pt,
title={Prompt for WebArena-Lite}, fonttitle=\bfseries\scriptsize,
attach boxed title to top left={xshift=5pt,yshift=-2pt},
boxed title style={colframe=promptforest!85!black, colback=promptforest!85!black, boxrule=0.4pt, arc=0.8mm,
left=3pt, right=3pt, top=1pt, bottom=1pt},
arc=1mm, top=8pt, bottom=4pt, left=5pt, right=5pt, boxsep=1pt]
\begin{lstlisting}[breaklines=true, breakatwhitespace=false, basicstyle=\ttfamily\scriptsize]
You are an autonomous intelligent agent tasked with navigating a web browser. You will be given web-based tasks. The information includes the user's objective, the current accessibility tree, the current page address, open tabs, and the previous action.

Available actions include:
click [id]
type [id] [content] [press_enter_after=0|1]
hover [id]
press [key_comb]
scroll [down|up]
goto [url]
go_back
go_forward
stop [answer]

OBSERVATION:
[accessibility-tree nodes with element IDs]
CURRENT PAGE: [administration interface]
OBJECTIVE: What is the top-1 best-selling product in 2022
PREVIOUS ACTION: None

The response should reason briefly and end with:
In summary, the next action I will perform is ```<action>```
\end{lstlisting}
\end{tcolorbox}

\subsubsection{Trace2Skill Policy Context}
\label{app:webarena-trace2skill-skill}
The following box presents the WebArena skill used by the Trace2Skill-conditioned agents.

\begin{tcolorbox}[breakable, enhanced, colframe=skillgreen!85!black, colback=skillgreenbg, boxrule=0.65pt,
title={Trace2Skill WebArena Skill}, fonttitle=\bfseries\scriptsize,
attach boxed title to top left={xshift=5pt,yshift=-2pt},
boxed title style={colframe=skillgreen!85!black, colback=skillgreen!85!black, boxrule=0.4pt, arc=0.8mm,
left=3pt, right=3pt, top=1pt, bottom=1pt},
arc=1mm, top=8pt, bottom=4pt, left=5pt, right=5pt, boxsep=1pt]
\begin{lstlisting}[breaklines=true, breakatwhitespace=false, basicstyle=\ttfamily\scriptsize, literate={’}{{'}}1]
# WebArena Skill

## WebArena workflow
- Use only visible page evidence and current WebRL ids; re-resolve elements after any navigation, refresh, rerender, or layout change.
- Start from the most direct visible path. Prefer search, filters, menus, or built-in navigation before long scrolling or guessing controls.
- Re-read the current visible page after every click, selection, text entry, scroll, or navigation step. If the page did not visibly change, do not repeat the same action; choose a different visible control or route.
- Before typing, verify the field label, current value, and input type. Clear or select all existing text when overwriting.
- For dropdowns, pickers, autocomplete fields, and multi-selects, inspect the current control state first, choose the visible option that matches the task, and confirm the selected value is shown afterward.
- Treat search results, snippets, counts, and summary links as discovery aids only. Do not conclude completion until the requested state is visibly shown on the relevant page or detail view.
- If search or filter results are empty, ambiguous, or stuck, adjust the query, broaden or narrow the filter, or switch to a different grounded view instead of repeating near-identical searches.
- Use visible filters, chips, row counts, and result scopes as the source of truth for the current view.

## Lists, pagination, and completeness
- For tasks that ask for all matching items, exhaust pagination, scrolling, and expansion until no more entries remain and the requested cardinality is confirmed.
- Use page controls and visible range indicators to move stepwise through the relevant span; do not rely on a single viewport or old row positions.
- Keep a running tally or deduplicated set while scanning across pages or scroll regions when completeness matters.
- For count, total, or range questions, verify every contributing visible row or bucket directly and compute from the source values.
- For most, least, highest, or lowest questions, confirm the global extreme across the full relevant result set or a trustworthy aggregate before selecting the target.

## Detail pages, reviews, and extraction
- Open the item detail page before answering any status, review, comment, or explanation question; confirm the title and key state from the header, badge, or metadata rather than the list row.
- For review or thread-based questions, use explicit review, comment, or reply text from the dedicated discussion view; do not infer from ratings, snippets, or summaries.
- Treat clipped snippets, ellipses, and partial text as incomplete evidence; expand or open the page that shows the full content.
- When asked to extract text, copy the source text verbatim and keep only the exact requested span.
- If a likely match appears, verify its visible details on the detail page before answering or editing.

## Forms, edits, and persistence
- Locate an explicit writable composer, dialog, or editor before typing; do not type into read-only previews or summaries.
- Fill every required field explicitly and use the page’s own save, apply, update, or submit control.
- After saving or submitting, verify the persisted record or changed state is visible on the destination page; do not rely on a generic toast alone.
- If validation, required-field, login, modal, or blank/stale-state interruptions appear, resolve the issue or back out to the last stable page and continue with a different path.
- When editing a prefilled field, clear or select all first so the new value replaces the old one cleanly.
- For validated inputs, enter a format the field accepts and confirm success through a cleared error, visible confirmation, or reread saved value.
- For create-then-configure tasks, finish and confirm the created resource before moving to follow-up settings such as members, permissions, or access controls.
- Prefer reusing an existing item when it satisfies the task; avoid creating duplicates unless creation is explicitly required.

## Reports, maps, and directions
- For report or analytics tasks, use the site’s dedicated report or analytics page first; set the requested filters before generating the report.
- Match the report subview or page type to the question, and read the rendered results directly rather than the filter form or summary widgets.
- For place or location tasks, use the site search or map view to locate the anchor place first, then ground the answer in visible map context or a detailed result panel.
- Use directions mode for travel-time or reachability tasks, keep it separate from general search, and fill both endpoints with concrete locations.
- Verify the route summary, travel time, or distance from the route output, not from surrounding map UI.
- If a destination search fails or resolves incorrectly, refine the query or switch to a broader grounded place label instead of repeating the same guess.

## Dead ends and login barriers
- If navigation lands on a login, cookie, account, or other blocking screen, treat it as a dead end: backtrack once to the last useful page and continue with a different route.
- Check for an existing signed-in session, account menu, or visible credential source before attempting login.
- Do not invent or brute-force credentials; use only credentials provided by the task or visible page evidence.
- If a login attempt fails, read the error message and switch strategy instead of repeating similar guesses.

## Completion rule
- Once the requested state, output, or persisted change is visibly confirmed, stop immediately and base the final answer only on page-visible evidence.
- Do not add extra verification after the completion condition is already satisfied.
\end{lstlisting}
\end{tcolorbox}

\subsubsection{Agentic ESOpt + Trace2Skill Policy Context}
\label{app:webarena-es-trace2skill-skill}
The following box presents the WebArena skill distilled from Agentic ESOpt trajectories and used for the combined result in \Cref{tab:webarena}. It is reported separately from the Trace2Skill-only skill above because the two conditions use different trajectory sources.

\begin{tcolorbox}[breakable, enhanced, colframe=promptteal!85!black, colback=prompttealbg, boxrule=0.65pt,
title={Agentic ESOpt + Trace2Skill WebArena Skill}, fonttitle=\bfseries\scriptsize,
attach boxed title to top left={xshift=5pt,yshift=-2pt},
boxed title style={colframe=promptteal!85!black, colback=promptteal!85!black, boxrule=0.4pt, arc=0.8mm,
left=3pt, right=3pt, top=1pt, bottom=1pt},
arc=1mm, top=8pt, bottom=4pt, left=5pt, right=5pt, boxsep=1pt]
\begin{lstlisting}[breaklines=true, breakatwhitespace=false, basicstyle=\ttfamily\scriptsize, literate={’}{{'}}1]
# WebArena Skill

## Core workflow
- Use the most specific visible page or view that can directly show the requested signal; prefer list, detail, report, or management pages over inference.
- Use only the current visible page and DOM evidence after each navigation or state change; never reuse stale element ids.
- Before acting, re-scan the visible controls and match the requested verb to the exact UI control. Do not substitute a nearby action for the one the task asks for.
- Clear stale or conflicting filters before applying a new search or constraint, and confirm the active filters and sort order before extracting anything.
- For search-first tasks, use the page’s actual search control or Enter, then confirm the page is truly showing search results before selecting anything.
- After each click, search, filter, or sort, verify that the page visibly changed. If the same action leaves the page unchanged or produces an error, stop repeating it and switch to a different control or path.
- For list-based tasks, do not stop at a preview row. Inspect enough pages or candidates to cover the full scope, and use pagination or page-jump controls when available.
- For plural or broad tasks, verify each requested item explicitly against the task criterion before exiting.
- Open the exact matching detail view before using a record’s date, time, status, identifier, or other exact field as the answer.
- For form, edit, or dialog workflows, type only into writable controls, fill all required fields, confirm dropdown selections visibly changed, and use the explicit save, submit, or confirm control.
- After any save, submit, or other critical click, look for explicit visible success evidence and inspect a stable destination or reopened view once to confirm the persisted state matches the target.
- Do not trust transient banners, toasts, or typed text without visible page evidence that the change persisted.
- For discussion, comment, reply, or notification tasks, use the visible composer/editor and the explicit submit control from the record-level path.
- If an unexpected but related page opens, use back navigation or breadcrumbs to return to the source list or item and continue from there.
- If a click, submit, or navigation action has no visible effect, do not repeat it blindly; recover a visible state, locate the correct control, and try a different path.
- If the page becomes blank, stale, or unexpected, stop using old element ids and recover a visible state first by going back, reloading, or navigating to a known page.
- For toggle-style tasks, click the actual state-changing control, not a count, auxiliary link, or profile-like detour.
- For saved-item or wishlist-style tasks, confirm the item appears in the persistent saved view, not just in a transient confirmation message.
- Track every requested target explicitly in multi-item tasks and verify each one in the final visible state before exiting.
- Stop immediately once the requested page state, saved change, or answer is visibly confirmed; do not add extra exploration or verification after completion.
\end{lstlisting}
\end{tcolorbox}

\subsection{Automatic Heuristic Design}
\label{app:prompt-ahd}

\subsubsection{TSP-Construct Prompt}
For AHD, the agent writes a heuristic function instead of a direct task solution. The following box shows the TSP constructive setting and representative EoH operator prompts. The concrete parent algorithms and code blocks are inserted at the placeholders for crossover or mutation operators.

\begin{tcolorbox}[breakable, enhanced, colframe=promptolive!85!black, colback=promptolivebg, boxrule=0.65pt,
title={Prompt for AHD TSP-Construct}, fonttitle=\bfseries\scriptsize,
attach boxed title to top left={xshift=5pt,yshift=-2pt},
boxed title style={colframe=promptolive!85!black, colback=promptolive!85!black, boxrule=0.4pt, arc=0.8mm,
left=3pt, right=3pt, top=1pt, bottom=1pt},
arc=1mm, top=8pt, bottom=4pt, left=5pt, right=5pt, boxsep=1pt]
\begin{lstlisting}[breaklines=true, breakatwhitespace=false, basicstyle=\ttfamily\scriptsize]
Task:
Given a set of nodes with their coordinates, find the shortest route that visits each node once and returns to the starting node. The task can be solved step-by-step by starting from the current node and iteratively choosing the next node. Help me design a novel algorithm that is different from algorithms in literature to select the next node in each step.

Function:
def select_next_node(current_node, destination_node, unvisited_nodes, distance_matrix):
    return next_node

Input/output information:
'current_node', 'destination_node', 'next_node', and 'unvisited_nodes' are node IDs. 'distance_matrix' is the distance matrix of nodes. All are Numpy arrays.

EoH operator prompts:
[i1] First, describe your new algorithm and main steps in one sentence. The description must be inside a brace. Next, implement it in Python as select_next_node(...). Do not give additional explanations.
[e1] Given several existing algorithms and their code, create a new algorithm that has a totally different form from the given ones.
[e2] Given several existing algorithms and their code, identify their common backbone idea, then create a new algorithm with a different form that is motivated by that backbone.
[m1] Given one existing algorithm and its code, create a modified version with a different form.
[m2] Given one existing algorithm and its code, identify the main algorithm parameters and create a new algorithm with different parameter settings of the score function.
\end{lstlisting}
\end{tcolorbox}

\section{Licensing and External Components}
\label{app:license}

\subsection{Licensing Scope}
\label{app:license-scope}

The code introduced by this study is released under the MIT License. External code, benchmark assets, datasets, model checkpoints, and API services remain governed by their respective upstream licenses or terms. The table below is intentionally limited to repositories whose code or task data are directly invoked, vendored, or copied by the released workflow. Repositories consulted only as references, and baseline repositories whose implementations were not executed, are excluded.

The experiments additionally use Llama and Qwen checkpoints, API-hosted models, DAPO/AIME Math data, and DocVQA data under the terms of their respective authors and providers. Nothing in this release alters or supersedes those terms.

\subsection{Upstream Repositories Used by the Released Workflow}
\label{app:upstream-sources}

\begin{table}[H]
\centering
\setlength{\tabcolsep}{1mm}
\renewcommand\arraystretch{1.2}
\caption{A summary of licenses and external resources used in this work.}
\resizebox{\textwidth}{!}{
\begin{tabular}{llll}
\bottomrule[0.8mm]
Resources & Type & License & URL \\
\midrule[0.3mm]

Qwen3.5-4B & Base LLM & Apache-2.0 License &
\url{https://huggingface.co/Qwen/Qwen3.5-4B} \\

Qwen3.5-9B & Base LLM & Apache-2.0 License &
\url{https://huggingface.co/Qwen/Qwen3.5-9B} \\

Qwen3.5-27B & Base LLM & Apache-2.0 License &
\url{https://huggingface.co/Qwen/Qwen3.5-27B} \\

LLaMA-3.1-8B-Instruct & Base LLM & \small{Llama 3.1 Community License} &
\scriptsize{\url{https://huggingface.co/meta-llama/Llama-3.1-8B-Instruct}} \\

\midrule[0.3mm]

verl & RL framework & Apache-2.0 License &
\url{https://github.com/volcengine/verl} \\

EoH & AHD framework & MIT License &
\url{https://github.com/FeiLiu36/EoH} \\

Trace2Skill & framework & Apache-2.0 License &
\url{https://github.com/Qwen-Applications/Trace2Skill} \\

VisualAgentBench & Web-agent runtime & Apache-2.0 License &
\url{https://github.com/THUDM/VisualAgentBench} \\

VisualWebArena & Web-agent environment & MIT License &
\url{https://github.com/web-arena-x/visualwebarena} \\

\midrule[0.3mm]

AHD Problems & Dataset / Benchmark & MIT License &
\url{https://github.com/zz1358m/MCTS-AHD-master} \\

DAPO-Math-17k & Dataset & Apache-2.0 License &
\scriptsize{\url{https://huggingface.co/datasets/BytedTsinghua-SIA/DAPO-Math-17k}} \\

AIME 2026 & Dataset & Apache-2.0 License &
\url{https://huggingface.co/datasets/math-ai/aime26} \\

DocVQA & Dataset & Challenge Terms &
\url{https://site.docvqa.org/datasets} \\

WebArena(-Lite) & Dataset / Benchmark & Research use only &
\url{https://github.com/liushiliushi/JitRL} \\

\toprule[0.8mm]
\end{tabular}}
\label{tab:license}
\end{table}

\end{document}